\documentclass{article}

\PassOptionsToPackage{numbers, compress}{natbib}
\usepackage[preprint]{neurips_2026}
\usepackage[textsize=tiny]{todonotes}
\usepackage[utf8]{inputenc} 
\usepackage[T1]{fontenc}    
\usepackage{hyperref}       
\usepackage{url}            
\usepackage{booktabs}       
\usepackage{amsfonts}       
\usepackage{nicefrac}       
\usepackage{microtype}      
\usepackage{xcolor}         
\usepackage{hyperref}
\usepackage{amsmath}
\usepackage{amssymb}
\usepackage{mathtools}
\usepackage{amsthm}
\usepackage{bm}
\usepackage{graphicx}
\usepackage{thmtools, thm-restate}
\usepackage{multirow}
\usepackage[normalem]{ulem}
\usepackage[capitalize,noabbrev]{cleveref}

\theoremstyle{plain}
\newtheorem{theorem}{Theorem}[section]
\newtheorem{proposition}[theorem]{Proposition}
\newtheorem{lemma}[theorem]{Lemma}

\theoremstyle{definition}
\newtheorem{definition}[theorem]{Definition}
\newtheorem{assumption}[theorem]{Assumption}
\theoremstyle{remark}
\newtheorem{remark}[theorem]{Remark}

\title{On the Superlinear Relationship between SGD Noise Covariance and Loss Landscape Curvature}

\author{%
  Yikuan Zhang\thanks{Equal contribution.} \\
  School of Physics\\
  Peking University\\
  Beijing, China \\
  \texttt{yk.zhang@stu.pku.edu.cn} \\
  \And
  Ning Yang\footnotemark[1] \\
  Peking University Chengdu Academy for Advanced Interdisciplinary Biotechnologies \\
  Chengdu, China \\
  \texttt{yn\_biophy@pku.edu.cn} \\
  \AND
  Yuhai Tu \\
  Flatiron Institute \\
  New York, USA \\
  \texttt{ytu@flatironinstitute.org} \\
}

\begin{document}

\maketitle

\begin{abstract}
Stochastic Gradient Descent (SGD) introduces anisotropic noise that is correlated with the local curvature of the loss landscape, thereby biasing optimization toward flat minima.
Prior work often assumes an equivalence between the Fisher Information Matrix and the Hessian for negative log-likelihood losses, leading to the claim that the SGD noise covariance $\mathbf{C}$ is proportional to the Hessian $\mathbf{H}$.
We show that this assumption holds only under restrictive conditions that are typically violated in deep neural networks.
Using the recently discovered Activity--Weight Duality, we find a more general relationship agnostic to the specific loss formulation, showing that $\mathbf{C} \propto \mathbb{E}_p[\mathbf{h}_p^2]$, where $\mathbf{h}_p$ denotes the per-sample Hessian with $\mathbf{H} = \mathbb{E}_p[\mathbf{h}_p]$.
As a consequence, $\mathbf{C}$ and $\mathbf{H}$ commute approximately rather than coincide exactly. 
 We further find that, within the analyzed fully connected layers, their diagonal elements follow per-layer empirical power laws $C_{ii} \propto H_{ii}^{\gamma}$, with layer-dependent fitted exponents bounded by $1 \leq \gamma \leq 2$.
Experiments across datasets, architectures, and loss functions support the resulting layerwise bounds, providing a unified characterization of the noise-curvature relationship in deep learning.
\end{abstract}

\section{Introduction}
\label{sec:intro}

The remarkable generalizability of Artificial Neural Networks (ANNs) remains one of the central mysteries in deep learning theory \cite{Yann2015}. The training process, typically driven by Stochastic Gradient Descent (SGD), introduces noise that is not merely an artifact but a crucial feature \cite{bottou91c}. It is widely believed that this stochasticity facilitates the escape from sharp minima toward flatter, more generalizable regions \cite{hardttrainfaster2016,keskarLargebatchTrainingDeep2017,wuUnderstandingGeneralizationDeep2017,fengInverseVarianceFlatness2021,yangStochasticGradientDescent2023}. Crucially, the effectiveness of this process relies on the noise's anisotropy, which stems from alignment between the noise covariance matrix $\mathbf{C}$ and the loss Hessian $\mathbf{H}$ \cite{sagunEigenvaluesHessianDeep2017,sagunEmpiricalAnalysisHessian2018,xingWalkSGD2018,nguyenFirstExitTime2019,chaudhariEntropySGDBiasingGradient2019}. Unlike isotropic noise, which is often ineffective in highly degenerate landscapes, this specific geometric coupling acts as the decisive mechanism for driving this geometry-aware exploration \cite{Neelakantan2015AddingGN,zhouTowar2019,zhuAnisotropicNoiseStochastic2019,xieArtificialNeuralVariability2021,haochenShapeMattersUnderstanding2021,yang2026transientlearningdynamicsdrive}. A broader review of relevant studies can be found in Appendix\ref{app:RelatedWorks}.

Previous studies~\cite{jastrzebskiThreeFactorsInfluencing2018,zhuAnisotropicNoiseStochastic2019,xieDiffusionTheoryDeep2021} have argued that $\mathbf{C}$ is proportional to $\mathbf{H}$ near convergence, based on the assumption that the Hessian coincides with the Fisher Information Matrix for cross-entropy (CE). This equivalence, however, requires strong conditions that are frequently violated in modern deep networks~\cite{li2019hessianbasedanalysissgd}. As a result, the Fisher approximation fails to capture the empirically observed anisotropic alignment: it can break down even for cross-entropy loss and is fundamentally inapplicable to mean squared error (MSE). These limitations highlight the need for a more robust framework to accurately characterize the $\mathbf{C}$–$\mathbf{H}$ relationship.

In this work, we introduce a new theoretical framework for analyzing the $\mathbf{C}$–$\mathbf{H}$ relationship, grounded in the recently discovered Activity–Weight Duality (AWD)~\cite{fengActivityWeightDuality2023}. AWD maps minibatch-induced activity fluctuations— which underlie $\mathbf{C}$—onto equivalent weight perturbations in the per-sample loss $\ell$ that determine $\mathbf{H}$. This duality enables us to establish the general $\mathbf{C}$–$\mathbf{H}$ relationship regardless of the specific forms of the loss function.
 
Our main contributions can be summarized as follows:

\begin{itemize}

\item We develop a new method to compute the SGD noise covariance $\mathbf{C}$ based on the Activity–Weight Duality (AWD), which is independent of the specific loss construction, showing that $\mathbf{C}$ is determined by the second moment of the per-sample Hessian,
$\mathbf{C} \propto \mathbb{E}_{p}\!\left[\mathbf{h}_p^2\right]$. Comparing with the definition of the Hessian definition, $\mathbf{H} = \mathbb{E}_{p} [\mathbf{h}_p]$, we claim a superlinear relationship between $\mathbf{C}$ and $\mathbf{H}$. 
\item We empirically show that, within each parameter space of  single fully connected layer, the diagonal entries of $\mathbf{C}$ follow a power-law fit with those of $\mathbf{H}$, i.e., $C_{ii} \propto H_{ii}^\gamma$. In these layerwise fits, MSE is approximately linear whereas CE is superlinear, challenging the classical Fisher approximation. Building on the AWD framework, we further prove that,  whenever such a per-layer empirical power-law fit exists, the fitted exponent is bounded by $1 \le \gamma \le 2$.



\item We uncover the geometric origins of the distinction in $\gamma$ between CE ($\gamma>1$) and MSE ($\gamma \approx 1$) via a ``suppression experiment".  While both losses exhibit per-sample Hessians dominated by a single leading eigenvalue, CE displays a strong positive correlation between the magnitude of this eigenvalue and its alignment with global Hessian directions, leading to $\gamma > 1$, whereas these quantities are effectively independent for MSE, yielding $\gamma \to 1$. This eigenvalue–eigendirection entanglement provides a geometric explanation for the observed superlinear scaling in CE.


\end{itemize}

\section{Background and Motivation}

\subsection{Limitation of the Fisher Approximation}

SGD updates model parameters using stochastic gradients estimated from randomly sampled mini-batches \cite{robbinsStochasticApproximationMethod1951,bottou91c}. Let $\mathcal{D} = \{(\bm{x}_i, y_i)\}_{i=1}^N$ denote the training set of size $N$. At iteration $t$, a mini-batch $\mathcal{B}_{\mu_t}$ of size $B$ is sampled uniformly from $\mathcal{D}$. The parameters $\bm{w}_t$ are updated with a learning rate $\eta$ as:
\begin{equation}
    \bm{w}_{t+1} = \bm{w}_{t} - \eta \nabla \mathcal{L}^{\mu_t}(\bm{w}_t),
\end{equation}
where $\mathcal{L}^{\mu_t}(\bm{w}_t)$ represents the mini-batch loss. Conceptually, the SGD noise arises from the deviation of the local mini-batch gradient from the full-batch gradient $\bm{g}(\bm{w}_t) \triangleq \nabla \mathcal{L}(\bm{w}_t)$, where $\mathcal{L}(\bm{w}_t)$ denotes the empirical risk over the entire dataset. 

Following previous work \cite{MandtStochatic2017,smithBayesianPerspectiveGeneralization2018,chaudhariStochasticGradientDescent2018,xieDiffusionTheoryDeep2021}, the SGD covariance is given by:
\begin{equation} \label{eq:cov}
    \mathbf{C}(\bm{w}) = \frac{1}{B} \bigg[ \frac{1}{N} \sum_{i=1}^N \nabla \ell_i(\bm{w}) \nabla \ell_i(\bm{w})^\top  - \bm{g}(\bm{w}) \bm{g}(\bm{w})^\top \bigg] 
    \approx \frac{1}{BN} \sum_{i=1}^N \nabla \ell_i (\bm{w}) \nabla \ell_i (\bm{w})^{\top},
\end{equation}
where $\ell_i(\bm{w}) \triangleq \ell(\bm{x}_i,y_i;\bm{w})$ denotes the per-sample loss. Eq.~\eqref{eq:cov} is valid near a generic minimum, where the magnitude of the average gradient is negligible, i.e., $\|\bm{g}(\bm{w})\| \approx 0$.

In the Fisher-based approach, a negative log-likelihood (cross-entropy) loss is used, i.e., $\ell(\bm{x}, y; \bm{w}) = -\log p_{\bm{w}}(y|\bm{x})$. From Eq.~\eqref{eq:cov}, the noise covariance is approximated by the empirical Fisher Information Matrix $\mathbf{C} \approx \frac{1}{B} \mathbf{F}$. Differentiating the overall cross-entropy loss function $\mathcal{L}(\bm{w})$, it is easy to show that the Hessian $\mathbf{H}(\bm{w}) = \nabla^2 \mathcal{L}(\bm{w})=\mathbf{F} - \mathbf{R}$ is  different from $\mathbf{F}$ by a residual term $\mathbf{R}$, which can be written as:
\begin{equation}\label{eq:vanishing_hessian}
    \mathbf{R} =  \int p(\bm{x}) \left[ \int p_{\text{data}}(y|\bm{x}) \frac{1}{p_{\bm{w}^*}(y|\bm{x})} \nabla^2 p_{\bm{w}^*}(y|\bm{x}) \, \mathrm{d}y \right] \mathrm{d}\bm{x}, 
\end{equation}
where $p(\bm{x})$ is the input distribution; $p_{\text{data}}(y|\bm{x})$ and $p_{\bm{w}^*}(y|\bm{x})$ are the probabilities of label $y$ from data and from model with parameters $\bm{w}^*$, respectively.

The key approximation made in the Fisher-based approach is the ``realizability" condition, namely $p_{\bm w^*}(y|\bm x)=p_{\mathrm{data}}(y|\bm x)$, under which it is easy to show that $R=\int p(\bm{x}) \nabla^2 \left[ \int p_{\bm{w}^*}(y|\bm{x}) \, \mathrm{d}y \right] \mathrm{d}\bm{x}=0$ yielding $\mathbf H \approx\mathbf F$, and thus the linear relationship between $\mathbf{C}$ and $\mathbf{H}$~\cite{jastrzebskiThreeFactorsInfluencing2018,zhuAnisotropicNoiseStochastic2019,xieDiffusionTheoryDeep2021}.


However, labels of training data are fixed, i.e., $p_{\text{data}}(y|\bm{x})=\delta(y-y^*(\bm{x}))$. The realizability condition would demand that $ p_{\bm w^*}(y^*(\bm x)\mid \bm x)=1$, where $y^*(\bm{x})$ is the ground-truth label. For a softmax classifier trained with cross-entropy, this requires zero gradient, i.e., $\nabla_{\bm w}\ell_i(\bm w^*)=\bm 0$ for all samples $i$, which can only be achieved in the limit $\|\bm w\|\to\infty$. This means that the Fisher approximation is invalid for any finite-weight solution obtained in practice.

Does the failure of the Fisher approximation imply that $\mathbf{C}$ and $\mathbf{H}$ are unrelated? The answer is no. In the rest of our paper, we will show a robust positive but nonlinear correlation between $\mathbf{C}$ and $\mathbf{H}$ that is agnostic to the specific loss function used. 

\vskip -0.2in
\subsection{Commutativity between $\mathbf{C}$ and $\mathbf{H}$ }
\label{sec:commute}

Beyond the failure of Fisher Approximation, the violation of a direct linear relation between $\mathbf{C}$ and $\mathbf{H}$ can be seen by dimensional analysis. The loss function $\mathcal{L}$ carries an intrinsic physical dimension (e.g., squared error units or bits). Consequently, the elements of the Hessian $\mathbf{H}$ possess dimensions of $[\mathcal{L}]/[\bm{w}]^2$, whereas the covariance matrix $\mathbf{C}$ scales as $[\mathcal{L}]^2/[\bm{w}]^2$.
Thus, the hypothesized proportionality $\mathbf{C} \propto \mathbf{H}$ is dimensionally inconsistent. Any strict linear relationship would require a proportionality constant with dimensions of $[\mathcal{L}]$, implying that the relationship depends on the arbitrary scale of the loss function.

{Given the dimensional mismatch that rules out strict linear dependence, we investigate a weaker condition: do $\mathbf{C}$ and $\mathbf{H}$ share the same eigensystem? This is equivalent to checking matrix commutativity. We analyze the structure of $\mathbf{C}$ in the eigenbasis of $\mathbf{H}$. As visualized in Figure~\ref{fig:CvsH}(a), $\mathbf{C}$ exhibits a characteristic ``arrowhead'' structure: the diagonal is dominant, but non-negligible off-diagonal elements persist in the top rows and columns due to the head-heavy nature of the spectrum.}

This pattern suggests approximate, but not exact, simultaneous diagonalization of $\mathbf C$ and $\mathbf H$. To quantify this property, we first use the standard commutator error:
\begin{equation}
    \epsilon_F=\frac{\|[\mathbf C,\mathbf H]\|_F}{\|\mathbf C\|_F\|\mathbf H\|_F},
\end{equation}
where the subscript $F$ denotes the Frobenius norm.
However, for such an ``arrowhead'' structure, it must be interpreted with care because the commutator weights the observed off-diagonal covariance by Hessian eigengaps. If $\mathbf H=V_H\Lambda_HV_H^\top$ and $\widetilde{\mathbf C}=V_H^\top\mathbf C V_H$, then
$[\widetilde{\mathbf C},\Lambda_H]_{ij}=(\lambda_j-\lambda_i)\widetilde C_{ij}.$
Thus the head-row and head-column leakage in the arrowhead can contribute disproportionately and may make the error commutator even exceed a baseline (random rotated $\mathbf{C}$ while preserving the spectrum) as shown in \Cref{fig:metrics_cifar_cnn_cse,fig:metrics_mnist_fc_mse,fig:metrics_mnist_fc_cse}~(b). We therefore also inspect $\widetilde{\mathbf C}$ directly through the entrywise ratio
$\rho_{\rm off/diag}={\frac{1}{d(d-1)}\sum_{i\ne j}|\widetilde C_{ij}|}/{\frac{1}{d}\sum_i|\widetilde C_{ii}|}.$
This compares a typical off-diagonal entry with the typical diagonal scale, without Hessian eigengap weighting. \Cref{fig:metrics_cifar_cnn_cse,fig:metrics_mnist_fc_mse,fig:metrics_mnist_fc_cse}~(c) shows that $\rho_{\rm off/diag}$ is significantly smaller than the random baseline (random rotated $\mathbf{C}$ while preserving the spectrum). These results are consistent with our analytical results in Theorem~\ref{thm:spectral_noise} that:
$|\widetilde C_{ij}|=o(\widetilde C_{ii}+\widetilde C_{jj})$ for $i\ne j$.

\begin{figure*}[ht!]
\centering
{\includegraphics[width=1\columnwidth]{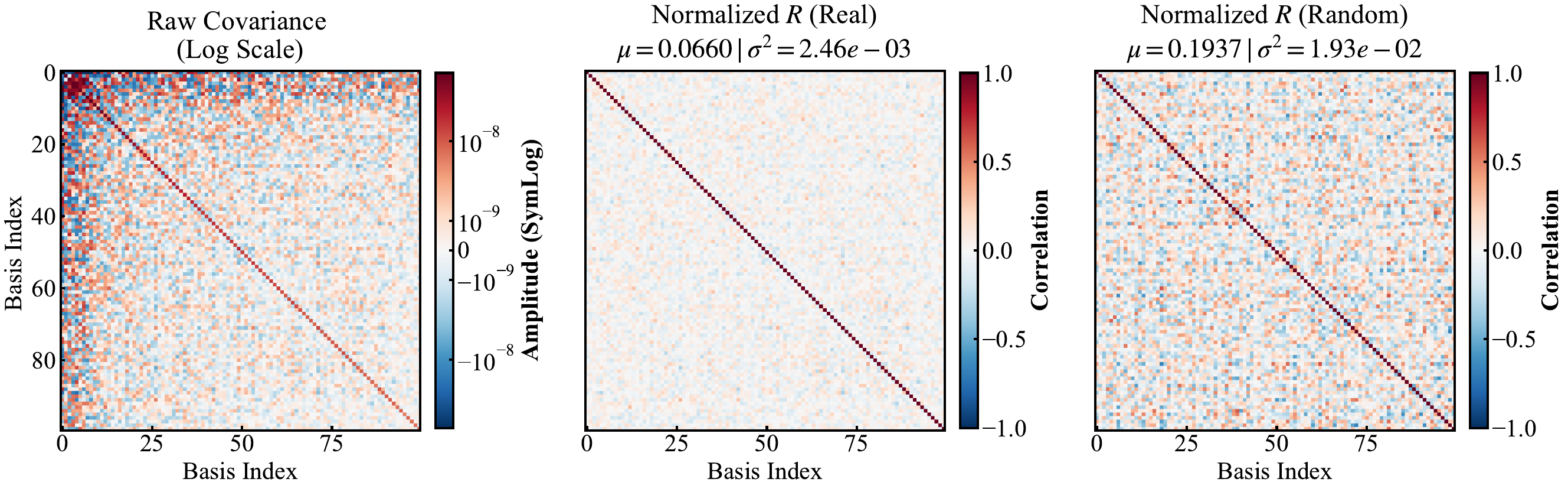}}
\caption{Noise–curvature alignment for a CNN trained on CIFAR-10 with cross-entropy loss (top 100 eigen-directions).
\textbf{(a)} The empirical covariance matrix $\mathbf{C}$ represented in the Hessian eigenbasis.
\textbf{(b)} The scale-invariant correlation matrix $\mathbf{R}_{\text{real}}$, normalized by diagonal elements.
\textbf{(c)} The randomized Baseline $\mathbf{R}_{\text{rand}}$, constructed by randomly rotating $\mathbf{C}$ while preserving its eigenvalue spectrum. See Appendix \Cref{fig:cmt_cifar_cnn_cse,fig:cmt_mnist_fc_cse,fig:cmt_mnist_fc_mse} for more details and results on additional architectures.}
\label{fig:CvsH}
\end{figure*}

To better separate geometric alignment from the magnitude effects, we use a new scale-invariant correlation matrix $\mathbf{R}$, defined as:
$R_{ij}={\widetilde C_{ij}}/{\sqrt{|\widetilde C_{ii}\widetilde C_{jj}|}}$, where $\mu= \frac{1}{d(d-1)}\sum_{i\ne j} \left[|R_{ij}|\right]$.
Here, the diagonal elements $R_{ii}$ are unity, while off-diagonal terms $R_{ij}$ quantify the relative coupling strength between eigen-directions $i$ and $j$.
{For a rigorously benchmark, we construct arandomized baseline} $\mathbf{R}_{\text{rand}}$, generated by rotating $\mathbf{C}$ with a random orthogonal matrix $\mathbf{Q}$ before projecting it onto the Hessian basis ($\mathbf{C}_{\text{rand}} = \mathbf{Q} \mathbf{C} \mathbf{Q}^\top$). This procedure preserves the spectrum of $\mathbf{C}$ while destroying any specific geometric alignment with $\mathbf{H}$.

The results reveal a strong structural alignment between the noise covariance and the Hessian. Figure~\ref{fig:CvsH}(b,c) shows that $\mathbf R_{\rm real}$ is far more diagonal than $\mathbf R_{\rm rand}$: $\mu_{\rm rand}\approx0.8/\sqrt{M}\approx0.2$ (for $M \approx 20$ dominant eigenvalues indicated by the effective ``arrowhead'' bandwidth (Figure~\ref{fig:CvsH} (a)), more details in Appendix~\ref{app:random_baseline}), whereas $\mu_{\rm real}\approx0.066$, which provides statistical evidence that the noise covariance matrix $\mathbf{C}$ is highly aligned with the Hessian.

\section{C-H Relation via Activity-Weight Duality}
\label{sec:awd}

To overcome the inherent limitations of the Fisher approximation, we utilize the Activity-Weight Duality (AWD) framework proposed by \cite{fengActivityWeightDuality2023}. AWD establishes an exact correspondence between changes in the activity space due to variations in data and equivalent perturbations in the weight space, independent of the specific loss formulation, bypassing negative-log-likelihood and realizability restrictions while treating CE and MSE uniformly.

\subsection{Matched Sample Pairs and Activity Perturbations}

For a single Fully Connected Layer (FCL) parameterized by weights $\mathbf{W} \in \mathbb{R}^{d_\text{out} \times d_\text{in}}$. Let $\bm{a} \in \mathbb{R}^{d_\text{in}}$ be the input activity.
To isolate the SGD noise arising from intra-class variability, we instead formulate the matched-pair construction as a class-wise discrete optimal-transport problem between the two sequential mini-batches. For each class $c$, let $\mathcal I_\mu^c=\{p\in\mathcal B_\mu:y_p=c\}$ and $\mathcal I_\nu^c=\{q\in\mathcal B_\nu:y_q=c\}$. Under the balanced-batch setting, these sets have equal cardinality, and the matching is the minimum-cost bijection
\begin{equation}
\pi_c^*
\in \mathop{\arg\min}_{\pi\in\Pi_c}
\sum_{p\in\mathcal I_\mu^c}
\left\|\bm a_{\pi(p)}-\bm a_p\right\|_2^2,
\Pi_c
=
\left\{
\pi:\mathcal I_\mu^c\to\mathcal I_\nu^c
\;\bigm|\;
\pi \text{ is a bijection}
\right\}.
\label{eq:classwise_ot_matching}
\end{equation}
This is the uniform-mass optimal-transport, equivalently linear-assignment, problem restricted within each class. We then set $p'=\pi_{y_p}^*(p)$ and define $\Delta \bm a_p=\bm a_{p'}-\bm a_p$. When the mini-batch size $B$ is large, the same-class candidate pool becomes dense enough that the assignment can select pairs with small activity displacement; hence $\Delta \bm a_p$ can be regarded as a small activity perturbation in the subsequent AWD expansion.

\subsection{Minimal Activity-Weight Duality in FCL}

\begin{definition}[The Minimal Activity-Weight Duality \cite{fengActivityWeightDuality2023}]
\label{def:awd}
Given an input activity perturbation $\Delta \bm{a}$ derived from a matched sample pair, the {Minimal Activity-Weight Duality} defines the unique weight perturbation $\Delta \mathbf{W}^*$ that satisfies the condition of invariant pre-activation $\mathbf{W}(\bm{a} + \Delta \bm{a}) = (\mathbf{W} + \Delta \mathbf{W})\bm{a}$, while minimizing the Frobenius norm of the weight change:
\begin{equation}\label{AWD-minimal}
    \Delta \mathbf{W}^* = \mathop{\arg\min}_{\Delta \mathbf{W}} \|\Delta \mathbf{W}\|_F^2 \quad 
    \text{s.t.} \quad \Delta \mathbf{W} \bm{a} = \mathbf{W} \Delta \bm{a}.
\end{equation}
\end{definition}
\vskip -0.1in
As a consequence, the per-sample loss remains invariant, i.e., $\ell(\bm{x}_{p'};\mathbf{W}) = \ell(\bm{x}_{p};\mathbf{W} + \Delta \mathbf{W})$. The constraint implies that the shift in the weight space must reproduce the exact shift in the pre-activation space caused by the activity noise. 

\begin{lemma}[Explicit Solution for AWD \cite{fengActivityWeightDuality2023}]
\label{lem:awd_solution}
The closed-form solution to the optimization problem in Definition \ref{def:awd} is a rank-1 outer product given by:
$\Delta \mathbf{W}^* = {(\mathbf{W} \Delta \bm{a}) \bm{a}^\top}/{\|\bm{a}\|^2}.$
\end{lemma}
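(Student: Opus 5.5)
The problem in Definition~\ref{def:awd} is a minimum-Frobenius-norm solution of the linear system $\Delta\mathbf W\bm a=\bm b$ with $\bm b\triangleq\mathbf W\Delta\bm a\in\mathbb R^{d_{\rm out}}$. I will treat it row by row and reduce it to a textbook minimum-norm problem. I will assume $\bm a\neq\bm 0$ throughout; otherwise the constraint reads $\bm 0=\mathbf W\Delta\bm a$ and the formula is undefined.

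\textbf{Decoupling into rows.} Write the rows of $\Delta\mathbf W$ as $\bm r_k^\top$ for $k=1,\dots,d_{\rm out}$. Then
\[
\|\Delta\mathbf W\|_F^2=\sum_k\|\bm r_k\|^2 ,
\]
and the constraint becomes the $d_{\rm out}$ separate scalar equations $\bm r_k^\top\bm a=b_k$. Since both the objective and the constraints split over $k$, the problem reduces to independent problems: minimize $\|\bm r_k\|^2$ subject to $\bm r_k^\top\bm a=b_k$.

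\textbf{Solving each row.} For a single row, decompose $\bm r_k=\alpha_k\bm a+\bm r_k^\perp$ with $\bm r_k^\perp\perp\bm a$. The constraint forces $\alpha_k\|\bm a\|^2=b_k$, so $\alpha_k=b_k/\|\bm a\|^2$. Pythagoras then gives
\[
\|\bm r_k\|^2=\alpha_k^2\|\bm a\|^2+\|\bm r_k^\perp\|^2 ,
\]
which is minimized exactly when $\bm r_k^\perp=\bm 0$. The minimizer is therefore $\bm r_k=b_k\bm a/\|\bm a\|^2$. This argument also proves uniqueness, because any nonzero orthogonal component strictly increases the norm. Equivalently, one could use a Lagrangian $\|\Delta\mathbf W\|_F^2-2\bm\lambda^\top(\Delta\mathbf W\bm a-\bm b)$: stationarity gives $\Delta\mathbf W=\bm\lambda\bm a^\top$, and the constraint fixes $\bm\lambda=\bm b/\|\bm a\|^2$. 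Strict convexity of the objective over the affine feasible set then gives uniqueness.

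\textbf{Assembly and checks.} Stacking the rows gives
\[
\Delta\mathbf W^*=\bm b\,\bm a^\top/\|\bm a\|^2=(\mathbf W\Delta\bm a)\bm a^\top/\|\bm a\|^2 ,
\]
which is visibly rank one (or zero when $\mathbf W\Delta\bm a=\bm 0$). Substituting back confirms $\Delta\mathbf W^*\bm a=\mathbf W\Delta\bm a$, so the pre-activation, and hence the per-sample loss, is invariant as stated after Definition~\ref{def:awd}. There is no real obstacle here. The only points needing care are the degenerate case $\bm a=\bm 0$ and the uniqueness claim, and the orthogonal decomposition handles the latter.
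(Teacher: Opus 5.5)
Your proof is correct and follows essentially the same route as the paper's: separate the Frobenius norm and the constraint row by row, solve each scalar-constrained minimum-norm problem to get $\Delta\bm w_j = (\bm w_j^\top\Delta\bm a)\,\bm a/\|\bm a\|^2$, and stack the rows into the outer product. The paper handles the per-row step with a Lagrange multiplier, while your orthogonal decomposition also makes global optimality and uniqueness explicit, and you correctly flag the $\bm a\neq\bm 0$ requirement that the paper leaves implicit.
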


We provide the proof in {Appendix \ref{proof:AWD}}. Intuitively, this solution demonstrates that the equivalent weight noise $\Delta \mathbf{W}^*$ is aligned with the outer product of the propagated error $\mathbf{W} \Delta \bm{a}$ and the input activation $\bm{a}$. Note that for the subsequent gradient analysis, we denote the vectorized weights as $\bm{w} = \operatorname{vec}(\mathbf{W}) \in \mathbb{R}^{D}$.

\subsection{AWD-Based Gradient Approximation}
\label{sec:grad_approx}

Let $\mathcal{L}^\mu$ and $\mathcal{L}^\nu$ denote the empirical losses on two sequential mini-batches $\mu$ and $\nu$, and define the mini-batch gradient difference as
$\bm g_{\mu\nu}\triangleq\nabla \mathcal{L}^\nu(\bm w)-\nabla \mathcal{L}^\mu(\bm w).$
Equivalently to Eq.~\eqref{eq:cov}, the same SGD noise covariance can be written in the two-batch difference form $\mathbf C=\frac12 \mathbb E_{\mu,\nu}\!\left[\bm g_{\mu\nu}\bm g_{\mu\nu}^\top\right]$.


To connect this two-batch covariance form to local curvature, we now map each matched mini-batch sample difference to an equivalent weight-space perturbation, which allows the subsequent lemma to decompose $\bm g_{\mu\nu}$ into a curvature-driven leading term and controlled remainders.
For each matched pair $(p,p')$ with $p\in\mathcal B_\mu$ and $p'\in\mathcal B_\nu$, let
$\Delta \bm a_p=\bm a_{p'}-\bm a_p$ be the corresponding activity perturbation. The minimal AWD perturbation is
$\Delta \bm w_p^{\mu\nu}=\operatorname{vec} \left(\mathbf W_p^{\mu\nu}\right)=\mathbf J_{\Delta,p}^{\mu\nu}\bm w,$
where $\mathbf J_{\Delta,p}^{\mu\nu}\triangleq\nabla_{\bm w}\Delta \bm w_p^{\mu\nu}={\bm a_p \Delta \bm a_p^\top \otimes\mathbf I_{d_{\rm out}}}/{\|\bm a_p\|^2}.$
By construction, $\Delta \bm w_p^{\mu\nu}$ is the minimum-norm weight perturbation that reproduces the pre-activation change induced by replacing $\bm a_p$ with $\bm a_{p'}$.

We first give a local decomposition of the gradient difference.

\begin{lemma}[AWD gradient decomposition]
\label{lem:grad_approx}
Under the AWD mapping, the mini-batch gradient difference admits the decomposition
\begin{equation}
\label{eq:grad_diff_decomposition}
    \bm g_{\mu\nu}
    =
    \frac1B
    \sum_{p\in\mathcal B_\mu}
    \left(
        \mathbf h_p(\bm w)
        \Delta \bm w_p^{\mu\nu}
        +
        \bm \rho_p^{\mu\nu}
    \right),
\end{equation}
\vskip -0.1in
where $\mathbf h_p(\bm w)=\nabla^2\ell_p(\bm w)$ is the per-sample Hessian and the AWD remainder is
\begin{equation}
\label{eq:grad_remainder}
    \bm \rho_p^{\mu\nu}
    =
    (\mathbf J_{\Delta,p}^{\mu\nu})^\top
    \nabla \ell_p(\bm w)
    +
    (\mathbf J_{\Delta,p}^{\mu\nu})^\top
    \mathbf h_p(\bm w)
    \Delta \bm w_p^{\mu\nu}
    +
    (\mathbf I+\mathbf J_{\Delta,p}^{\mu\nu})^\top
    \bm r_{T,p}^{\mu\nu}.
\end{equation}
Here $\bm r_{T,p}^{\mu\nu}$ is the Taylor remainder satisfying $\|\bm r_{T,p}^{\mu\nu}\|=\mathcal O\left(\|\Delta \bm w_p^{\mu\nu}\|^2\right),$
and the AWD Jacobian satisfies
$\|\mathbf J_{\Delta,p}^{\mu\nu}\|=\mathcal O\left({\|\Delta \bm a_p\|}/{\|\bm a_p\|}\right).$
\vskip -0.5in
\end{lemma}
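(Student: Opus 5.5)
The plan is to write $\bm g_{\mu\nu}$ as an average of per-pair gradient differences and expand each one through the AWD invariance. Since the matching $p\mapsto p'$ is a bijection between the same-class index sets, $\bm g_{\mu\nu}=\frac1B\sum_{p\in\mathcal B_\mu}\big(\nabla\ell_{p'}(\bm w)-\nabla\ell_p(\bm w)\big)$. By Definition~\ref{def:awd} and Lemma~\ref{lem:awd_solution}, the pre-activation of sample $p'$ under $\mathbf W$ equals that of sample $p$ under $\mathbf W+\Delta\mathbf W^*_p$. Downstream of the layer, and at fixed label $y_p=y_{p'}$, this gives the functional identity
\begin{equation*}
\ell_{p'}(\bm w)=\ell_p\big(\bm w+\Delta\bm w_p^{\mu\nu}(\bm w)\big),
\end{equation*}
valid for all $\bm w$ in a neighborhood, where $\Delta\bm w_p^{\mu\nu}(\bm w)=\mathbf J_{\Delta,p}^{\mu\nu}\bm w$ depends linearly on $\bm w$. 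The key point is that this is an identity in $\bm w$, not merely at one point, so we can differentiate both sides.

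Next we differentiate with the chain rule. Because the map $\bm w\mapsto\bm w+\mathbf J_{\Delta,p}^{\mu\nu}\bm w$ is linear with Jacobian $\mathbf I+\mathbf J_{\Delta,p}^{\mu\nu}$ (note that $\mathbf J_{\Delta,p}^{\mu\nu}$ does not depend on $\bm w$), we get
\begin{equation*}
\nabla\ell_{p'}(\bm w)=(\mathbf I+\mathbf J_{\Delta,p}^{\mu\nu})^\top\nabla\ell_p(\bm w+\Delta\bm w_p^{\mu\nu}).
\end{equation*}
We then Taylor-expand the gradient to first order with integral remainder:
\begin{equation*}
\nabla\ell_p(\bm w+\Delta\bm w)=\nabla\ell_p(\bm w)+\mathbf h_p(\bm w)\Delta\bm w+\bm r_{T,p}^{\mu\nu}.
\end{equation*}
Assuming $\ell_p$ is $C^3$, or has a locally Lipschitz Hessian, on a neighborhood of $\bm w$, the remainder satisfies $\|\bm r_{T,p}^{\mu\nu}\|\le\frac12\sup\|\nabla^3\ell_p\|\,\|\Delta\bm w\|^2=\mathcal O(\|\Delta\bm w_p^{\mu\nu}\|^2)$. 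Multiplying by $(\mathbf I+\mathbf J)^\top$ and subtracting $\nabla\ell_p(\bm w)$ isolates $\mathbf h_p\Delta\bm w_p^{\mu\nu}$. The remaining terms are exactly $\mathbf J^\top\nabla\ell_p$, $\mathbf J^\top\mathbf h_p\Delta\bm w$, and $(\mathbf I+\mathbf J)^\top\bm r_{T}$, which together form $\bm\rho_p^{\mu\nu}$ in \eqref{eq:grad_remainder}. Summing over $p$ and dividing by $B$ then gives \eqref{eq:grad_diff_decomposition}.

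For the Jacobian bound, we use $\|A\otimes\mathbf I\|=\|A\|$ and the fact that the rank-one matrix $\bm a_p\Delta\bm a_p^\top$ has operator norm $\|\bm a_p\|\|\Delta\bm a_p\|$. Hence $\|\mathbf J_{\Delta,p}^{\mu\nu}\|=\|\Delta\bm a_p\|/\|\bm a_p\|$ exactly, which is stronger than the stated $\mathcal O$ bound. Consequently $\|\Delta\bm w_p^{\mu\nu}\|\le\|\bm w\|\,\|\Delta\bm a_p\|/\|\bm a_p\|$, so the Taylor remainder is second order in the small relative activity perturbation that the optimal-transport matching provides.

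The main obstacle is justifying the functional identity used in the first step. The loss invariance stated after Definition~\ref{def:awd} holds at the given $\mathbf W$, but the perturbation is defined through $\mathbf W\Delta\bm a$, which itself varies with $\mathbf W$. We must check that $(\mathbf W+\mathbf W\Delta\bm a_p\bm a_p^\top/\|\bm a_p\|^2)\bm a_p=\mathbf W\bm a_{p'}$ for every $\mathbf W$; this is immediate. We also need the input activity $\bm a_p$ to be independent of $\mathbf W$, which holds since it comes from the layers upstream. Both facts are needed for the identity to be differentiable in $\bm w$. We additionally need the vectorization convention to match, namely $\operatorname{vec}(\mathbf W\Delta\bm a\bm a^\top)=(\bm a\Delta\bm a^\top\otimes\mathbf I)\operatorname{vec}\mathbf W$. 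Finally, the same-class matching ensures the labels agree, so that $\ell_p$ and $\ell_{p'}$ share the same label term.
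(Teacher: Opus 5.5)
Your proposal is correct and follows essentially the same route as the paper's proof. Both use the identity $\ell_{p'}(\bm w)=\ell_p(\bm w+\mathbf J_{\Delta,p}^{\mu\nu}\bm w)$, the chain rule giving the factor $(\mathbf I+\mathbf J_{\Delta,p}^{\mu\nu})^\top$, a first-order Taylor expansion of $\nabla\ell_p$, then subtraction of $\nabla\ell_p(\bm w)$ and averaging over matched pairs, with the Jacobian norm obtained exactly from $\|A\otimes\mathbf I\|=\|A\|$ and the rank-one operator norm. Your extra checks only make explicit what the paper leaves implicit: the identity holds for every $\mathbf W$ since $\bm a_p$ is upstream and $\mathbf J_{\Delta,p}^{\mu\nu}$ is $\bm w$-independent, the vec convention, matched labels, and $C^3$ or Lipschitz-Hessian regularity for the remainder.
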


\begin{proof}[Proof sketch]
By the AWD construction for the analyzed FCL block, the matched sample $p'$ is locally represented by evaluating sample $p$ at the perturbed weight $\bm w+\Delta \bm w_p^{\mu\nu}(\bm w)$. Thus,
$\ell_{p'}(\bm w) = \ell_p \left(\bm w+\Delta \bm w_p^{\mu\nu}(\bm w)\right).$
Taking the total derivative with respect to $\bm w$ and subtracting $\nabla\ell_p(\bm w)$ followed by averaging over the matched pairs in the mini-batch yields Eq.~\eqref{eq:grad_diff_decomposition}.
\end{proof}

\paragraph{Leading-order regime.}
Lemma~\ref{lem:grad_approx} separates the gradient difference into a Hessian-driven term and an explicit AWD remainder. We now express the leading-order regime through two local-perturbation small quantities.

\begin{assumption}[Local-perturbation small-quantity regime]
\label{assump:awd_small}
{For each matched pair $(p,p')$, the activity-induced AWD perturbation is small; and in the late-training regime near a generic minimum, the gradient-dependent AWD term is relatively small compared with the curvature-induced response.}
\begin{equation}
    \label{eq:asum1}
    \epsilon_{\rm act}
    \triangleq
    \frac{\|\Delta \bm a_p\|}{\|\bm a_p\|}
    \ll 1,
    \qquad
    \epsilon_{\rm min}
    \triangleq
    \frac{\|\nabla \ell_p(\bm w)\|}
    {\|\mathbf h_p(\bm w)\|\,\|\bm w\|}
    \ll 1.
\end{equation}
The first quantity $\epsilon_{\rm act}$ measures the local activity displacement selected by dense same-class matching; through the AWD map $\Delta\bm w_p^{\mu\nu}=\mathbf J_{\Delta,p}^{\mu\nu}\bm w$, it induces a small equivalent weight perturbation. 


\end{assumption}

Under Assumption~\ref{assump:awd_small}, the AWD remainder in Lemma~\ref{lem:grad_approx} is controlled by these two quantities and gives
\vskip -0.2in
\begin{equation}
\label{eq:grad_approx_final}
    \bm g_{\mu\nu}
    =
    \frac1B
    \sum_{p\in\mathcal B_\mu}
    \mathbf h_p(\bm w)
    \Delta \bm w_p^{\mu\nu} + \mathcal O(\epsilon_{\rm act}+\epsilon_{\rm min}).
\end{equation}
\vskip -0.08in
Detailed norm-level and projected controls are deferred to Appendix~\ref{app:awd_approx}.
We also empirically verify the smallness of the AWD remainder in \Cref{fig:metrics_cifar_cnn_cse,fig:metrics_mnist_fc_mse,fig:metrics_mnist_fc_cse}~(f\&g): panel~(f) reports the norm-level ratio $\|\bm \rho_p^{\mu\nu}\|/\|\mathbf h_p(\bm w)\Delta \bm w_p^{\mu\nu}\|$, while panel~(g) shows that the corresponding projected remainder-to-leading ratios remain small along the ordered Hessian eigendirections.
\vskip -0.2in
\paragraph{Remark.}
\vskip -0.08in
The generic-minimum condition above clarifies why AWD is essential. Starting from Eq.~\eqref{eq:cov}, one could obtain a Hessian second-moment form such as $\mathbb E_p[\mathbf h_p\bm\Sigma\mathbf h_p]$ by expanding $\nabla\ell_p(\bm w)$ around an interpolating minimum $\bm w^*$ with $\nabla\ell_p(\bm w^*)=\bm 0$ for every sample. This argument, however, is tied to interpolation and an unknown iterate covariance $\bm\Sigma$. AWD instead acts on gradient differences at the current iterate: canceling the shared zeroth-order per-sample gradient and leaves the curvature response plus the controlled remainder in Assumption~\ref{assump:awd_small} which is a weaker condition than interpolation. Appendix~\ref{app:awd_approx} further clarifies the relation between these two approaches.
\vskip -0.08in

\subsection{Spectral Structure of the Noise Covariance}
\label{sec:spectral_analysis}

We next derive the covariance-level consequence of the AWD gradient decomposition. Let $\{\bm v_i\}$ be the global Hessian eigenbasis, $\mathbf H\bm v_i=H_{ii}\bm v_i$, and let $\{\bm u_m^{(p)}\}$ be the local eigenbasis of the per-sample Hessian, $\mathbf h_p(\bm w)\bm u_m^{(p)}=\kappa_m^{(p)}\bm u_m^{(p)}$. The following lemma states the exact covariance decomposition; the asymptotic estimates for the associated remainders are given in Appendix~\ref{app:awd_approx}.
\begin{lemma}[AWD covariance decomposition]
\label{lem:cov_decomp}
Substituting Eq.~\eqref{eq:grad_approx_final} into
$\mathbf C=\frac12\mathbb E_{\mu,\nu}\!\left[\bm g_{\mu\nu}\bm g_{\mu\nu}^\top\right]$
,  $C_{ij}\triangleq\bm v_i^\top\mathbf C\bm v_j$ gives
\begin{equation}
\label{eq:cov_entry_decomp}
\begin{aligned}
    C_{ij}
    &=
    \frac{\sigma_w^2}{2B}
    \mathbb E_p
    \left[
        \sum_m
        (\kappa_m^{(p)})^2
        (\bm u_m^{(p)}\!\cdot\!\bm v_i)
        (\bm u_m^{(p)}\!\cdot\!\bm v_j)
    \right]
    +
    R^{\rm sample}_{ij}
    +
    R^{\rm iso}_{ij}
    +
    R^{\rm rem}_{ij},
\end{aligned}
\end{equation}

where $R^{\rm sample}$ collects cross-sample covariance terms, $R^{\rm iso}$ captures deviations from local isotropy of the AWD perturbation covariance, and $R^{\rm rem}$ collects AWD-remainder terms. 
Appendix~\ref{app:proof_thm1} gives the three remainder terms explicitly. The first two are statistical covariance errors due to the high dimensionality, while $R^{\rm rem}$ is controlled by the local-perturbation small quantities $\epsilon_{\rm act}$ and $\epsilon_{\rm min}$.

\end{lemma}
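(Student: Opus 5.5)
We start from the exact decomposition of Lemma~\ref{lem:grad_approx}, writing $\bm g_{\mu\nu}=\frac1B\sum_{p\in\mathcal B_\mu}(\bm z_p+\bm\rho_p^{\mu\nu})$ with $\bm z_p\triangleq\mathbf h_p(\bm w)\Delta\bm w_p^{\mu\nu}$. Substituting into $\mathbf C=\frac12\mathbb E_{\mu,\nu}[\bm g_{\mu\nu}\bm g_{\mu\nu}^\top]$ gives
\[
\mathbf C=\frac{1}{2B^2}\,\mathbb E_{\mu,\nu}\Big[\sum_{p,q}\big(\bm z_p\bm z_q^\top+\bm z_p\bm\rho_q^\top+\bm\rho_p\bm z_q^\top+\bm\rho_p\bm\rho_q^\top\big)\Big].
\]
We split this into three groups:
\begin{itemize}
\item the diagonal $p=q$ part of $\bm z_p\bm z_p^\top$;
\item the off-diagonal $p\neq q$ part of $\bm z_p\bm z_q^\top$, which defines $R^{\rm sample}$;
\item every term containing at least one $\bm\rho$, which defines $R^{\rm rem}$.
\end{itemize}
Projecting onto $\bm v_i,\bm v_j$ gives the entrywise form. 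This is a bookkeeping identity, so the only real content is in how the diagonal term is rewritten.

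For the diagonal term we get $\frac{1}{2B^2}\sum_p\mathbb E[\mathbf h_p\,\Delta\bm w_p\Delta\bm w_p^\top\,\mathbf h_p]$. We condition on the sample $p$ and average over the random matched partner $p'$ drawn from the next batch. This yields the local AWD perturbation covariance $\bm\Sigma_p\triangleq\mathbb E_{p'}[\Delta\bm w_p\Delta\bm w_p^\top\mid p]$. We then introduce the isotropic reference value $\sigma_w^2\triangleq\mathbb E_p[\operatorname{tr}\bm\Sigma_p]/D$ and write $\bm\Sigma_p=\sigma_w^2\mathbf I+(\bm\Sigma_p-\sigma_w^2\mathbf I)$. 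The anisotropic part contributes $\frac1{2B}\mathbb E_p[\bm v_i^\top\mathbf h_p(\bm\Sigma_p-\sigma_w^2\mathbf I)\mathbf h_p\bm v_j]$, and we take this as the definition of $R^{\rm iso}_{ij}$. The isotropic part gives $\frac{\sigma_w^2}{2B}\mathbb E_p[\bm v_i^\top\mathbf h_p^2\bm v_j]$, using $\frac1{B^2}\sum_p=\frac1B\mathbb E_p$ for uniform sampling. Inserting the spectral decomposition $\mathbf h_p^2=\sum_m(\kappa_m^{(p)})^2\bm u_m^{(p)}\bm u_m^{(p)\top}$ then gives exactly the leading term in Eq.~\eqref{eq:cov_entry_decomp}.

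It remains to write out the remainders explicitly and justify the qualitative claims made about them. $R^{\rm sample}$ involves $\mathbb E[\bm z_p\bm z_q^\top]$ for $p\neq q$. Distinct samples are drawn independently, and the optimal-transport matching only couples them weakly, so these terms should be controlled by the cross-covariance of $\Delta\bm w_p$ and $\Delta\bm w_q$. $R^{\rm iso}$ is a concentration-type error: in high dimension, $\Delta\bm w_p=\mathbf J_{\Delta,p}\bm w$ has a covariance that is close to isotropic when projected onto the dominant eigendirections of $\mathbf h_p$. For $R^{\rm rem}$ we use Cauchy--Schwarz, $|\bm v_i^\top\mathbb E[\bm z\bm\rho^\top]\bm v_j|\le\sqrt{\mathbb E(\bm v_i\cdot\bm z)^2\,\mathbb E(\bm v_j\cdot\bm\rho)^2}$. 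We then apply the bounds of Lemma~\ref{lem:grad_approx} and Assumption~\ref{assump:awd_small}, namely $\|\mathbf J_\Delta\|=\mathcal O(\epsilon_{\rm act})$, the gradient term being of relative size $\epsilon_{\rm min}$, and the Taylor remainder being $\mathcal O(\|\Delta\bm w\|^2)$. Together these give relative size $\mathcal O(\epsilon_{\rm act}+\epsilon_{\rm min})$.

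The main obstacle is the isotropy step. The choice of $\sigma_w^2$ and the claim that $R^{\rm iso}$ is a small statistical error are not automatic, because $\Delta\bm w_p$ has the Kronecker structure $\bm a_p\Delta\bm a_p^\top\otimes\mathbf I$ and is therefore far from isotropic in the full weight space. We would first try to keep the decomposition exact, defining $R^{\rm iso}$ as the residual above, and defer any smallness claims to the appendix estimates. As a fallback we would argue isotropy only within the top eigenspaces of $\mathbf h_p$, which are the ones that dominate the $\kappa^2$-weighted sum.
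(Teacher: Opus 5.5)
Your exact identity is the paper's. The $p=q$ / $p\neq q$ / $\bm\rho$-containing split of $\frac{1}{2B^2}\sum_{p,q}$ reproduces $\widetilde D_{ij}+R^{\rm sample}_{ij}$ and $\mathbf R^{\rm rem}=\frac{1}{2B^2}\mathbb E_{\mu,\nu}[\bm s_{\mu\nu}\bm r_{\mu\nu}^\top+\bm r_{\mu\nu}\bm s_{\mu\nu}^\top+\bm r_{\mu\nu}\bm r_{\mu\nu}^\top]$. Your $\bm u_m^{(p)\top}\bm\Sigma_p\bm u_n^{(p)}$ is exactly the paper's $\mathcal M_{p,mn}$, with the same split $\sigma_w^2\delta_{mn}+\Delta\mathcal M_{p,mn}$.

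\textbf{The gap: controlling $R^{\rm rem}$.} You claim that Cauchy--Schwarz plus the per-sample bounds of Lemma~\ref{lem:grad_approx} and Assumption~\ref{assump:awd_small} make $R^{\rm rem}$ relatively $\mathcal O(\epsilon_{\rm act}+\epsilon_{\rm min})$. That step does not go through.
\begin{itemize}
\item The leading term comes from only the $B$ same-sample terms, hence its factor $1/B$. The mixed terms $\bm z_p\bm\rho_q^\top$ with $p\neq q$ number $B(B-1)$. Bounding each pair by Cauchy--Schwarz gives only $|R^{\rm rem}_{ii}|=\mathcal O\big(B(\epsilon_{\rm act}+\epsilon_{\rm min})\big)D_{ii}$, which need not be small at $B=50$--$128$.
\item Applying Cauchy--Schwarz to the batch sums instead needs $\mathbb E(\bm v_j\cdot\bm r_{\mu\nu})^2\lesssim\epsilon^2\,\mathbb E(\bm v_j\cdot\bm s_{\mu\nu})^2$, and this does not follow either. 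The cross-sample decorrelation that makes $R^{\rm sample}$ small is exactly what makes $\bm s_{\mu\nu}=\sum_p\bm z_p$ grow only like $\sqrt B$. Any remainder component that does not cancel across samples grows like $B$ (e.g.\ the Taylor remainder, quadratic in $\Delta\bm w_p$ and hence generically of nonzero mean). So the per-sample ratio can be inflated by up to $\sqrt B$ after aggregation.
\item A norm-level ratio $\|\bm\rho_p\|/\|\bm z_p\|$ does not bound the projected ratio $|\bm v_j\cdot\bm\rho_p|/|\bm v_j\cdot\bm z_p|$ on low-curvature retained directions. That needs a non-degeneracy condition like Eq.~\eqref{eq:app_proj_nondeg}.
\end{itemize}
The paper flags exactly this: per-sample control ``does not follow automatically'' because of cancellation in $\bm s_{\mu\nu}$. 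It closes the gap by assumption. Assumption~\ref{assump:batch_rem_control} imposes $\|\bm r_{\mu\nu}\|\le\xi_{\mu\nu}\|\bm s_{\mu\nu}\|$ and $|Y_i^{\mu\nu}|\le\zeta_i^{\mu\nu}|X_i^{\mu\nu}|$. After that, $|R^{\rm rem}_{ij}|\le\tfrac12(2\zeta_K+\zeta_K^2)(C^{hh}_{ii}+C^{hh}_{jj})$ follows immediately from $|X_iX_j|\le\tfrac12(X_i^2+X_j^2)$. You need this, or an equivalent batch-level non-cancellation hypothesis; it cannot be derived from per-sample quantities alone.

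\textbf{The choice of $\sigma_w^2$.} Keeping $R^{\rm iso}$ as an exact residual matches the paper, which leaves $\sigma_w^2$ as a free constant, so the identity holds for any value. However, your normalization $\sigma_w^2=\mathbb E_p[\operatorname{tr}\bm\Sigma_p]/D$ would defeat the claim that $R^{\rm iso}$ is a small statistical error.
\begin{itemize}
\item Since $\ell_p$ depends on $\mathbf W$ only through $\mathbf W\bm a_p$, one has $\mathbf h_p=(\bm a_p\bm a_p^\top)\otimes\mathbf G_p$, where $\mathbf G_p$ is the Hessian with respect to the pre-activation.
\item Also $\Delta\bm w_p=\operatorname{vec}\big((\mathbf W\Delta\bm a_p)\bm a_p^\top\big)/\|\bm a_p\|^2$.
\item Hence the range of $\mathbf h_p$ and the support of $\bm\Sigma_p$ both lie in the $d_{\rm out}$-dimensional subspace $\{\operatorname{vec}(\bm x\bm a_p^\top)\}$.
\end{itemize}
Even with perfect isotropy on that subspace, $\operatorname{tr}\bm\Sigma_p/D$ is too small by a factor $d_{\rm in}$. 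Then $R^{\rm iso}$ would carry a fraction $1-1/d_{\rm in}$ of the same-sample term. The normalization must be taken on the range of $\mathbf h_p$, i.e.\ isotropy of $\mathbf W\Delta\bm a_p$ in the eigenbasis of $\mathbf G_p$. That is your fallback, and the Kronecker structure makes it the natural choice rather than a fallback. It is also consistent with the paper's diagnostic, which replaces $\mathcal M_{p,mn}$ by $\sigma^2_{w,pm}\delta_{mn}$.
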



\begin{assumption}[Statistical covariance conditions]
\label{assump:spectral_cov_conditions}
We denote the explicit leading term in Eq.~\eqref{eq:cov_entry_decomp} by $D_{ij}$. Let $I_K$ denote the retained top K global Hessian eigendirections used in Section~\ref{sec:power-law}. In addition to the Assumption~\ref{assump:awd_small} which produces $|R^{\rm rem}_{ij}|=\mathcal O(\epsilon_{\rm act}+\epsilon_{\rm min})$ , assume that
\[
    \epsilon_{\rm stat}
    \triangleq
    \frac{|R^{\rm sample}_{ij}|+|R^{\rm iso}_{ij}|}
    {|D_{ii}| + |D_{jj}|}
    \ll 1.
\]
\end{assumption}
Detailed sufficient conditions showing how these small quantities $\epsilon$ control $R^{\rm rem}$, $R^{\rm sample}$, and $R^{\rm iso}$ are given in Appendix~\ref{app:proof_thm1}.

\begin{remark}
For $i\neq j$, the term $(\bm u_m^{(p)}\!\cdot\!\bm v_i)(\bm u_m^{(p)}\!\cdot\!\bm v_j)$ in $D_{ij}$ is the signed product of projections onto two orthogonal global eigendirections. In the high-dimensional regime here, these centered, weakly correlated products average out relative to the diagonal scale, $|D_{ij}|/\sqrt{|D_{ii}||D_{jj}|} \lesssim 1/\sqrt{N_{sample}}$ with $N_{sample} \sim 20,000$ denoting the number of samples, as formalized in Appendix~\ref{app:proof_thm1} and observed in \cref{fig:cmt_cifar_cnn_cse,fig:cmt_mnist_fc_cse,fig:cmt_mnist_fc_mse} (last row).
\end{remark}

\begin{theorem}[Spectral Decomposition of SGD Noise]
\label{thm:spectral_noise}
Under Assumption~\ref{assump:spectral_cov_conditions}, for every $i\in I_K$,
\begin{equation}
\label{eq:thm_result}
    C_{ii}=
    \frac{\sigma_w^2}{2B}
    \mathbb E_p
    \left[
        \sum_m
        (\kappa_m^{(p)})^2
        (\bm u_m^{(p)}\!\cdot\!\bm v_i)^2
    \right]
    \left[
    1+\mathcal O(\epsilon_{\rm stat}+\epsilon_{\rm act}+\epsilon_{\rm min})
    \right].
\end{equation}
\end{theorem}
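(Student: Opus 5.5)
The plan is to specialize Lemma~\ref{lem:cov_decomp} to the diagonal case $i=j$ and show that all three remainder terms are small relative to the explicit leading term $D_{ii}$. This lets $C_{ii}=D_{ii}+R^{\rm sample}_{ii}+R^{\rm iso}_{ii}+R^{\rm rem}_{ii}$ be rewritten as $D_{ii}\,[1+\mathcal O(\cdot)]$.

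The first step is to record why $D_{ii}$ has the displayed form and is nonnegative. Insert $\bm g_{\mu\nu}=\frac1B\sum_p \mathbf h_p\Delta\bm w_p^{\mu\nu}+\text{rem}$ from Eq.~\eqref{eq:grad_approx_final} into $\mathbf C=\frac12\mathbb E_{\mu,\nu}[\bm g_{\mu\nu}\bm g_{\mu\nu}^\top]$ and expand the outer product. The diagonal-in-sample terms give $\frac{1}{2B}\mathbb E_p[\mathbf h_p\,\mathbb E[\Delta\bm w_p\Delta\bm w_p^\top]\,\mathbf h_p]$. Writing $\mathbb E[\Delta\bm w_p\Delta\bm w_p^\top]=\sigma_w^2\mathbf I+(\text{anisotropic part})$, the isotropic piece yields $\frac{\sigma_w^2}{2B}\mathbb E_p[\mathbf h_p^2]$. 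Diagonalizing $\mathbf h_p=\sum_m\kappa_m^{(p)}\bm u_m^{(p)}\bm u_m^{(p)\top}$ and projecting on $\bm v_i$ gives exactly $D_{ii}=\frac{\sigma_w^2}{2B}\mathbb E_p[\sum_m(\kappa_m^{(p)})^2(\bm u_m^{(p)}\cdot\bm v_i)^2]\ge 0$. The other pieces are bookkept as follows: the cross-sample terms $p\neq q$ form $R^{\rm sample}$, the anisotropic part forms $R^{\rm iso}$, and every term containing $\bm\rho_p$ or the $\mathcal O(\epsilon)$ remainder forms $R^{\rm rem}$. All of this is already granted by Lemma~\ref{lem:cov_decomp}.

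The second step is to bound the remainders relative to $D_{ii}$. Assumption~\ref{assump:spectral_cov_conditions} at $i=j$ gives $|R^{\rm sample}_{ii}|+|R^{\rm iso}_{ii}|\le \epsilon_{\rm stat}\cdot 2D_{ii}$, so these contribute $\mathcal O(\epsilon_{\rm stat})D_{ii}$ directly. For $R^{\rm rem}_{ii}$, I would follow Lemma~\ref{lem:grad_approx}.
\begin{itemize}
\item Each $\bm\rho_p$ is bounded in norm by $\|\mathbf J_\Delta\|\,\|\nabla\ell_p\|+\|\mathbf J_\Delta\|\,\|\mathbf h_p\Delta\bm w_p\|+\mathcal O(\|\Delta\bm w_p\|^2)$.
\item Using $\|\mathbf J_\Delta\|=\mathcal O(\epsilon_{\rm act})$ and $\|\nabla\ell_p\|=\epsilon_{\rm min}\|\mathbf h_p\|\|\bm w\|$, each term is a small multiple of the leading scale $\|\mathbf h_p\|\,\|\Delta\bm w_p\|\sim\|\mathbf h_p\|\,\epsilon_{\rm act}\|\bm w\|$.
\item Cross terms between the leading part and $\bm\rho_p$ in $\bm v_i^\top(\cdot)\bm v_i$ are then bounded by Cauchy--Schwarz, $|\mathbb E[\langle\bm v_i,\mathbf h_p\Delta\bm w_p\rangle\langle\bm v_i,\bm\rho_p\rangle]|\le \sqrt{\mathbb E\langle\bm v_i,\mathbf h_p\Delta\bm w_p\rangle^2}\sqrt{\mathbb E\langle\bm v_i,\bm\rho_p\rangle^2}$.
\item The first factor is $\sqrt{D_{ii}}$ up to the $\epsilon_{\rm stat}$ corrections. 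The second is $\mathcal O(\epsilon_{\rm act}+\epsilon_{\rm min})\sqrt{D_{ii}}$, provided the projected ratio along $\bm v_i$ is controlled (Appendix~\ref{app:awd_approx}).
\end{itemize}
Together these give $|R^{\rm rem}_{ii}|=\mathcal O(\epsilon_{\rm act}+\epsilon_{\rm min})D_{ii}$, and factoring out $D_{ii}$ yields Eq.~\eqref{eq:thm_result}.

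The main obstacle is this last point: converting the norm-level control of $\bm\rho_p$ into a bound \emph{relative to} $D_{ii}$ along a specific direction $\bm v_i$. A norm bound alone compares $\bm\rho_p$ with $\|\mathbf h_p\Delta\bm w_p\|$, which could be much larger than its projection on a low-curvature $\bm v_i$. To close the gap I would restrict to $i\in I_K$ (top Hessian directions), where $D_{ii}$ is bounded below by a fixed fraction of $\frac{\sigma_w^2}{2B}\mathbb E_p\|\mathbf h_p\|^2$ times the typical squared overlap. I would then invoke the projected remainder control (the projected ratios in panel (g)) as the needed hypothesis, so the $\mathcal O(\epsilon_{\rm act}+\epsilon_{\rm min})$ is uniform over $I_K$.
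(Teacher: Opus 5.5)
Your handling of $D_{ii}$ and of $R^{\rm sample}_{ii}+R^{\rm iso}_{ii}$ matches the paper. The gap is in the step that gives $|R^{\rm rem}_{ii}|=\mathcal O(\epsilon_{\rm act}+\epsilon_{\rm min})D_{ii}$.

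$R^{\rm rem}$ is a batch-level quantity. Write $a_p\triangleq\bm v_i^\top\mathbf h_p\Delta\bm w_p^{\mu\nu}$, $X_i=\sum_{p\in\mathcal B_\mu}a_p$ and $Y_i=\sum_{p\in\mathcal B_\mu}\bm v_i^\top\bm\rho_p^{\mu\nu}$. Then $R^{\rm rem}_{ii}=\frac{1}{2B^2}\mathbb E_{\mu,\nu}[2X_iY_i+Y_i^2]$. Your Cauchy--Schwarz step pairs $\mathbf h_p\Delta\bm w_p$ only with $\bm\rho_p$ for the same $p$. The $B(B-1)$ cross-sample products $a_p\,\bm v_i^\top\bm\rho_q$ and $(\bm v_i^\top\bm\rho_p)(\bm v_i^\top\bm\rho_q)$ with $p\neq q$ are never bounded. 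Assumption~\ref{assump:spectral_cov_conditions} does not cover them either, since $\epsilon_{\rm stat}$ only controls the cross-sample part of the \emph{leading} term.

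Per-sample control cannot close this. Grant the projected bound $|\bm v_i^\top\bm\rho_p|\le\epsilon\,|a_p|$ with $\epsilon=\mathcal O(\epsilon_{\rm act}+\epsilon_{\rm min})$. You then only get $|Y_i|\le\epsilon\sum_p|a_p|$. By contrast, $\mathbb E X_i^2\approx\sum_p\mathbb E a_p^2$ holds precisely because the signed cross terms $a_pa_q$ cancel; that is what small $R^{\rm sample}_{ii}$ means. Since $(\sum_p|a_p|)^2$ can be as large as $B\sum_p a_p^2$, nothing stops the remainders from adding coherently across the batch while the leading responses do not. Per-sample control therefore yields a relative error of order $\sqrt B\,\epsilon+B\epsilon^2$, not $\epsilon$.

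The paper flags exactly this: cancellation in $\bm s_{\mu\nu}$ changes relative sizes after aggregation. It does not derive the batch bound from per-sample estimates. Instead it imposes Assumption~\ref{assump:batch_rem_control}, $|Y_i^{\mu\nu}|\le\zeta_i^{\mu\nu}|X_i^{\mu\nu}|$ with $\sup\zeta=\mathcal O(\epsilon_{\rm act}+\epsilon_{\rm min})$.

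With that assumption the remainder bound is one line and needs neither Cauchy--Schwarz nor any lower bound on $D_{ii}$: $|R^{\rm rem}_{ii}|\le(2\zeta_K+\zeta_K^2)\frac{1}{2B^2}\mathbb E X_i^2=(2\zeta_K+\zeta_K^2)C^{hh}_{ii}$. Combined with $C^{hh}_{ii}=D_{ii}[1+\mathcal O(\epsilon_{\rm stat})]$, this gives the theorem.

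The componentwise form of that assumption also resolves the norm-versus-projection obstacle you identified. Your detour through a lower bound on $D_{ii}$ for $i\in I_K$ is not supported by anything in the setup. It becomes unnecessary once projected control is assumed at the batch level rather than per sample.
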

(Proof provided in \textbf{Appendix \ref{app:proof_thm1}}.)

\begin{remark}[Dimensional Consistency]
    This result addresses the dimensional inconsistency discussed in Section \ref{sec:commute}. The noise variance $C_{ij}$ (dimension $[\mathcal{L}]^2/[\bm{w}]^2$) is expressed as the product of squared curvature $(\kappa_m^{(p)})^2$ (dimension $[\mathcal{L}]^2/[\bm{w}]^4$) and weight variance $\sigma_w^2$ (dimension $[\bm{w}]^2$). The resulting dimensions are consistent without introducing arbitrary constants.
\end{remark}

\textbf{Origin of Superlinear $\mathbf{C}$--$\mathbf{H}$ relationship.}
Eq.~\eqref{eq:thm_result} shows that $C_{ii}$ is governed by squared local curvatures $(\kappa_m^{(p)})^2$ of stiff sample modes and their alignment $(\bm u_m^{(p)}\!\cdot\!\bm v_i)^2$ with the global Hessian direction $\bm v_i$ \cite{sagunEigenvaluesHessianDeep2017,sagunEmpiricalAnalysisHessian2018,wuUnderstandingGeneralizationDeep2017,papyanFullSpectrumDeepnet2019,papyan2019measurementsthreelevelhierarchicalstructure}. By contrast, $H_{ii}=\mathbb E_p[\sum_m\kappa_m^{(p)}(\bm u_m^{(p)}\!\cdot\!\bm v_i)^2]$ is the corresponding first moment. Thus $\mathbf C$ emphasizes the second moment of local curvature whereas $\mathbf H$ tracks the first; in heavy-tailed bulk-and-outlier spectra \cite{xieOverlookedStructureStochastic2023,tang2025investigating}, this stronger weighting of stiff directions yields the observed superlinear $C_{ii}$--$H_{ii}$ scaling.

\begin{figure}[ht!]
\begin{center}
\includegraphics[width=0.9\columnwidth]{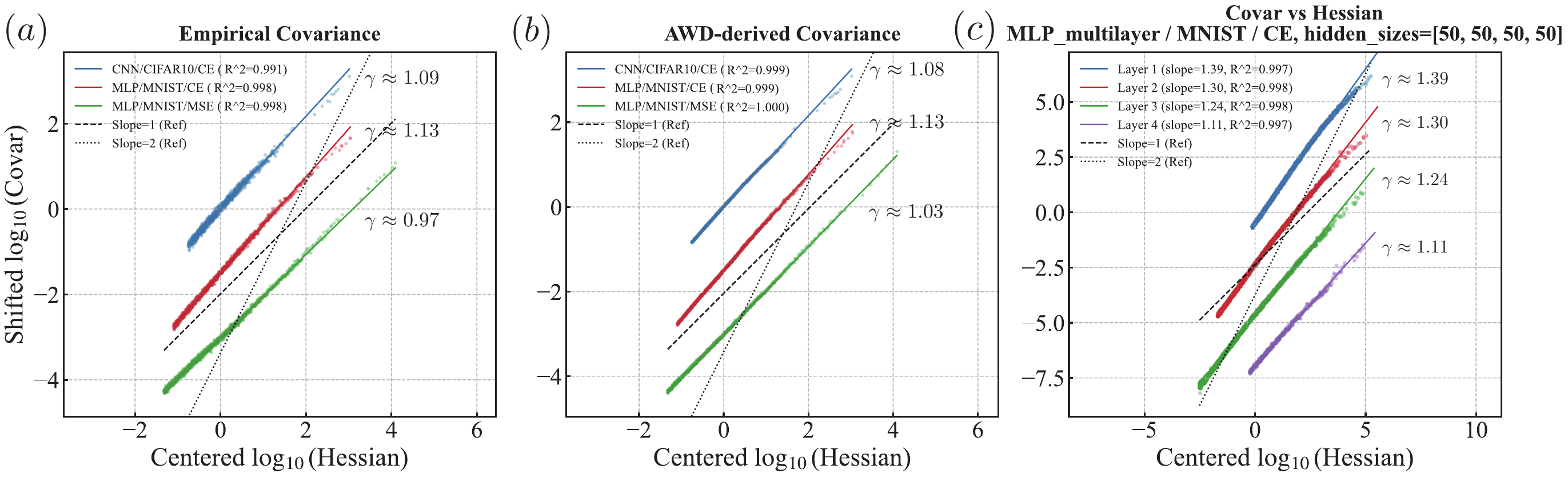}
\caption{Log-log plot of diagonal elements using the top $1000$ eigenvalues in single fully connected layers for models trained to convergence ($100\%$ training accuracy for CE and $>95\%$ for MSE). Data points are mean-centered and vertically shifted for visualization; solid lines denote linear fits.
\textbf{(a)} Empirical noise covariance versus the Hessian.
\textbf{(b)} AWD-derived noise covariance (Eq.~\ref{eq:thm_result}) versus the Hessian. The dotted and dashed lines correspond to slope $1$ and $2$, respectively.
\textbf{(c)} Layer-wise empirical covariance versus Hessian for MLP\_multilayer/MNIST/CE with hidden sizes $[50,50,50,50]$ and layer indices $1$--$4$.}
\label{fig:loglog}
\end{center}
\vskip -0.2in
\end{figure}

\section{Empirical Power-Law}
\label{sec:power-law}

Building on the spectral decomposition above, we now test whether the predicted covariance--curvature dependence appears as a measurable empirical law. Within the analyzed single fully connected layer, the diagonal entries of the SGD noise covariance and Hessian exhibit a clear power-law relation, and the fitted exponent provides a compact measure of the nonlinearity of the $\mathbf C$--$\mathbf H$ dependence.

We quantify this empirical relation by plotting $C_{ii}$ against $H_{ii}$ in log-log scale:
\begin{equation}
    C_{ii} \propto H_{ii}^{\gamma}.
\end{equation}
As shown in \Cref{fig:loglog}(a), the top 1000 dominant Hessian eigendirections are well fit by a line with slope $\gamma$ and fit quality $R^2>0.99$, consistent with prior observations \cite{xieOverlookedStructureStochastic2023,tang2025investigating}; excluding the degenerate Hessian tail improves numerical stability \cite{Meurant2006,yaopyhessian,tang2025investigating}. As shown in \Cref{fig:loglog}(b) (with additional results in \Cref{fig:lglg_cifar_cnn_cse,fig:lglg_mnist_fc_cse,fig:lglg_mnist_fc_mse}),
the same empirical fit within the analyzed fully connected layer
is reproduced by the AWD-derived covariance with similar fitted exponents.

Table~\ref{tab:gamma_comparison} compares the empirical exponent $\gamma_{\textnormal{emp}}$, fitted from the full covariance $\mathbf C$ in Eq.~\ref{eq:cov}, with the AWD-predicted exponent $\gamma_{\textnormal{AWD}}$ from the leading order term in Eq.~\ref{eq:thm_result}. The comparison covers multiple datasets, architectures, class counts, and CE/MSE losses. CE models are trained to $100\%$ training accuracy, while MSE models reach $>95\%$ within the same training budget. For CE, $\gamma_{\textnormal{AWD}}$ closely matches $\gamma_{\textnormal{emp}}$ and is consistently superlinear, reaching $\gamma\approx1.4$; for MSE, $\gamma_{\textnormal{emp}}$ remains much closer to $1$.

{The empirical analysis reveals a critical divergence: while MSE yields} $\gamma \approx 1$, {CE consistently exhibits superlinear scaling} $\gamma > 1$. This behavior stands in direct contradiction to the Fisher approximation, which predicts strict linearity for negative log-likelihood losses \cite{jastrzebskiThreeFactorsInfluencing2018,zhuAnisotropicNoiseStochastic2019,xieDiffusionTheoryDeep2021}.

Throughout this section, $C_{ii}\propto H_{ii}^{\gamma}$ denotes an empirical fit within the analyzed single fully connected layer, not a unique full-space exponent. The layer-wise multi-layer visualization in \Cref{fig:loglog}(c) illustrates this point for an MLP model on MNIST trained with CE and hidden sizes $[50,50,50,50]$. Specifically, the fitted $\gamma$ decreases as the layer approaches the output side. Appendix~\ref{app:multilayer_validation} further confirms this distinction: given layer-dependent fitted exponents, the joint multi-layer spectrum need not collapse to one log-log line. But still, the full model follows the AWD second-moment covariance structure, so the covariance--curvature relation remains superlinear in this second-moment sense.

Additional sweeps indicate that batch size and learning rate can change the fitted fully connected-layer exponent $\gamma$, while the observed values remain within $[1,2]$; a more systematic analysis is left for future work.

\section{Bounds on the Scaling Exponent}
\label{sec:bounds}

\subsection{Derivation of the Bounds}

To provide a theoretical foundation for the observed superlinear scaling within single fully connected layers, we establish bounds $1 \le \gamma \le 2$ derived from Theorem~\ref{thm:spectral_noise}. Our analysis focuses on the vicinity of generic minima, where these bounds can be established rigorously. While this approach relies on local convexity in the late stage of training, it avoids the restrictive ``true parameter'' (interpolating minima) assumption inherent in the Fisher approximation. This distinction is essential: in the highly non-convex landscapes of deep neural networks, convergence to a generic local minimum is a realistic outcome of training, whereas exact realizability is rarely guaranteed \cite{liuBadGlobalMinima2020,sunGlobalLandscapeNeural2020,liuLossLandscapesOptimization2022,zhangEmbeddingPrincipleHierarchical2022}.

\begin{theorem}[Bounds on $\gamma$]
\label{thm:bounds}
Consider the loss landscape in the vicinity of a generic minimum $\bm{w}^*$ where {local convexity} holds, i.e., the per-sample Hessians $\mathbf{h}_p(\bm{w}^*)$ are positive semi-definite (PSD). Let $\kappa_{\max}$ be the uniform upper bound of the per-sample eigenvalues. Then, the following properties hold:
\begin{enumerate}
    \item The diagonal elements of the noise covariance are bounded by the linear and quadratic forms of the Hessian:
    \begin{equation}
    \label{eq:inequality_chain}
        \frac{\sigma_w^2}{2B} H_{ii}^2 \leq C_{ii} \leq \frac{\sigma_w^2 \kappa_{\max}}{2B} H_{ii}.
    \end{equation}
    \item Consequently, if an empirical power-law relationship $C_{ii} \propto H_{ii}^{\gamma}$ hold, the fitted exponent $\gamma$ is theoretically bounded by:
    \begin{equation}
        1 \le \gamma \le 2.
    \end{equation}
\end{enumerate}
\end{theorem}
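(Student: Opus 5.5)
We work with the leading-order expression of Theorem~\ref{thm:spectral_noise} and drop the relative correction factor, which only changes the constants. So we take
\[
C_{ii}=\frac{\sigma_w^2}{2B}\,\mathbb E_p\Big[\sum_m(\kappa_m^{(p)})^2 c_{m,i}^{(p)}\Big],
\qquad
H_{ii}=\mathbb E_p\Big[\sum_m\kappa_m^{(p)} c_{m,i}^{(p)}\Big],
\qquad
c_{m,i}^{(p)}\triangleq(\bm u_m^{(p)}\!\cdot\!\bm v_i)^2 .
\]
The identity for $H_{ii}$ comes from $\mathbf H=\mathbb E_p[\mathbf h_p]$ together with the spectral decomposition $\mathbf h_p=\sum_m\kappa_m^{(p)}\bm u_m^{(p)}\bm u_m^{(p)\top}$. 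The key observation is that, for each fixed $p$, the weights $c_{m,i}^{(p)}$ are nonnegative and sum to $\|\bm v_i\|^2=1$, because $\{\bm u_m^{(p)}\}$ is an orthonormal basis. Averaging over $p$ as well, the joint weights $\mathbb P(p)\,c_{m,i}^{(p)}$ form a probability measure $\pi_i$ on pairs $(p,m)$. Under $\pi_i$, we have $H_{ii}=\mathbb E_{\pi_i}[\kappa]$ and $C_{ii}=\frac{\sigma_w^2}{2B}\mathbb E_{\pi_i}[\kappa^2]$. Both bounds in part~1 then reduce to moment inequalities for a single random variable $\kappa$.

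For the lower bound, we apply Jensen's inequality (or Cauchy--Schwarz) under $\pi_i$: $\mathbb E_{\pi_i}[\kappa^2]\ge(\mathbb E_{\pi_i}[\kappa])^2=H_{ii}^2$. This step does not use positivity. For the upper bound, local convexity gives $0\le\kappa_m^{(p)}\le\kappa_{\max}$, so $\kappa^2\le\kappa_{\max}\kappa$ pointwise. Taking expectations gives $\mathbb E_{\pi_i}[\kappa^2]\le\kappa_{\max}H_{ii}$. This is the one place the PSD hypothesis is needed: with negative eigenvalues, $\kappa^2\le\kappa_{\max}\kappa$ fails, and $H_{ii}$ could be small through cancellation while $C_{ii}$ stays large. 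Multiplying by $\sigma_w^2/2B$ yields Eq.~\eqref{eq:inequality_chain}.

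For part~2, we interpret the power law as holding over the retained top-$K$ range $I_K$, with $H_{ii}$ varying over a range $[H_{\min},H_{\max}]$. In log-log coordinates, part~1 says that every point $(\log H_{ii},\log C_{ii})$ lies in the wedge between the line of slope $2$ and the line of slope $1$:
\[
2\log H_{ii}+\log\tfrac{\sigma_w^2}{2B}\;\le\;\log C_{ii}\;\le\;\log H_{ii}+\log\tfrac{\sigma_w^2\kappa_{\max}}{2B}.
\]
Suppose $C_{ii}=AH_{ii}^\gamma$ holds exactly over the range. If $\gamma>2$, then $AH^{\gamma}\ge\frac{\sigma_w^2}{2B}H^2$ forces $A\ge\frac{\sigma_w^2}{2B}H^{2-\gamma}$, which fails as $H$ decreases toward small values. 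If $\gamma<1$, then $AH^{\gamma}\le\frac{\sigma_w^2\kappa_{\max}}{2B}H$ fails as $H$ decreases in the same way. Hence $1\le\gamma\le2$.

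The main obstacle is making this last step honest. Over a finite range $[H_{\min},H_{\max}]$, the wedge only constrains the slope approximately: a line lying inside a wedge of finite extent can have a slope slightly outside $[1,2]$ by an amount of order $\log(\kappa_{\max}/H_{\min})/\log(H_{\max}/H_{\min})$. I would therefore state part~2 as the asymptotic statement over a wide dynamic range, which is the regime where a fitted exponent is meaningful. I would add a remark that the endpoint cases are attained: $\gamma=1$ when $\kappa$ concentrates at $\kappa_{\max}$, and $\gamma=2$ when $\kappa$ is deterministic under $\pi_i$.
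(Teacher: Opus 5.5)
Your proof is correct and follows the paper's route. The paper gets the lower bound in two steps: a per-sample Cauchy--Schwarz bound $(A_i^{(p)})^2\le B_i^{(p)}$, then Jensen over $p$. Your joint measure $\pi_i$ on $(p,m)$ merges these into a single variance inequality. The upper bound via $\kappa^2\le\kappa_{\max}\kappa$ and the log-log wedge argument with $H_{ii}\to0$ are the same as the paper's.

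Your finite-range caveat is a legitimate criticism of the paper's ``holds for arbitrarily small $H_{ii}$'' step, but the slack you quote is wrong. In log-log coordinates the wedge has vertical width $\log(\kappa_{\max}/H)$, which is smallest at $H_{\max}$. The steepest admissible line touches the slope-2 boundary at $H_{\min}$ and the slope-1 boundary at $H_{\max}$; the shallowest does the reverse. So an exact power law on $[H_{\min},H_{\max}]$ only forces
\[
1-s\;\le\;\gamma\;\le\;2+s,\qquad s=\frac{\log(\kappa_{\max}/H_{\max})}{\log(H_{\max}/H_{\min})}.
\]
Your expression, with $H_{\min}$ in the numerator, equals $1+s\ge1$. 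It never tends to zero as the range widens, which contradicts your own conclusion that the bound is meaningful over a wide dynamic range. The correct $s$ does vanish when $\log(\kappa_{\max}/H_{\max})$ is small compared with the number of decades spanned, and it is exactly zero if $H_{\max}=\kappa_{\max}$, where the wedge closes.

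Similarly, the $\gamma=1$ endpoint is attained when $\kappa\in\{0,\kappa_{\max}\}$ $\pi_i$-almost surely. It is not attained when $\kappa$ concentrates at $\kappa_{\max}$, since that would force every $H_{ii}=\kappa_{\max}$.
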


We defer the detailed proof to {Appendix \ref{app:bounds}}. 



\begin{remark}[Empirical Robustness Beyond Local Convexity]
\label{rem:early_stage_robustness}
Although Theorem~\ref{thm:bounds} is derived under the assumption of positive curvature at generic minima, empirical results (\Cref{fig:metrics_mnist_fc_mse,fig:metrics_cifar_cnn_cse,fig:metrics_mnist_fc_cse}) show that the fitted layerwise or single-subspace exponents remain in $[1,2]$ much earlier, well before convergence. This suggests that noise--curvature alignment is an intrinsic property of SGD, persisting even in training regimes where the Fisher approximation—relying on asymptotic realizability—fails to hold.
\end{remark}

\subsection{Physical Interpretation}
\label{sec:mechanisms}


To provide an intuitive interpretation of the scaling exponent $\gamma$, we examine the conditions under which the theoretical bounds $1 \le \gamma \le 2$ can be realized. Recall from Theorem \ref{thm:spectral_noise} that $\gamma$ {essentially quantifies the relationship between the first moment} ($\mathbf{H}$) {and the second moment} ($\mathbf{C}$) {of the local curvature spectrum.} This relationship is governed by the interplay between the local curvature magnitudes $\kappa_m^{(p)}$ and their projections onto the global eigenbasis $(\bm{u}_m^{(p)} \cdot \bm{v}_i)^2$.
We identify two distinct limiting cases:

\begin{proposition}[Perfect Alignment, $\gamma \to 2$]
\label{prop:gamma2}
Consider an idealized scenario of perfect geometric alignment, where local eigenbases align strictly with the global eigenbasis ($( \bm{u}_m^{(p)} \cdot \bm{v}_i)^2 = \delta_{mi}$), but the curvature magnitudes $\kappa_i^{(p)}$ fluctuate stochastically.
If $\text{Var}_p(\kappa_i^{(p)}) \propto (\mathbb{E}_p[\kappa_i^{(p)}])^2$), then the second moment scales quadratically with the Hessian:
\begin{equation}
    C_{ii} \propto H_{ii}^2 \implies \gamma = 2.
\end{equation}
\end{proposition}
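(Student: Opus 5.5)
Under perfect alignment the plan is to substitute $(\bm u_m^{(p)}\cdot\bm v_i)^2=\delta_{mi}$ into the two spectral expressions and collapse the sums over $m$. By Theorem~\ref{thm:spectral_noise}, to leading order
\[
C_{ii}=\frac{\sigma_w^2}{2B}\,\mathbb E_p\Big[\sum_m (\kappa_m^{(p)})^2(\bm u_m^{(p)}\cdot\bm v_i)^2\Big]=\frac{\sigma_w^2}{2B}\,\mathbb E_p\big[(\kappa_i^{(p)})^2\big].
\]
Using $\mathbf H=\mathbb E_p[\mathbf h_p]$ in the same basis,
\[
H_{ii}=\bm v_i^\top\mathbf H\bm v_i=\mathbb E_p\Big[\sum_m\kappa_m^{(p)}(\bm u_m^{(p)}\cdot\bm v_i)^2\Big]=\mathbb E_p\big[\kappa_i^{(p)}\big].
\]
This is the first-moment companion identity already noted in the discussion of the origin of superlinearity. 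After this step, $C_{ii}$ is proportional to the second moment and $H_{ii}$ to the first moment of the single scalar random variable $\kappa_i^{(p)}$ over samples.

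Next I will use the moment identity
\[
\mathbb E_p[(\kappa_i^{(p)})^2]=\operatorname{Var}_p(\kappa_i^{(p)})+(\mathbb E_p[\kappa_i^{(p)}])^2 .
\]
The hypothesis $\operatorname{Var}_p(\kappa_i^{(p)})=c\,(\mathbb E_p[\kappa_i^{(p)}])^2$ then gives $\mathbb E_p[(\kappa_i^{(p)})^2]=(1+c)H_{ii}^2$. Hence
\[
C_{ii}=\frac{\sigma_w^2(1+c)}{2B}\,H_{ii}^2 .
\]
I read the hypothesis as holding with the same constant $c$ (a constant coefficient of variation) for every retained $i\in I_K$. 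Then the prefactor is independent of $i$, and the log-log fit over the retained directions has slope exactly $\gamma=2$.

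The main obstacle is interpretive rather than technical, and it comes in two parts. First, the proportionality constant in the variance hypothesis must be uniform in $i$. If it varied with $i$, the exponent would acquire corrections, so I will state uniformity explicitly. Second, the relation is only proved up to the $1+\mathcal O(\epsilon_{\rm stat}+\epsilon_{\rm act}+\epsilon_{\rm min})$ factor of Theorem~\ref{thm:spectral_noise}. I will note that these multiplicative errors shift $\log C_{ii}$ by $o(1)$ and so do not change the fitted slope to leading order.

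Finally, I will remark that $c\ge 0$, so the result is consistent with the lower bound $C_{ii}\ge\frac{\sigma_w^2}{2B}H_{ii}^2$ of Theorem~\ref{thm:bounds}. The case $c=0$ (no fluctuation) saturates that bound.
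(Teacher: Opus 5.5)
Your proposal is correct and follows the paper's own proof. Like the paper, you substitute $(\bm u_m^{(p)}\cdot\bm v_i)^2=\delta_{mi}$ to reduce $H_{ii}$ and $C_{ii}$ to the first and second moments of $\kappa_i^{(p)}$, then apply $\mathbb E[X^2]=(\mathbb E X)^2+\operatorname{Var}(X)$ together with the variance hypothesis. Your explicit requirement that the proportionality constant be uniform across the retained $i$ usefully states an assumption the paper leaves implicit.
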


\begin{proposition}[Spectral Degeneracy, $\gamma \to 1$]
\label{prop:gamma1}
Assume the per-sample Hessian spectrum is degenerate and dominated by the top $n$ eigenvalues with approximately the same magnitude independent of $m$, $\kappa_{m}^{(p)} \approx \bar{\kappa}^{(p)} \gg \kappa_\mathrm{rest}, (m \leq n)$. Instead of perfect global alignment, their eigenvectors $\bm{u}_m^{(p)}$ ($m \leq n$) are approximately aligned across different samples $p$ such that $(\bm{u}_m^{(p)} \cdot \bm{v}_i)^2 \approx \delta_{mi}\ $ for $i \leq n$, while $(\bm{u}_m^{(p)} \cdot \bm{v}_i)^2$ remains finite for $i > n$. Then the diagonal elements of Covariance and Hessian can be approximated as:
\vskip -0.2in
\begin{equation}   
    C_{ii} \propto \mu_{\kappa^2} \sum_{m=1}^n \mathbb{E}_p [(\bm{u}_m^{(p)} \cdot \bm{v}_i)^2]=\frac{\mu_{\kappa^2}}{\mu_\kappa}H_{ii},
\end{equation}
where $\mu_{\kappa^2} = \mathbb{E}_p[(\kappa_{m}^{(p)})^2]$ and $\mu_{\kappa} = {\mathbb{E}_p[\kappa_{m}^{(p)}]}$,
implying $\gamma \to 1$.

\end{proposition}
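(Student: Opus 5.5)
The plan is to start from the leading-order expression of Theorem~\ref{thm:spectral_noise} and from the first-moment identity for the Hessian, and then use the degeneracy hypothesis to collapse both to sums over the same projection weights. Write
\[
H_{ii}=\mathbb E_p\Big[\sum_m \kappa_m^{(p)}(\bm u_m^{(p)}\!\cdot\!\bm v_i)^2\Big],\qquad
C_{ii}\simeq\frac{\sigma_w^2}{2B}\,\mathbb E_p\Big[\sum_m (\kappa_m^{(p)})^2(\bm u_m^{(p)}\!\cdot\!\bm v_i)^2\Big].
\]
I will split each sum into the dominant block $m\le n$ and the rest $m>n$. The rest contributes $\mathcal O(\kappa_{\rm rest})$ to $H_{ii}$ and $\mathcal O(\kappa_{\rm rest}^2)$ to $C_{ii}$. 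Since $\sum_m(\bm u_m^{(p)}\!\cdot\!\bm v_i)^2=1$, these terms are negligible relative to the block terms whenever $\bar\kappa^{(p)}\gg\kappa_{\rm rest}$ and the block weight $\sum_{m\le n}(\bm u_m^{(p)}\!\cdot\!\bm v_i)^2$ is not itself of order $\kappa_{\rm rest}/\bar\kappa$. Neglecting them, and using $\kappa_m^{(p)}\approx\bar\kappa^{(p)}$ for $m\le n$, gives
\[
H_{ii}\approx\mathbb E_p\big[\bar\kappa^{(p)}A_i^{(p)}\big],\qquad
C_{ii}\approx\frac{\sigma_w^2}{2B}\mathbb E_p\big[(\bar\kappa^{(p)})^2A_i^{(p)}\big],\qquad
A_i^{(p)}\triangleq\sum_{m\le n}(\bm u_m^{(p)}\!\cdot\!\bm v_i)^2 .
\]

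The key step is to decouple the magnitude from the direction. The hypothesis says the top-$n$ eigenvectors are approximately shared across samples: for $i\le n$, $A_i^{(p)}\approx1$ independently of $p$, and for $i>n$, $A_i^{(p)}$ is a finite overlap whose sample dependence I take to be uncorrelated with $\bar\kappa^{(p)}$. This is exactly the ``magnitude independent of direction'' regime contrasted with CE in the introduction. Under that decorrelation the expectations factorize:
\[
H_{ii}\approx\mu_\kappa\,\mathbb E_p[A_i^{(p)}],\qquad
C_{ii}\propto\mu_{\kappa^2}\,\mathbb E_p[A_i^{(p)}]=\mu_{\kappa^2}\sum_{m=1}^n\mathbb E_p\big[(\bm u_m^{(p)}\!\cdot\!\bm v_i)^2\big],
\]
where $\mu_\kappa,\mu_{\kappa^2}$ are the first and second moments of $\bar\kappa^{(p)}$ over samples. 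Eliminating $\mathbb E_p[A_i^{(p)}]$ gives $C_{ii}\propto(\mu_{\kappa^2}/\mu_\kappa)H_{ii}$. The prefactor does not depend on $i$, so across the retained directions $\log C_{ii}$ is affine in $\log H_{ii}$ with slope $1$, i.e.\ $\gamma\to1$.

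I expect the main obstacle to be justifying the factorization $\mathbb E_p[\bar\kappa^{(p)}A_i^{(p)}]\approx\mathbb E_p[\bar\kappa^{(p)}]\,\mathbb E_p[A_i^{(p)}]$, and its analogue with $(\bar\kappa^{(p)})^2$. This is precisely where $\gamma>1$ would enter: any positive covariance between $\bar\kappa^{(p)}$ and $A_i^{(p)}$ makes the ratio $C_{ii}/H_{ii}$ vary with $i$. I would state this independence explicitly as part of the hypothesis, and bound the deviation from slope one by $\mathrm{Cov}_p(\bar\kappa^{(p)},A_i^{(p)})/(\mu_\kappa\,\mathbb E_p[A_i^{(p)}])$. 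Two further controls are needed. First, the $\kappa_{\rm rest}$ tail must stay subleading also for $i>n$, which holds when $\mathbb E_p[A_i^{(p)}]\gg\kappa_{\rm rest}/\bar\kappa$. Second, the $\mathcal O(\epsilon_{\rm stat}+\epsilon_{\rm act}+\epsilon_{\rm min})$ corrections from Theorem~\ref{thm:spectral_noise} multiply $C_{ii}$ uniformly in $i$, so they perturb the fitted slope only at that order.
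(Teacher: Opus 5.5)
Your proposal is correct and follows essentially the same route as the paper's proof. Both arguments truncate the sums to the dominant block $m\le n$, factorize magnitude and direction (the paper invokes $\mathbb{E}[XY]=\mathbb{E}[X]\,\mathbb{E}[Y]$ under local alignment together with moments that are homogeneous across $m$), and divide to obtain the $i$-independent prefactor $\mu_{\kappa^2}/\mu_\kappa$. Stating the magnitude--direction independence explicitly as a hypothesis, and adding the tail condition $\mathbb{E}_p[A_i^{(p)}]\gg\kappa_{\mathrm{rest}}/\bar\kappa$, are sound sharpenings of steps the paper leaves implicit.
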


We defer the proofs to Appendix \ref{app:scaling_proofs}. In general, the exponent $\gamma$ captures how strongly stochastic curvature fluctuations are aligned with the global geometry of the loss landscape. 



\subsection{Distinction between CE and MSE}

We now try to explain the significant difference in $\gamma$ values between CE and MSE losses as observed in examples shown in Table 1. Empirically, we find that the per-sample hessian spectra are dominated by the first eigenvalue (i.e., $m=n=1$), as illustrated in \Cref{fig:suppres,fig:abl_cifar_mlp_cse,fig:abl_mnist_fc_mse,fig:abl_mnist_fc_cse} (see columns 2 and 3) for both MSE and CE losses. This empirical observation is verified by a ``suppression experiment" as shown in (\cref{fig:abl_mnist_fc_cse,fig:abl_cifar_mlp_cse,fig:abl_mnist_fc_mse}) with details described in Appendix \ref{app:ablation_details}. Briefly, to isolate the impact of the largest first eigenvalues of per-sample hessians, we suppress the magnitude of the rest of the eigenvalues by several orders of magnitude. 
We find that the global Hessian spectrum $H_{ii}$ and Covariance diagonal $C_{ii}$ from the resulting suppressed per-sample hessians hardly change for both CE and MSE losses, as shown in Figure \ref{fig:suppres}  (a,b).

The dominance of the first eigenvalue leads to a simple formula for $C_{ii} \propto \mathbb{E}_p \left[  (\kappa_1^{(p)})^2 (\bm{u}_1^{(p)} \cdot \bm{v}_i)^2 \right]=\mathbb{E}_p \left[  XY \right]$ expressed as the correlation between two stochastic variables of the system: $X \equiv (\kappa_1^{(p)})^2$ -- the magnitude of the first eigenvalue and $Y \equiv (\bm{u}_1^{(p)} \cdot \bm{v}_i)^2$ -- the alignment of the corresponding eigenvector with the eigenvector of $\mathbf{H}$. When $X$ and $Y$ are independent of each other, we have $C_{ii}=\mathbb{E}_p \left[  X\right]\mathbb{E}_p \left[  Y\right] \propto H_{ii}$, which means $\gamma = 1$. If there is a positive correlation between $X$ and $Y$, it can lead to a superlinear dependence with $\gamma > 1$. 

We posit this $X-Y$ correlation depends on the geometry of loss function, thereby driving the distinction between CE with $\gamma > 1$ and MSE with $\gamma \to 1$. To test this hypothesis, we replaced these first eigenvalues with their mean value and found that the resulting $H_{ii}$ and $C_{ii}$ hardly change for models with MSE loss (Figure \ref{fig:suppres} (d)) but undergo significant distortion for CE loss, especially for the case $\gamma \sim 1.4$ (Figure \ref{fig:suppres} (c)), which leads to lowering of the exponent $\gamma \to 1$. We emphasize that while our experiments isolate this correlation as the reason for the distinction between MSE and CE losses, the underlying theoretical mechanism driving this behavior remains open for future work \cite{soudry2018the,lu2021neuralcollapsecrossentropyloss}.

\section{Conclusion and Discussion } \label{sec:concls}

In this paper, we explain the robust positive correlation between the SGD noise covariance ($\mathbf{C}$) and the Hessian ($\mathbf{H}$). Using the recently discovered Activity-Weight Duality (AWD), we show that $\mathbf{C}$ and $\mathbf{H}$ are the second and first moments of the same per-sample Hessian distribution, naturally giving rise to their strong correlation independent of the loss function. In particular, we find that $\mathbf{C}$ and $\mathbf{H}$ approximately commute, implying aligned eigenvectors, while their eigenvalues follow an empirical power law, $C_{ii} \propto H_{ii}^{\gamma}$, with $1 \le \gamma \le 2$ in fully connected layers.

Although our derivation uses the explicit weight perturbation $\Delta \bm{w}$ for fully connected layers, the underlying AWD framework is model-agnostic. As implied by Theorem~\ref{thm:spectral_noise}, $C_{ii}$ depends on $\Delta \bm{w}$ only through an overall prefactor, $\sigma_w^2 \delta_{mn}$, which is insensitive to architectural details. We therefore expect the $\mathbf{C}$--$\mathbf{H}$ relation to extend to other architectures, including convolutional networks and Transformers, though with layer- and model-dependent exponents. Consistent with this expectation, additional multi-layer experiments in Appendix~\ref{app:multilayer_validation} demonstrate that the layerwise power laws remain remarkably robust, with $R^2>0.99$. Moreover, the joint multi-layer covariance is still accurately captured by the AWD second-moment prediction, even though it no longer follows a single clean power law with the joint Hessian.

Unlike the Fisher approximation, our approach directly probes the microscopic structure of per-sample Hessians rather than relying on likelihood-based surrogates. Suppression experiments trace the observed superlinear scaling ($\gamma \sim 1.4$) to the entanglement between curvature eigenvalues and projection directions, clarifying differences between MSE and cross-entropy losses. A key implication is that $\mathbf{C}$ provides a superlinear proxy for Hessian geometry that is more informative than the linear Fisher approximation. In particular, $\gamma>1$ reflects enhanced anisotropic noise along stiff directions, suggesting a possible mechanism for biasing optimization toward flatter regions of the loss landscape.


\bibliographystyle{unsrtnat}
\bibliography{deeplearning}


\appendix

\section{Related Works}
\label{app:RelatedWorks}
\textbf{Flat Minima and Generalization.}
It is widely accepted that SGD tend to converge to flat minima closely related with good generalization, despite the non-convex loss landscape \cite{hochreiterFlatMinima1997,hardttrainfaster2016,keskarLargebatchTrainingDeep2017,zhangUnderstandingDeepLearning2017,arpitAcloser2017,hofferTrainLongerGeneralize2017,Neyshaburexploring2017,yaohessianbased2018}. Specifically, flatness has been shown to be a reliable predictor of generalization \cite{jiangFantasticGeneralizationMeasures2020}, with low-complexity solutions often exhibiting small Hessian norms \cite{wuUnderstandingGeneralizationDeep2017}. While the definition of flatness is debated due to its sensitivity to parameter rescaling \cite{dinhsharp2017}, recent approaches effectively address this via scale-invariant measures or PAC-Bayesian frameworks \cite{DziugComputingNonvacuous2017,MandtStochatic2017,chaudhariStochasticGradientDescent2018,smithBayesianPerspectiveGeneralization2018,tsuzukuNormalizedFlatMinima2020}. Most recently, \cite{fengActivityWeightDuality2023} reveals that two key factors, sharpness and weight norm, act together to determine generalization by using the Activity–Weight Duality relation.

\textbf{SGD Dynamics and its Diffusion Approximation.} SGD optimization behaviors have been extensively studied from various perspectives. Early works investigated the convergence of simple one-hidden-layer neural networks \cite{Liconvergence2017,brutzkussgdlearnsoverparameterizednetworks2017}, then in non-convex setting \cite{daneshmand2018,jin2017,hu2018}. On the other hand, modeling SGD as a continuous-time stochastic process is a powerful theoretical tool \cite{jastrzebskiThreeFactorsInfluencing2018,liStochasticModifiedEquations2017,MandtStochatic2017,wuHowSGDSelects2018,xuglobal2018,hu2018,nguyenFirstExitTime2019,zhuAnisotropicNoiseStochastic2019,xieDiffusionTheoryDeep2021,livaliditymodelingsgdstochastic2021}.
By modeling the evolution of parameter probability densities, research on Stochastic Gradient Langevin Dynamics (SGLD) has analyzed density diffusion under injected isotropic noise \cite{satoSGLD2014,RaginskySGLD2017,zhanghittingtimeanalysisstochastic2018}. Meanwhile, \cite{jastrzebskiThreeFactorsInfluencing2018} studied the minima selection probability of SGD.
\cite{zhuAnisotropicNoiseStochastic2019,xieDiffusionTheoryDeep2021} suggests that the anisotropic diffusion inherent in SGD leads to flatter minima compared to isotropic diffusion. 
Related stability-based views connect the late-stage behavior of SGD to curvature and gradient-noise structure, including dynamical-stability analyses near minima and studies of the maximum Hessian eigenvalue along training trajectories \cite{wuHowSGDSelects2018,Ma2021OnLS,lee2023a,andreyev2025edgestochasticstabilityrevisiting}. Our results complement this line of work by resolving how the first and second moments of per-sample Hessians determine the anisotropic covariance structure in the Hessian basis.

\textbf{SGD Noise Relates to the Hessian.} The Hessian of the Loss Landscape is pivotal in optimization and generalization \cite{ghorbaniInvestigationNeuralNet2019,li2019hessianbasedanalysissgd,Jacot2020The,yaohessianbased2018,dauphinIdentifyingAttackingSaddle2014,kaurMaximumHessianEigenvalue2023}. It is found that the Hessian relates to the gradient covariance matrix, and are both highly anisotropic \cite{sagunEigenvaluesHessianDeep2017,sagunEmpiricalAnalysisHessian2018,xingWalkSGD2018}. Assuming an
equivalence between the Fisher Information Matrix and the Hessian for negative log-likelihood
losses, \cite{jastrzebskiThreeFactorsInfluencing2018,zhuAnisotropicNoiseStochastic2019,xieDiffusionTheoryDeep2021} argued that $\mathbf{C}$ is proportional to $\mathbf{H}$ at the ``true parameter".
Meanwhile, empirical studies consistently reveal that the Covariance and Hessian spectrum follows a ``bulk-and-outlier" structure: a massive bulk of near-zero eigenvalues and a small number of large outliers \cite{sagunEigenvaluesHessianDeep2017,sagunEmpiricalAnalysisHessian2018,wuUnderstandingGeneralizationDeep2017,papyanFullSpectrumDeepnet2019,papyan2019measurementsthreelevelhierarchicalstructure}. Recent studies analyzed the heavy-tailed structure of the SGD noise covariance and the local Hessian and discovered a Power law structure\cite{xieOverlookedStructureStochastic2023,tang2025investigating}.

\begin{table}[hbtp] 
\caption{Comparison of the scaling exponent $\gamma$ between empirical measurements ($\gamma_{\textnormal{emp}}$) and AWD-based theoretical predictions ($\gamma_{\textnormal{AWD}}$). Experiments were conducted on MNIST and CIFAR-10 datasets ($\mathcal{C}$ denotes the number of classes used) using MLP and CNN architectures. Values represent the mean $\pm$ standard deviation over 4 independent runs.}
\label{tab:gamma_comparison}
  \begin{center}
    \begin{small}
      \begin{sc}
\resizebox{\linewidth}{!}{
\begin{tabular}{llc cc cc}
\toprule
& & & \multicolumn{2}{c}{\textbf{MSE}} & \multicolumn{2}{c}{\textbf{CE}} \\ 
\cmidrule(lr){4-5} \cmidrule(lr){6-7}
\textbf{Data} & \textbf{Model} & $\mathcal{C}$ & $\gamma_{\textnormal{emp}}$ & $\gamma_{\textnormal{AWD}}$ & $\gamma_{\textnormal{emp}}$ & $\gamma_{\textnormal{AWD}}$ \\
\midrule
\multirow{3}{*}{MNIST} & \multirow{3}{*}{MLP} 
  & 3  & 1.07$\pm$0.02 & 1.14 $\pm$0.02 & 1.22$\pm$0.01 & 1.24$\pm$0.01\\
& & 6  & 1.01$\pm$0.01 & 1.08 $\pm$0.01 & 1.19$\pm$0.01 & 1.19$\pm$0.01\\
& & 10 & 0.96$\pm$0.01 & 1.03 $\pm$0.01 & 1.14$\pm$0.01 & 1.14$\pm$0.01 \\
\cmidrule{2-7}
& \multirow{3}{*}{CNN} 
  & 3  & 0.99$\pm$0.01 & 1.05 $\pm$0.02 & 1.38$\pm$0.05 & 1.33$\pm$0.03 \\
& & 6  & 1.03$\pm$0.01 & 1.06 $\pm$0.01 & 1.33$\pm$0.01 & 1.31$\pm$0.01 \\
& & 10 & 0.98$\pm$0.01 & 1.05 $\pm$0.01 & 1.31$\pm$0.02 & 1.29$\pm$0.02 \\
\midrule
\multirow{3}{*}{CIFAR-10} & \multirow{3}{*}{MLP} 
  & 3  & 1.12$\pm$0.04 & 1.17 $\pm$0.03 & 1.27$\pm$0.04 & 1.27$\pm$0.04 \\
& & 6  & 1.03$\pm$0.01 & 1.08 $\pm$0.02 & 1.43$\pm$0.01 & 1.43$\pm$0.01 \\
& & 10 & 1.00$\pm$0.02 & 1.03 $\pm$0.02 & 1.51$\pm$0.03 & 1.47$\pm$0.03 \\
\cmidrule{2-7}
& \multirow{3}{*}{CNN} 
  & 3  & 1.04$\pm$0.04 & 1.09 $\pm$0.03 & 1.10$\pm$0.01& 1.11$\pm$0.01 \\
& & 6  & 1.00$\pm$0.01 & 1.01 $\pm$0.01 & 1.08$\pm$0.02& 1.08$\pm$0.01 \\
& & 10 & 1.01$\pm$0.01 & 1.02 $\pm$0.01 & 1.08$\pm$0.01& 1.08$\pm$0.01\\
\bottomrule
\end{tabular}} 
      \end{sc}
    \end{small}
  \end{center}
  \vskip -0.1in
\end{table}

\section{Experimental Setup}
\label{sec:expset}

To ensure the reproducibility of our results and facilitate further research, the code is open sourced at https://anonymous.4open.science/r/AWCH-1A6A.

\subsection{Architecture Details}

Unless otherwise noted, our mainline models were trained on the MNIST dataset (or its fixed subset) \cite{lecun1998mnist} and the CIFAR-10 dataset (or its fixed subset) \cite{krizhevsky2009learning}. We used vanilla Stochastic Gradient Descent without extra Regularization. All models were trained to convergence with $100\%$ training accuracy for cross-entropy (CE) or $>95\%$ training accuracy for mean squared error (MSE) loss. To obtain stable hessian spectra, we used a $\textit{Softmax}$ layer before the mean squared error.

\subsubsection{MLP}

The MLP models on MNSIT dataset consist of two hidden layers of width 50 ($784 \rightarrow 50 \rightarrow 50 \rightarrow 10$), employing ReLU activations without bias. We focus on the specific weight matrix connecting the two hidden layers, treating it as a flattened parameter vector of dimension $D = 50 \times 50 =2500$ and calculate the Covariance $\textbf{C}$ and the Hessian $\textbf{H}$ in this subspace at each checkpoint.

For the multi-layer MNIST MLP used in \Cref{fig:loglog}(c) and Appendix~\ref{app:multilayer_validation}, we use a deeper bias-free ReLU network with hidden widths $[50,50,50,50]$, i.e.,
$784\rightarrow 50\rightarrow 50\rightarrow 50\rightarrow 50\rightarrow 10$.
The model is trained on MNIST with CE loss to $100\%$ training accuracy using the same vanilla-SGD protocol as the other MNIST MLP experiments. For the layer-wise analysis, we compute $\mathbf C$ and $\mathbf H$ separately on the four hidden weight matrices indexed from the input side to the output side. For the joint multi-layer analysis, these analyzed weight matrices are concatenated into a single parameter vector before computing the joint covariance, Hessian, and AWD-derived covariance.

The MLP models on CIFAR-10 dataset consist of three hidden layers ($3072 \rightarrow 1000 \rightarrow 50 \rightarrow 50 \rightarrow 10$), employing ReLU activations without bias. We focus on the specific weight matrix connecting the last two hidden layers, treating it as a flattened parameter vector of dimension $D = 50 \times 50 =2500$ and calculate the Covariance $\textbf{C}$ and the Hessian $\textbf{H}$ in this subspace at each checkpoint.

\subsubsection{CNN}

The CNN models on MNSIT and CIFAR-10 datasets share the same architecture (with adaptive input channel), of which the config is given by $ \left [32, M, 64, M, 128, 128, M\right ]$, where the number means the channel depth (with kernel size 3 and padding 1) and $M$ represents the $\textit{Max pooling}$ (with kernel size 2 and stride 2), followed by the adaptive average pooling with out put dimension $128 \times 1 \times 1$. These feature maps are flattened into a vector of dimension 128 and fed into a classifier consisting of a hidden fully connected layer and an out put layer ($ 128 \rightarrow 20 \rightarrow 10$). We use the ReLu activation without bias. To isolated the noise from mini batch sampling, we do not use the Batch Normalization. We focus on the specific weight matrix connecting the features and the hidden layer, treating it as a flattened parameter vector of dimension $D = 128 \times 20 =2560$ and calculate the Covariance $\textbf{C}$ and the Hessian $\textbf{H}$ in this subspace at each checkpoint.

\subsection{Visualizations Details}
In this section, we detail the experimental setup used to generate the presented visualizations including the figures and table.
\subsubsection{Figures Details}

Unless otherwise noted, the CNN model presented in the figures was trained on a fixed, balanced subset of the CIFAR-10 dataset. This subset comprises 2,000 examples per class, totaling 20,000 training samples. The model was trained using the Cross-Entropy loss function for 100 epochs, achieving convergence with $100\%$ training accuracy. Optimization was performed using vanilla SGD with a batch size of $B=128$ and a learning rate of $\eta = 0.1$.

The MLP models presented in the figures were trained on a fixed, balanced subset of the MNIST dataset, consisting of 2,000 examples per class for a total of 20,000 training samples. Models were trained for 100 epochs using vanilla SGD with a batch size of $B=50$ and a learning rate of $\eta = 0.1$. Under these settings, model minimized with Cross-Entropy (CE) loss converged to $100\%$ training accuracy, while that minimizing Mean Squared Error (MSE) loss achieved $>95\%$ training accuracy.

\subsubsection{Table Details}
\label{app:tableD}
Detailed training hyperparameters for the results in Table \ref{tab:gamma_comparison} are provided below. All models were optimized using vanilla SGD with a learning rate of $\eta = 0.1$. For each hyperparameter configuration, we performed 4 independent runs with distinct random seeds. We computed $\gamma$ after a fixed number of training epochs. The number of top eigenvalues ($\mathcal{N}$) selected for computation varied based on the number of classes $\mathcal{C} \in \{3, 6, 10\}$ included in the training subset. In Table \ref{tab:gamma'_comparison}, we show the $\gamma'_{\text{AWD}}$ (via Eq.\ref{eq:awd_appro}), which explicitly accounts for the gradient term $\mathbf{C}^{hg}$ and $\mathbf{C}^{gg}$, a slightly better approximation of $\gamma_{\text{emp}}$.

\begin{table}[h]
\centering
\caption{Hyperparameters for models presented in Table \ref{tab:gamma_comparison}. $N_{data}$ denotes total samples per class.}
\label{tab:hyperparams}
\begin{small}
\begin{sc}
\begin{tabular}{lcccccccc}
\toprule
 & & \multicolumn{3}{c}{\textbf{Training Setup}} & \multicolumn{3}{c}{\textbf{Eigenvalues ($\mathcal{N}$)}} \\
\cmidrule(lr){3-5} \cmidrule(lr){6-8}
\textbf{Model} & \textbf{Loss} & $N_{data}$ & Batch ($B$) & Epochs & $\mathcal{C}=3$ & $\mathcal{C}=6$ & $\mathcal{C}=10$ \\
\midrule
\textbf{MNIST} & & & & & & & \\
MLP & CE & 2,000 & 50 & 100 & 300 & 1,000 & 1,000 \\
MLP & MSE & 2,000 & 50 & 100 & 300 & 1,000 & 1,000 \\
CNN & CE & 2,000 & 50 & 100 & 200 & 500 & 1,000 \\
CNN & MSE & 5,000 & 128 & 100 & 200 & 300 & 800 \\
\midrule
\textbf{CIFAR-10} & & & & & & & \\
MLP & CE & 2,000 & 100 & 150 & 800 & 1,500 & 1,500 \\
MLP & MSE & 5,000 & 100 & 100 & 500 & 1,000 & 1,000 \\
CNN & CE & 2,000\textsuperscript{*} & 128 & 100 & 500 & 1,000 & 1,500 \\
CNN & MSE & 5,000 & 128 & 500 & 500 & 500 & 1,000 \\
\bottomrule
\end{tabular}
\end{sc}
\end{small}
\raggedright \footnotesize{\textsuperscript{*}For the 3-class case, $N_{data}=5,000$.}
\end{table}

\begin{table}[t] 
\caption{\textbf{Verification of Activity-Weight Duality (AWD) across architectures and complexities near the global minimum for MSE loss.} The comparison covers different model architectures (MLP, CNN), datasets, and task complexities (number of classes $\mathcal{C}$). For each hyperparameter configuration, we repeated the experiment 4 times and report the mean and standard deviation.}
\label{tab:gamma'_comparison}

\centering
\small 
\setlength{\tabcolsep}{4pt} 
\begin{tabular}{llc ccc}
\toprule
& & & \multicolumn{2}{c}{\textbf{MSE}} \\ 
\cmidrule(lr){4-6}
\textbf{Data} & \textbf{Model} & $\mathcal{C}$ & $\bm{\gamma_{\text{emp}}}$ & $\bm{\gamma_{\text{AWD}}}$ & $\bm{\gamma'_{\text{AWD}}}$ \\
\midrule
\multirow{3}{*}{MNIST} & \multirow{3}{*}{MLP}  
  & 3    & 1.07$\pm$0.02 & 1.14 $\pm$0.02 &  1.07$\pm$0.02\\
& & 6    & 1.01$\pm$0.01 & 1.08 $\pm$0.01 &  1.03$\pm$0.01\\
& & 10   & 0.96$\pm$0.01 & 1.03 $\pm$0.01 &  1.00$\pm$0.01\\
\cmidrule{2-6}
& \multirow{3}{*}{CNN}  
  & 3    & 0.99$\pm$0.01 & 1.05 $\pm$0.02 &  1.02$\pm$0.02\\
& & 6    & 1.03$\pm$0.01 & 1.06 $\pm$0.01 &  1.05$\pm$0.01\\
& & 10   & 0.98$\pm$0.01 & 1.05 $\pm$0.01 &  1.03$\pm$0.01\\
\midrule
\multirow{3}{*}{CIFAR-10} & \multirow{3}{*}{MLP}  
  & 3    & 1.12$\pm$0.04 & 1.17 $\pm$0.03 &  1.09$\pm$0.03\\
& & 6    & 1.03$\pm$0.01 & 1.08 $\pm$0.02 &  1.06$\pm$0.02\\
& & 10   & 1.00$\pm$0.02 & 1.03 $\pm$0.02 &  1.03$\pm$0.02\\
\cmidrule{2-6}
& \multirow{3}{*}{CNN}  
  & 3    & 1.04$\pm$0.04 & 1.09 $\pm$0.03 &  1.06$\pm$0.03\\
& & 6    & 1.00$\pm$0.01 & 1.01 $\pm$0.01 &  0.97$\pm$0.01\\
& & 10   & 1.01$\pm$0.01 & 1.02 $\pm$0.01 &  0.97$\pm$0.01\\
\bottomrule
\end{tabular}

\vskip -0.1in
\end{table}
\subsection{Ablation Experiments Details}
\label{app:ablation_details}

In this section, we provide the precise algorithmic details for the ``suppression experiment" ablation study presented in Section \ref{sec:mechanisms}. 

\subsubsection{Experimental Procedure}

The procedure consists of three stages: per-sample spectral decomposition, intervention (filtering and homogenization), and global reconstruction.

\textbf{1. Per-Sample Decomposition.}
For a given mini-batch of size $B$, we compute the per-sample Hessian $\mathbf{h}_p$ for each sample $p \in \{1, \dots, B\}$. We perform a full eigen-decomposition for each sample:
\begin{equation}
    \mathbf{h}_p = \sum_{m=1}^D \kappa_m^{(p)} \bm{u}_m^{(p)} (\bm{u}_m^{(p)})^\top,
\end{equation}
where eigenvalues are sorted in descending order $\kappa_1^{(p)} \ge \kappa_2^{(p)} \ge \dots$. Since the Hessian spectrum is heavily dominated by the top mode, we focus our intervention on the principal eigenvalue $\kappa_1^{(p)}$.

\textbf{2. Identification of Stiff Samples.}
We set a relative threshold $\tau$ based on a percentage $\theta$ (e.g., $10\%$) of the maximum principal eigenvalue observed in the batch:
\begin{equation}
    \tau = \theta \cdot \max_{p \in \{1 \dots B\}} \kappa_1^{(p)}.
\end{equation}
The set of stiff samples is defined as $\mathcal{S}_\text{stiff} = \{ p \mid \kappa_1^{(p)} > \tau \}$.

\textbf{3. Homogenization and Suppression.}
We construct a modified curvature profile $\tilde{\kappa}_m^{(p)}$ for each sample:
\begin{itemize}
    \item \textbf{Stiff Sample Homogenization:} For stiff samples ($p \in \mathcal{S}_\text{stiff}$), we replace their dominant eigenvalue $\kappa_1^{(p)}$ with the group mean $\bar{\kappa}_\text{stiff}$ to remove magnitude variance. To ensure rank-1 dominance, we suppress their tail modes ($m > 1$) by a factor of $\epsilon_{tail} = 10^{-5}$.
    \begin{equation}
        \bar{\kappa}_\text{stiff} = \frac{1}{|\mathcal{S}_\text{stiff}|} \sum_{p \in \mathcal{S}_\text{stiff}} \kappa_1^{(p)}.
    \end{equation}
    \item \textbf{Background Suppression:} For non-stiff samples ($p \notin \mathcal{S}_\text{stiff}$), we suppress their rank-1 still modes by a factor of $\epsilon_{bg} = 10^{-3}$ and their tail modes ($m > 1$) by a factor of $\epsilon_{tail} = 10^{-5},$ treating them as background noise relative to the stiff samples.
\end{itemize}
Formally, the intervened eigenvalues $\tilde{\kappa}_m^{(p)}$ are set as:
\begin{equation}
    \tilde{\kappa}_m^{(p)} = 
    \begin{cases} 
        \bar{\kappa}_\text{stiff} & \text{if } p \in \mathcal{S}_\text{stiff} \text{ and } m=1, \\
        10^{-5} \cdot \kappa_m^{(p)} & \text{if }  m > 1, \\
        10^{-3} \cdot \kappa_m^{(p)} & \text{if } p \notin \mathcal{S}_\text{stiff} \text{ and } m=1 
    \end{cases}
\end{equation}

\textbf{4. Global Reconstruction.}
Finally, we reconstruct the global Hessian $\tilde{\mathbf{H}}$ using the modified spectral components:
\begin{equation}
    \tilde{H}_{ii} = \frac{1}{B} \sum_{p=1}^B \sum_{m=1}^D \tilde{\kappa}_m^{(p)} \left( \bm{u}_m^{(p)} \cdot \bm{v}_i \right)^2.
\end{equation}
We then compute the eigenvalues of this reconstructed global matrix $\tilde{\mathbf{H}}$ and compare its spectrum to the original $\mathbf{H}$. We also reconstruct the covariance $2\tilde{\mathbf{C}}_{ii}/\sigma_w^2 =  \frac{1}{B} \sum_{p=1}^B \sum_{m=1}^D (\tilde{\kappa}_m^{(p)})^2 \left( \bm{u}_m^{(p)} \cdot \bm{v}_i \right)^2 $ similarly.

\section{Derivation of the Random Baseline for Scale-Invariant Alignment}
\label{app:random_baseline}

In this section, we derive the theoretical expected value of the scale-invariant off-diagonal coupling under the null hypothesis of a random basis. This serves as a baseline to evaluate the statistical significance of the observed alignment between the noise covariance matrix $\mathbf{C}$ and the Hessian $\mathbf{H}$.

\subsection{Problem Setup}

Let $\mathbf{C} \in \mathbb{R}^{D \times D}$ be a covariance matrix. We analyze its structure in a specific basis (e.g., the Hessian eigenbasis). The scale-invariant correlation matrix $\mathbf{R}$ is defined as:
\begin{equation}
    R_{ij} = \frac{C_{ij}}{\sqrt{|C_{ii} C_{jj}|}}.
\end{equation}
We define the \textit{mean off-diagonal coupling} metric as $\mu_{\text{coupling}} = \mathbb{E}[|R_{ij}|]$ for $i \neq j$.

To establish a random baseline, we assume the \textbf{Null Hypothesis ($H_0$)}: The eigenvectors of $\mathbf{C}$ are random and uniformly distributed on the orthogonal group $O(D)$ (Haar measure), having no structural alignment with the basis of observation.

\subsection{The Spiked Spectrum Model}

Deep learning optimization landscapes often exhibit a ``heavy-tailed'' or ``spiked'' spectrum. We model the spectrum of $\mathbf{C}$ using a simplified \textit{Spiked Covariance Model}:
\begin{itemize}
    \item \textbf{Dominant Subspace:} The top $M$ eigenvalues ($\lambda_i\equiv H_{ii}$) contain the majority of the energy: $\lambda_1 \approx \dots \approx \lambda_M \approx \sigma^2$.
    \item \textbf{Bulk/Tail:} The remaining $D-M$ eigenvalues are negligible: $\lambda_{M+1} \approx \dots \approx \lambda_D \approx 0$.
    \item \textbf{Sparsity:} $M \ll D$.
\end{itemize}

Under the random basis assumption, $\mathbf{C}$ can be expressed as:
\begin{equation}
    \mathbf{C} = \mathbf{Q} \mathbf{\Lambda} \mathbf{Q}^\top,
\end{equation}
where $\mathbf{\Lambda} = \text{diag}(\lambda_k)$ and $\mathbf{Q}$ is a random orthogonal matrix. The elements of $\mathbf{C}$ are given by:
\begin{equation}
    C_{ij} = \sum_{k=1}^D \lambda_k Q_{ik} Q_{jk} \approx \sigma^2 \sum_{k=1}^M Q_{ik} Q_{jk}.
\end{equation}

\subsection{Statistical Approximations in High Dimensions}

For large $D$, the entries of a random orthogonal matrix $\mathbf{Q}$ can be approximated as independent Gaussian variables:
\begin{equation}
    Q_{ik} \sim \mathcal{N}\left(0, \frac{1}{D}\right).
\end{equation}

By using the approximation $\mathbb{E}[|C_{ij}| / \sqrt{C_{ii}C_{jj}}] \approx \mathbb{E}[|C_{ij}|] / \mathbb{E}[\sqrt{C_{ii}C_{jj}}]$, justified by the concentration of measure phenomenon in high-dimensional spaces. The denominator, representing the diagonal energy, is a sum of $M$ independent squared variables, which concentrates sharply around its mean. Thus, it can be treated as a constant scalar relative to the zero-mean fluctuations of the numerator.

We now analyze the numerator and denominator of $|R_{ij}|$ separately. 

\paragraph{1. The Denominator (Diagonal Elements).}
The diagonal elements represent the energy projected onto each dimension:
\begin{equation}
    C_{ii} \approx \sigma^2 \sum_{k=1}^M Q_{ik}^2.
\end{equation}
Since $Q_{ik}^2$ follows a scaled Chi-squared distribution with 1 degree of freedom, its expectation is $\mathbb{E}[Q_{ik}^2] = \frac{1}{D}$. By the Law of Large Numbers (since $M$ is sufficiently large but $M \ll D$), the sum concentrates around its mean:
\begin{equation}
    C_{ii} \approx \mathbb{E}[C_{ii}] = \sigma^2 \cdot M \cdot \frac{1}{D} = \frac{M \sigma^2}{D}.
\end{equation}
Thus, the normalization factor in the denominator is approximately:
\begin{equation}
    \text{Denom} = \sqrt{C_{ii} C_{jj}} \approx \frac{M \sigma^2}{D}.
\end{equation}

\paragraph{2. The Numerator (Off-Diagonal Elements).}
For $i \neq j$, the off-diagonal element is a sum of products:
\begin{equation}
    C_{ij} \approx \sigma^2 \sum_{k=1}^M Q_{ik} Q_{jk}.
\end{equation}
Let $X_k = Q_{ik} Q_{jk}$. Since $Q_{ik}$ and $Q_{jk}$ are independent (for $i \neq j$), $\mathbb{E}[X_k] = 0$. The variance of each term is:
\begin{equation}
    \text{Var}(X_k) = \mathbb{E}[Q_{ik}^2] \mathbb{E}[Q_{jk}^2] = \frac{1}{D} \cdot \frac{1}{D} = \frac{1}{D^2}.
\end{equation}
By the Central Limit Theorem (CLT), the sum $C_{ij}$ follows a Gaussian distribution:
\begin{equation}
    C_{ij} \sim \mathcal{N}\left(0, \sigma^4 \cdot M \cdot \frac{1}{D^2}\right).
\end{equation}
We are interested in the expected absolute value $\mathbb{E}[|C_{ij}|]$. For a generic Gaussian variable $Z \sim \mathcal{N}(0, s^2)$, the mean absolute deviation is $\mathbb{E}[|Z|] = \sqrt{\frac{2}{\pi}} s$. Therefore:
\begin{equation}
    \mathbb{E}[|C_{ij}|] \approx \sqrt{\frac{2}{\pi}} \cdot \sqrt{\frac{M \sigma^4}{D^2}} = \sqrt{\frac{2}{\pi}} \frac{\sqrt{M} \sigma^2}{D}.
\end{equation}

\subsection{Theoretical Baseline Result}

Combining the approximations for the numerator and denominator, the scaling factors $\sigma^2$ and $D$ cancel out, demonstrating the scale-invariant nature of the metric:

\begin{align}
    \mathbb{E}[|R_{ij}|] &\approx \frac{\mathbb{E}[|C_{ij}|]}{\mathbb{E}[\sqrt{C_{ii} C_{jj}}]} \nonumber \\
    &\approx \frac{\sqrt{\frac{2}{\pi}} \frac{\sqrt{M} \sigma^2}{D}}{\frac{M \sigma^2}{D}}.
\end{align}

Simplifying the expression yields the final theoretical baseline:

\begin{equation}
    \label{eq:random_baseline_final}
    \mathbb{E}[|R_{ij}|] \approx \sqrt{\frac{2}{\pi}} \frac{1}{\sqrt{M}} \approx \frac{0.8}{\sqrt{M}}.
\end{equation}

\subsection{Interpretation}

This derivation leads to two key insights:
\begin{enumerate}
    \item \textbf{Effective Dimension Collapse:} In a spiked model, the ``random noise floor'' is determined not by the total dimension $N$, but by the effective dimension (number of spikes) $M$.
    \item \textbf{High Background Noise:} For a typical deep learning setting where the spectrum is sharp (e.g., $M \approx 20-50$), the random baseline is relatively high ($0.11 - 0.18$).
\end{enumerate}

Therefore, an observed coupling value significantly lower than $\frac{0.8}{\sqrt{M}}$ indicates a \textbf{statistically significant structural alignment} that effectively suppresses the geometric  ``leakage'' expected from random projections.

\section{Derivatives and Proofs}
\subsection{Proof of Lemma \ref{lem:awd_solution} \cite{fengActivityWeightDuality2023}}
\label{proof:AWD}

\begin{proof}
The constraint $(W + \Delta W)\bm{a} = W(\bm{a} + \Delta \bm{a})$ simplifies to the linear equation $\Delta W \bm{a} = W \Delta \bm{a}$.
Since the Frobenius norm $\|\Delta W\|_F^2 = \sum_{j=1}^{H_2} \|\Delta \bm{w}_j\|^2$ is separable across rows, we can solve the optimization problem independently for each row $j$. Let $\Delta \bm{w}_j^\top$ and $\bm{w}_j^\top$ denote the $j$-th rows of $\Delta W$ and $W$, respectively.
The constraint for the $j$-th output neuron becomes:
\begin{equation}
    \Delta \bm{w}_j^\top \bm{a} = \bm{w}_j^\top \Delta \bm{a}.
\end{equation}
To find the minimum norm solution for $\Delta \bm{w}_j$, we construct the Lagrangian:
\begin{equation}
    L_j = \|\Delta \bm{w}_j\|^2 - \lambda_j (\Delta \bm{w}_j^\top \bm{a} - C_j),
\end{equation}
where $C_j = \bm{w}_j^\top \Delta \bm{a}$ is a scalar constant determined by the specific row weights and activity perturbation.
Solving $\nabla_{\Delta \bm{w}_j} L_j = 0$ yields $\Delta \bm{w}_j = \frac{\lambda_j}{2} \bm{a}$. Substituting this back into the constraint gives $\frac{\lambda_j}{2} \|\bm{a}\|^2 = C_j$, which implies $\frac{\lambda_j}{2} = \frac{C_j}{\|\bm{a}\|^2}$.
Thus, the optimal row vector is:
\begin{equation}
    \Delta \bm{w}_j = \frac{\bm{w}_j^\top \Delta \bm{a}}{\|\bm{a}\|^2} \bm{a}.
\end{equation}
In element-wise notation, the $i$-th component of the $j$-th row is $\Delta W_{ji} = \frac{a_i}{\|\bm{a}\|^2} \sum_k W_{jk} \Delta a_k$. Stacking these rows back into a matrix yields the outer product form $\Delta W = \frac{(W \Delta \bm{a}) \bm{a}^\top}{\|\bm{a}\|^2}$.
\end{proof}

{
\subsection{AWD Remainder Analysis and Spectral Covariance Decomposition}
\label{app:derivations}

In this appendix, we provide the mathematical derivation of the relationship between the SGD noise covariance and the spectral properties of the per-sample Hessian. We proceed by decomposing the gradient noise via Activity-Weight Duality (AWD), controlling the explicit remainders, transforming the leading term into the eigenbasis of the global Hessian, and using the spectral decomposition of individual samples to obtain the final result.

\label{app:awd_approx}

\subsubsection{Proof of Lemma~\ref{lem:grad_approx} and Per-Sample Remainder Control}

This subsection is organized to mirror Section~\ref{sec:grad_approx}. We first prove the exact decomposition in Lemma~\ref{lem:grad_approx}, then isolate the raw AWD approximation used in the empirical diagnostics, and finally provide the sufficient norm-level and eigendirection-level remainder controls that justify the main-text leading-order discussion.

Consider a matched pair $(p,p')$ with $p \in \mathcal{B}_{\mu}$ and $p' \in \mathcal{B}_{\nu}$. Under the AWD mapping, the loss of $p'$ is represented by evaluating the loss of $p$ at the perturbed weight:
\begin{equation}
    \ell_{p'}(\bm w)
    =
    \ell_p
    \left(
        \bm w+\Delta \bm w_p^{\mu\nu}(\bm w)
    \right).
\end{equation}
Taking the total derivative with respect to $\bm w$ gives
\begin{equation}
\begin{aligned}
    \nabla \ell_{p'}(\bm w)
    &=
    \left(
        \mathbf I
        +
        \mathbf J_{\Delta,p}^{\mu\nu}
    \right)^\top
    \nabla \ell_p
    \left(
        \bm w+\Delta \bm w_p^{\mu\nu}
    \right).
\end{aligned}
\end{equation}
Applying the first-order Taylor expansion of the gradient around $\bm w$ yields
\begin{equation}
    \nabla \ell_p
    \left(
        \bm w+\Delta \bm w_p^{\mu\nu}
    \right)
    =
    \nabla \ell_p(\bm w)
    +
    \mathbf h_p(\bm w)\Delta \bm w_p^{\mu\nu}
    +
    \bm r_{T,p}^{\mu\nu},
\end{equation}
where
\begin{equation}
    \|
    \bm r_{T,p}^{\mu\nu}
    \|
    =
    \mathcal O
    \left(
        \|
        \Delta \bm w_p^{\mu\nu}
        \|^2
    \right).
\end{equation}
Substituting this expansion into the total derivative formula and subtracting $\nabla \ell_p(\bm w)$ from both sides, we obtain
\begin{equation}
\begin{aligned}
    \nabla \ell_{p'}(\bm w)
    -
    \nabla \ell_p(\bm w)
    &=
    \mathbf h_p(\bm w)\Delta \bm w_p^{\mu\nu}
    +
    (\mathbf J_{\Delta,p}^{\mu\nu})^\top
    \nabla \ell_p(\bm w)
    \\
    &\quad
    +
    (\mathbf J_{\Delta,p}^{\mu\nu})^\top
    \mathbf h_p(\bm w)\Delta \bm w_p^{\mu\nu}
    +
    (\mathbf I+\mathbf J_{\Delta,p}^{\mu\nu})^\top
    \bm r_{T,p}^{\mu\nu}.
\end{aligned}
\end{equation}
Therefore, averaging over the matched pairs in $\mathcal{B}_{\mu}$ gives
\begin{equation}
    \bm g_{\mu\nu}
    =
    \frac1B
    \sum_{p\in\mathcal B_\mu}
    \left(
        \mathbf h_p(\bm w)\Delta \bm w_p^{\mu\nu}
        +
        \bm \rho_p^{\mu\nu}
    \right),
\end{equation}
with $\bm \rho_p^{\mu\nu}$ exactly as defined in Eq.~\eqref{eq:grad_remainder}. This proves Eq.~\eqref{eq:grad_diff_decomposition}.

The Jacobian estimate in Lemma~\ref{lem:grad_approx} follows directly from:
\begin{equation}
\begin{aligned}
    \|
    \mathbf J_{\Delta,p}^{\mu\nu}
    \|
    &=
    \left\|
        \frac{
            \bm a_p \Delta \bm a_p^\top
            \otimes
            \mathbf I_{d_{\rm out}}
        }{
            \|\bm a_p\|^2
        }
    \right\|
    \\
    &=
    \frac{
        \|
        \bm a_p \Delta \bm a_p^\top
        \|
    }{
        \|\bm a_p\|^2
    }
    =
    \frac{
        \|\Delta \bm a_p\|
    }{
        \|\bm a_p\|
    },
\end{aligned}
\end{equation}
where we used $\|A \otimes I\| = \|A\|$ and $\|\bm a \bm b^\top\| = \|\bm a\|\,\|\bm b\|$ for the operator norm.

\paragraph{Auxiliary gradient decomposition for empirical diagnostics.}
For the remainder analysis below, it is convenient to separate the exact remainder into the explicit gradient-coupling correction
\begin{equation}
    \bm \rho_{p,\mathrm{gc}}^{\mu\nu}
    \triangleq
    (\mathbf J_{\Delta,p}^{\mu\nu})^\top
    \nabla \ell_p(\bm w)
\end{equation}
and the higher-order part
\begin{equation}
    \bm \rho_{p,\mathrm{ho}}^{\mu\nu}
    \triangleq
    (\mathbf J_{\Delta,p}^{\mu\nu})^\top
    \mathbf h_p(\bm w)\Delta \bm w_p^{\mu\nu}
    +
    (\mathbf I+\mathbf J_{\Delta,p}^{\mu\nu})^\top
    \bm r_{T,p}^{\mu\nu}.
\end{equation}
Retaining the explicit gradient-coupling term while isolating the higher-order correction defines the raw AWD proxy
\begin{equation}
\label{eq:grad_expansionapp}
    \bm g_{\mu\nu}^{\rm raw}
    \triangleq
    \frac1B
    \sum_{p\in\mathcal B_\mu}
    \left[
        \underbrace{
            \mathbf h_p(\bm w)\Delta \bm w_p^{\mu\nu}
        }_{\text{Term I}}
        +
        \underbrace{
            (\mathbf J_{\Delta,p}^{\mu\nu})^\top
            \nabla \ell_p(\bm w)
        }_{\text{Term II}}
    \right].
\end{equation}
The exact decomposition is therefore
\begin{equation}
    \bm g_{\mu\nu}
    =
    \bm g_{\mu\nu}^{\rm raw}
    +
    \frac1B
    \sum_{p\in\mathcal B_\mu}
    \bm \rho_{p,\mathrm{ho}}^{\mu\nu}.
\end{equation}
The corresponding raw AWD covariance proxy is defined by
\begin{equation}
\label{eq:awd_appro}
   \mathbf{C}_{\text{AWD,raw}}
   \triangleq
   \frac{1}{2}
   \mathbb{E}_{\mu, \nu}
   \left[
       \bm g_{\mu\nu}^{\rm raw}
       (\bm g_{\mu\nu}^{\rm raw})^\top
   \right]
   =
   \mathbf{C}^{hh} + \mathbf{C}^{hg} + \mathbf{C}^{gg},
\end{equation}
where $\mathbf{C}^{hh}$ is the pure Hessian-weight contribution, $\mathbf{C}^{hg}$ contains the cross terms, and $\mathbf{C}^{gg}$ is the pure gradient-coupling contribution.
These quantities play two distinct roles below. Mathematically, $\bm \rho_{p,\mathrm{gc}}^{\mu\nu}$ and $\bm \rho_{p,\mathrm{ho}}^{\mu\nu}$ separate the exact correction into a gradient-coupling part and a higher-order part. Empirically, $\mathbf{C}_{\text{AWD,raw}}$, $\mathbf{C}^{hh}$, $\mathbf{C}^{hg}$, and $\mathbf{C}^{gg}$ define the diagnostic covariance proxies used to assess the quality of the approximation.

\paragraph{Sufficient conditions for per-sample remainder bounds.}
{ The exact decomposition in Lemma~\ref{lem:grad_approx} does not require any further assumptions beyond the AWD mapping and the local Taylor expansion. The additional conditions below only justify the two local-perturbation small quantities used in Assumption~\ref{assump:awd_small} and the eigendirection-level refinement employed later in this appendix. For the analyzed FCL block, define}
\[
{
\epsilon_{\rm act}
\triangleq
\frac{\|\Delta \bm a_p\|}{\|\bm a_p\|},
\qquad
\epsilon_{\rm min}
\triangleq
\frac{\|\nabla\ell_p(\bm w)\|}
{\|\mathbf h_p(\bm w)\|\,\|\bm w\|}.}
\]
Assume the following lower-bound relations on the fitted matched pairs:
\begin{equation}
\label{eq:app_deltaw_scaling}
    \|
    \Delta \bm w_p^{\mu\nu}
    \|
    \ge
    c_{\Delta}
    \|
    \mathbf J_{\Delta,p}^{\mu\nu}
    \|
    \,
    \|
    \bm w
    \|,
\end{equation}
and
\begin{equation}
\label{eq:app_curvature_scaling}
    \|
    \mathbf h_p(\bm w)\Delta \bm w_p^{\mu\nu}
    \|
    \ge
    c_h
    \|
    \mathbf h_p(\bm w)
    \|
    \,
    \|
    \Delta \bm w_p^{\mu\nu}
    \|,
    \qquad
    c_{\Delta},c_h>0,
    \qquad
    \|
    \mathbf h_p(\bm w)
    \|
    \ge c>0.
\end{equation}
The first relation gives a lower bound on the realized AWD perturbation relative to $\|\mathbf J_{\Delta,p}^{\mu\nu}\|\,\|\bm w\|$, while the second gives a non-degenerate lower bound on the Hessian response along the AWD direction. Together they yield the estimate
\begin{equation}
\label{eq:app_eta_scaling}
{
    \eta_p
    =
    \frac{
        \|
        \bm \rho_{p,\mathrm{gc}}^{\mu\nu}
        \|
    }{
        \|
        \mathbf h_p(\bm w)\Delta \bm w_p^{\mu\nu}
        \|
    }
    \lesssim
    \frac{
        \|
        \nabla \ell_p(\bm w)
        \|
    }{
        \|
        \mathbf h_p(\bm w)
        \|
        \,
        \|
        \bm w
        \|
    }
    =
    \mathcal O(\epsilon_{\rm min}),}
\end{equation}
{Hence, in the local generic-minimum regime $\epsilon_{\rm min}\ll1$, Eq.~\eqref{eq:app_eta_scaling} gives $\eta_p\lesssim\epsilon_{\rm min}\ll1$.}
Thus Eqs.~\eqref{eq:app_deltaw_scaling}--\eqref{eq:app_eta_scaling} provide a sufficient justification for the compact near-generic-minimum assumption used in the main text.

\paragraph{Norm-level per-sample remainder control.}
{ The remainder can now be controlled term by term in the local-perturbation regime $\epsilon_{\rm act},\epsilon_{\rm min}\to0$. Under Assumption~\ref{assump:awd_small},}
\begin{equation}
{
    \frac{
        \|
        \bm \rho_{p,\mathrm{gc}}^{\mu\nu}
        \|
    }{
        \|
        \mathbf h_p(\bm w)\Delta \bm w_p^{\mu\nu}
        \|
    }
    =
    \eta_p
    =
    \mathcal O(\epsilon_{\rm min}).}
\end{equation}
For the higher-order part,
\begin{equation}
\begin{aligned}
    \frac{
        \|
        \bm \rho_{p,\mathrm{ho}}^{\mu\nu}
        \|
    }{
        \|
        \mathbf h_p(\bm w)\Delta \bm w_p^{\mu\nu}
        \|
    }
    &\le
    \|
    \mathbf J_{\Delta,p}^{\mu\nu}
    \|
    \\
    &\quad
    +
    (1+\|
    \mathbf J_{\Delta,p}^{\mu\nu}
    \|)
    \frac{
        \|
        \bm r_{T,p}^{\mu\nu}
        \|
    }{
        \|
        \mathbf h_p(\bm w)\Delta \bm w_p^{\mu\nu}
        \|
    }.
\end{aligned}
\end{equation}
{ Since $\|\mathbf J_{\Delta,p}^{\mu\nu}\| = \mathcal O(\epsilon_{\rm act})$ and}
\begin{equation}
    \frac{
        \|
        \bm r_{T,p}^{\mu\nu}
        \|
    }{
        \|
        \mathbf h_p(\bm w)\Delta \bm w_p^{\mu\nu}
        \|
    }
    =
    \mathcal O
    \left(
        \frac{
            \|\Delta \bm w_p^{\mu\nu}\|^2
        }{
            \|\mathbf h_p(\bm w)\Delta \bm w_p^{\mu\nu}\|
        }
    \right)
    =
    {\mathcal O(\epsilon_{\rm act})}
\end{equation}
by the lower-bound relation in Eq.~\eqref{eq:app_curvature_scaling}. Indeed,
\[
\|\mathbf h_p(\bm w)\Delta \bm w_p^{\mu\nu}\|
\ge
c_h\|\mathbf h_p(\bm w)\|\,\|\Delta \bm w_p^{\mu\nu}\|,
\qquad
\|\mathbf h_p(\bm w)\|\ge c>0,
\]
so
\[
\frac{\|\Delta \bm w_p^{\mu\nu}\|^2}{\|\mathbf h_p(\bm w)\Delta \bm w_p^{\mu\nu}\|}
\lesssim
\frac{\|\Delta \bm w_p^{\mu\nu}\|}{c_h\|\mathbf h_p(\bm w)\|}
\le
\frac{1}{c_h c}\|\Delta \bm w_p^{\mu\nu}\|
{
=
\mathcal O(\epsilon_{\rm act}).}
\]
Therefore we obtain
\begin{equation}
{
    \frac{
        \|
        \bm \rho_{p,\mathrm{ho}}^{\mu\nu}
        \|
    }{
        \|
        \mathbf h_p(\bm w)\Delta \bm w_p^{\mu\nu}
        \|
    }
    =
    \mathcal O(\epsilon_{\rm act}).}
\end{equation}
Combining the two parts yields
\begin{equation}
{
    \frac{
        \|
        \bm \rho_p^{\mu\nu}
        \|
    }{
        \|
        \mathbf h_p(\bm w)\Delta \bm w_p^{\mu\nu}
        \|
    }
    =
    \mathcal O(\epsilon_{\rm act}+\epsilon_{\rm min}).}
\end{equation}
{ This is the mathematical norm-level statement used in Section~\ref{sec:grad_approx}. Panel~(f) of \Cref{fig:metrics_cifar_cnn_cse,fig:metrics_mnist_fc_cse,fig:metrics_mnist_fc_mse} provides the corresponding empirical check: once the higher-order discrepancy is already negligible so that $\mathbf{C}_{\text{AWD,raw}}$ is a faithful proxy for the empirical covariance, the observed convergence of $\mathbf{C}_{\text{AWD,raw}}$ toward $\mathbf{C}^{hh}$ together with the decay of $\mathbf{C}^{hg}$ and $\mathbf{C}^{gg}$ is exactly the norm-level signature of small $\epsilon_{\rm act}$ and $\epsilon_{\rm min}$. However, panel~(f) only shows that this separation holds after aggregation in Frobenius norm. For the spectral covariance analysis in the main text, we also need to know whether it persists after resolving the covariance along the ordered global Hessian eigendirections.}

{
\paragraph{Why AWD does not require an interpolating minimum.}
With the $\epsilon_{\rm min}$ control in Eq.~\eqref{eq:app_eta_scaling} in place, we can compare AWD with a conventional Taylor route to a Hessian second-moment expression. Let $\bm w_t$ denote the SGD iterate at iteration $t$. If $\bm w^*$ is an interpolating minimum, then $\nabla\ell_p(\bm w^*)=\bm 0$ for every sample, and a local expansion gives
\begin{equation}
    \nabla\ell_p(\bm w_t)
    =
    \mathbf h_p(\bm w^*)(\bm w_t-\bm w^*)
    +
    \mathcal O(\|\bm w_t-\bm w^*\|^2).
\end{equation}
Substituting this expression into Eq.~\eqref{eq:cov} yields, schematically,
\begin{equation}
    \mathbf C
    \approx
    \frac{1}{B}
    \mathbb E_p
    \left[
        \mathbf h_p(\bm w^*)\,
        \bm\Sigma_t\,
        \mathbf h_p(\bm w^*)
    \right],
    \qquad
    \bm\Sigma_t
    \triangleq
    \mathbb E[
        (\bm w_t-\bm w^*)
        (\bm w_t-\bm w^*)^\top
    ].
\end{equation}
This route is not sufficient for generic SGD solutions. Around a generic candidate minimum $\tilde{\bm w}$, one instead has
\begin{equation}
    \nabla\ell_p(\bm w_t)
    \approx
    \nabla\ell_p(\tilde{\bm w})
    +
    \mathbf h_p(\tilde{\bm w})(\bm w_t-\tilde{\bm w}),
\end{equation}
so neglecting the zeroth-order term requires
\begin{equation}
    A_{\rm lin}
    \triangleq
    \frac{
        \|\nabla\ell_p(\tilde{\bm w})\|
    }{
        \|\mathbf h_p(\tilde{\bm w})\|\,
        \|\bm w_t-\tilde{\bm w}\|
    }
    \ll 1.
\end{equation}
This condition is restrictive because a generic minimum only satisfies $\mathbb E_p[\nabla\ell_p(\tilde{\bm w})]=\bm 0$, not $\nabla\ell_p(\tilde{\bm w})=\bm 0$ samplewise; moreover, $\|\bm w_t-\tilde{\bm w}\|$ may be very small near the minimum.
AWD avoids this obstruction by expanding the matched-sample gradient difference at the current iterate:
\begin{equation}
    \nabla\ell_{p'}(\bm w)
    -
    \nabla\ell_p(\bm w)
    =
    \mathbf h_p(\bm w)\Delta\bm w_p^{\mu\nu}
    +
    (\mathbf J_{\Delta,p}^{\mu\nu})^\top
    \nabla\ell_p(\bm w)
    +
    \mathcal O(\|\Delta\bm w_p^{\mu\nu}\|^2).
\end{equation}
The shared zeroth-order gradient cancels before any interpolation assumption is imposed. The remaining gradient-coupling correction is controlled by $\epsilon_{\rm min}$ through
\begin{equation}
    A_{\rm AWD}
    \triangleq
    \frac{
        \|(\mathbf J_{\Delta,p}^{\mu\nu})^\top\nabla\ell_p(\bm w)\|
    }{
        \|\mathbf h_p(\bm w)\Delta\bm w_p^{\mu\nu}\|
    }
    \lesssim
    \frac{
        \|\nabla\ell_p(\bm w)\|
    }{
        \|\mathbf h_p(\bm w)\|\,\|\bm w\|
    },
\end{equation}
as formalized in Eq.~\eqref{eq:app_eta_scaling}. Together with the activity perturbation control $\epsilon_{\rm act}$, this gives the $\mathcal O(\epsilon_{\rm act}+\epsilon_{\rm min})$ remainder used in Section~\ref{sec:grad_approx}. Since $\|\bm w\|$ remains finite and typically large while $\|\bm w_t-\tilde{\bm w}\|$ can be arbitrarily small near a generic minimum, $A_{\rm AWD}\ll1$ is substantially weaker than $A_{\rm lin}\ll1$. Thus AWD gives a second-moment covariance route without requiring an interpolating minimum or an a priori model for the iterate covariance $\bm\Sigma_t$.
}

\paragraph{Global Hessian eigendirection-level per-sample control.}
We now refine the remainder analysis after projection onto the global Hessian eigenbasis. Let $I_K$ denote the retained set of global Hessian eigendirections used in the power-law fit, let $\bm v_i$ be the corresponding global Hessian eigenvectors for $i\in I_K$, and define
\begin{equation}
    A_{p,i}^{\mu\nu}
    \triangleq
    \bm v_i^\top
    \mathbf h_p(\bm w)\Delta \bm w_p^{\mu\nu},
    \qquad
    B_{p,i}^{\mu\nu}
    \triangleq
    \bm v_i^\top
    \bm \rho_p^{\mu\nu}.
\end{equation}
If the projected leading response remains non-degenerate on the retained eigendirections, namely for each $i\in I_K$ and some constant $c_i>0$,
\begin{equation}
\label{eq:app_proj_nondeg}
    |
    A_{p,i}^{\mu\nu}
    |
    \ge
    c_i
    \|
    \mathbf h_p(\bm w)\Delta \bm w_p^{\mu\nu}
    \|,
\end{equation}
then projecting the exact decomposition from Eq.~\eqref{eq:grad_diff_decomposition} onto $\bm v_i$ gives
\begin{equation}
\label{eq:app_proj_exact}
    \bm v_i^\top
    \left(
        \nabla \ell_{p'}(\bm w)
        -
        \nabla \ell_p(\bm w)
    \right)
    =
    A_{p,i}^{\mu\nu}
    +
    B_{p,i}^{\mu\nu}.
\end{equation}
Moreover, by Cauchy--Schwarz,
\begin{equation}
\label{eq:app_proj_B_bound}
    |
    B_{p,i}^{\mu\nu}
    |
    =
    |
    \bm v_i^\top \bm \rho_p^{\mu\nu}
    |
    \le
    \|
    \bm v_i
    \|
    \,
    \|
    \bm \rho_p^{\mu\nu}
    \|
    =
    \|
    \bm \rho_p^{\mu\nu}
    \|,
\end{equation}
since $\|\bm v_i\|=1$. Combining Eqs.~\eqref{eq:app_proj_nondeg} and \eqref{eq:app_proj_B_bound} with the norm-level estimate yields
\begin{equation}
\label{eq:app_proj_ratio}
    \frac{
        |
        B_{p,i}^{\mu\nu}
        |
    }{
        |
        A_{p,i}^{\mu\nu}
        |
    }
    \le
    \frac{
        \|
        \bm \rho_p^{\mu\nu}
        \|
    }{
        |
        A_{p,i}^{\mu\nu}
        |
    }
    \le
    \frac{1}{c_i}
    \frac{
        \|
        \bm \rho_p^{\mu\nu}
        \|
    }{
        \|
        \mathbf h_p(\bm w)\Delta \bm w_p^{\mu\nu}
        \|
    }
    =
    {\mathcal O(\epsilon_{\rm act}+\epsilon_{\rm min}).}
\end{equation}
Equivalently,
\begin{equation}
\begin{aligned}
\label{eq:app_proj_refined}
    \bm v_i^\top
    \left(
        \nabla \ell_{p'}(\bm w)
        -
        \nabla \ell_p(\bm w)
    \right)
    &=
    A_{p,i}^{\mu\nu}
    +
    {\mathcal O}
    \left(
        {(\epsilon_{\rm act}+\epsilon_{\rm min})}
        |
        A_{p,i}^{\mu\nu}
        |
    \right) \\
    &=
    \bm v_i^\top
    \mathbf h_p(\bm w)\Delta \bm w_p^{\mu\nu}
    +
    {\mathcal O}
    \left(
        {(\epsilon_{\rm act}+\epsilon_{\rm min})}
        |
        \bm v_i^\top
        \mathbf h_p(\bm w)\Delta \bm w_p^{\mu\nu}
        |
    \right).
\end{aligned}
\end{equation}
This is the component-level remainder control promised in Section~\ref{sec:grad_approx}. It is stronger than the norm-level sketch and is the version used implicitly when the covariance is analyzed entrywise in the global Hessian eigenbasis below.
Panel~(g) of \Cref{fig:metrics_cifar_cnn_cse,fig:metrics_mnist_fc_cse,fig:metrics_mnist_fc_mse} gives the corresponding empirical refinement: instead of comparing only global norms, it resolves the covariance along the ordered global Hessian eigendirections and compares the diagonal magnitudes of $\mathbf{C}^{hh}$, $\mathbf{C}^{hg}$, and $\mathbf{C}^{gg}$ relative to $\mathbf{C}_{\text{AWD,raw}}$. This is consistent with the eigendirection-level separation observed in \Cref{fig:metrics_cifar_cnn_cse,fig:metrics_mnist_fc_cse,fig:metrics_mnist_fc_mse}~(g).

\paragraph{Batch-level decomposition for covariance analysis.}
Define the batch-level leading and remainder parts
\begin{equation}
\label{eq:app_s_r_def}
    \bm s_{\mu\nu}
    \triangleq
    \sum_{p\in\mathcal B_\mu}
    \mathbf h_p(\bm w)\Delta \bm w_p^{\mu\nu},
    \qquad
    \bm r_{\mu\nu}
    \triangleq
    \sum_{p\in\mathcal B_\mu}
    \bm\rho_p^{\mu\nu}.
\end{equation}
Then Eq.~\eqref{eq:grad_diff_decomposition} gives the exact batch decomposition
\begin{equation}
\label{eq:app_g_exact}
    \bm g_{\mu\nu}
    =
    \frac{1}{B}
    \left(
        \bm s_{\mu\nu}
        +
        \bm r_{\mu\nu}
    \right).
\end{equation}
{ For the covariance analysis, the key question is whether the accumulated remainder $\bm r_{\mu\nu}$ remains controlled after the batch sum. This does not follow automatically from the per-sample estimate on $\bm \rho_p^{\mu\nu}$, because cancellation in $\bm s_{\mu\nu}$ can change the relative size after aggregation. We therefore impose the batch-level control stated later in Assumption~\ref{assump:batch_rem_control}, namely $\|\bm r_{\mu\nu}\|=\mathcal O((\epsilon_{\rm act}+\epsilon_{\rm min})\|\bm s_{\mu\nu}\|)$ together with its component-level counterpart on the retained eigendirections.}
Accordingly, the covariance admits the exact split
\begin{equation}
\label{eq:app_C_exact}
    \mathbf C
    =
    \mathbf C^{hh}
    +
    \mathbf R^{\rm rem},
    \qquad
    \mathbf C^{hh}
    \triangleq
    \frac{1}{2B^2}
    \mathbb E_{\mu,\nu}
    \left[
        \bm s_{\mu\nu}\bm s_{\mu\nu}^\top
    \right],
\end{equation}
with
\begin{equation}
\label{eq:app_Rrem_exact}
    \mathbf R^{\rm rem}
    \triangleq
    \frac{1}{2B^2}
    \mathbb E_{\mu,\nu}
    \left[
        \bm s_{\mu\nu}\bm r_{\mu\nu}^\top
        +
        \bm r_{\mu\nu}\bm s_{\mu\nu}^\top
        +
        \bm r_{\mu\nu}\bm r_{\mu\nu}^\top
    \right].
\end{equation}
The next two subsections analyze $\mathbf C^{hh}$ in the global Hessian eigenbasis and then control $\mathbf R^{\rm rem}$ at both the norm and component levels.

\subsubsection{Projection onto the Global Hessian Eigenbasis}
\label{app:coord_trans}

To analyze the spectral properties of the noise, we move to the eigenbasis of the full-batch Hessian $\mathbf{H} = \mathbb{E}_p [\mathbf{h}_p]$. Let $\mathbf{H} = \mathbf{V} \mathbf{\Lambda} \mathbf{V}^\top$ be the eigendecomposition in the descending order, where $\mathbf{V} = [\bm{v}_1, \dots, \bm{v}_D]$ is the orthogonal matrix of eigenvectors. We  seek to calculate the diagonal elements of the covariance matrix in this basis, defined as $C_{ii}  = \bm{v}_i^\top \mathbf{C} \bm{v}_i$.

\subsubsection{Covariance Decomposition and Sufficient Conditions for Assumption~\ref{assump:spectral_cov_conditions}}
\label{app:proof_thm1}

We first derive the exact covariance decomposition stated in Lemma~\ref{lem:cov_decomp}. Isolating the leading covariance term $\mathbf C^{hh}$ from the exact split in Eq.~\eqref{eq:app_C_exact}, for any global Hessian eigenvectors $\bm v_i$ and $\bm v_j$ we have
\begin{equation}
\label{eq:app_Chh_ij_start}
    C^{hh}_{ij}
    =
    \frac{1}{2B^2}
    \mathbb E_{\mu,\nu}
    \left[
        \left(
            \bm v_i^\top \bm s_{\mu\nu}
        \right)
        \left(
            \bm v_j^\top \bm s_{\mu\nu}
        \right)
    \right].
\end{equation}
Expanding $\bm s_{\mu\nu}=\sum_{p\in\mathcal B_\mu}\mathbf h_p(\bm w)\Delta \bm w_p^{\mu\nu}$ gives
\begin{equation}
\label{eq:app_Chh_ij_double}
\begin{aligned}
    C^{hh}_{ij}
    &=
    \frac{1}{2B^2}
    \mathbb E_{\mu}
    \left[
        \sum_{p,q\in\mathcal B_\mu}
        \mathbb E_{\nu}
        \left[
            \left(
                \bm v_i^\top
                \mathbf h_p(\bm w)\Delta \bm w_p^{\mu\nu}
            \right)
            \left(
                (\Delta \bm w_q^{\mu\nu})^\top
                \mathbf h_q(\bm w)\bm v_j
            \right)
        \right]
    \right].
\end{aligned}
\end{equation}
Separate the same-sample and cross-sample pieces:
\begin{equation}
\label{eq:app_Chh_sample_split}
    C^{hh}_{ij}
    =
    \widetilde D_{ij}
    +
    R^{\rm sample}_{ij},
\end{equation}
where
\begin{equation}
\label{eq:app_Rsample_def}
\begin{aligned}
    R^{\rm sample}_{ij}
    \triangleq
    \frac{1}{2B^2}
    \mathbb E_{\mu}
    \left[
        \sum_{\substack{p,q\in\mathcal B_\mu\\ p\neq q}}
        \mathbb E_{\nu}
        \left[
            \left(
                \bm v_i^\top
                \mathbf h_p(\bm w)\Delta \bm w_p^{\mu\nu}
            \right)
            \left(
                (\Delta \bm w_q^{\mu\nu})^\top
                \mathbf h_q(\bm w)\bm v_j
            \right)
        \right]
    \right]
\end{aligned}
\end{equation}
collects the cross-sample covariance contribution, and
\begin{equation}
\label{eq:app_Dtilde_def}
\begin{aligned}
    \widetilde D_{ij}
    \triangleq
    \frac{1}{2B^2}
    \mathbb E_{\mu}
    \left[
        \sum_{p\in\mathcal B_\mu}
        \mathbb E_{\nu}
        \left[
            \bm v_i^\top
            \mathbf h_p(\bm w)
            \Delta \bm w_p^{\mu\nu}
            (\Delta \bm w_p^{\mu\nu})^\top
            \mathbf h_p(\bm w)
            \bm v_j
        \right]
    \right].
\end{aligned}
\end{equation}
To analyze $\widetilde D_{ij}$, expand the per-sample Hessian in its local eigenbasis,
\[
    \mathbf h_p(\bm w)
    =
    \sum_m
    \kappa_m^{(p)}
    \bm u_m^{(p)}
    (\bm u_m^{(p)})^\top,
\]
and define the local covariance entries
\begin{equation}
\label{eq:app_Mpmn_def}
    \mathcal M_{p,mn}
    \triangleq
    \mathbb E_{\nu}
    \left[
        (\bm u_m^{(p)})^\top
        \Delta \bm w_p^{\mu\nu}
        (\Delta \bm w_p^{\mu\nu})^\top
        \bm u_n^{(p)}
    \right].
\end{equation}
Then Eq.~\eqref{eq:app_Dtilde_def} becomes
\begin{equation}
\label{eq:app_Dtilde_local}
\begin{aligned}
    \widetilde D_{ij}
    &=
    \frac{1}{2B^2}
    \mathbb E_{\mu}
    \left[
        \sum_{p\in\mathcal B_\mu}
        \sum_{m,n}
        \kappa_m^{(p)}
        \kappa_n^{(p)}
        (\bm u_m^{(p)}\!\cdot\!\bm v_i)
        (\bm u_n^{(p)}\!\cdot\!\bm v_j)
        \mathcal M_{p,mn}
    \right].
\end{aligned}
\end{equation}
Now write the local covariance as
\begin{equation}
\label{eq:app_M_split}
    \mathcal M_{p,mn}
    =
    \sigma_w^2\delta_{mn}
    +
    \Delta \mathcal M_{p,mn}.
\end{equation}
Substituting Eq.~\eqref{eq:app_M_split} into Eq.~\eqref{eq:app_Dtilde_local} yields
\begin{equation}
\label{eq:app_Dtilde_iso_split}
    \widetilde D_{ij}
    =
    D_{ij}
    +
    R^{\rm iso}_{ij},
\end{equation}
where
\begin{equation}
\label{eq:app_Dij_def}
    D_{ij}
    \triangleq
    \frac{\sigma_w^2}{2B}
    \mathbb E_{p}
    \left[
        \sum_m
        (\kappa_m^{(p)})^2
        (\bm u_m^{(p)}\!\cdot\!\bm v_i)
        (\bm u_m^{(p)}\!\cdot\!\bm v_j)
    \right]
\end{equation}
is the leading spectral term in Theorem~\ref{thm:spectral_noise}, and
\begin{equation}
\label{eq:app_Riso_def}
\begin{aligned}
    R^{\rm iso}_{ij}
    \triangleq
    \frac{1}{2B^2}
    \mathbb E_{\mu}
    \left[
        \sum_{p\in\mathcal B_\mu}
        \sum_{m,n}
        \kappa_m^{(p)}
        \kappa_n^{(p)}
        (\bm u_m^{(p)}\!\cdot\!\bm v_i)
        (\bm u_n^{(p)}\!\cdot\!\bm v_j)
        \Delta \mathcal M_{p,mn}
    \right]
\end{aligned}
\end{equation}
measures the deviation from local isotropy. Hence the leading covariance term admits the exact decomposition
\begin{equation}
\label{eq:app_Chh_exact_decomp}
    C^{hh}_{ij}
    =
    D_{ij}
    +
    R^{\rm sample}_{ij}
    +
    R^{\rm iso}_{ij}.
\end{equation}
Combining Eq.~\eqref{eq:app_Chh_exact_decomp} with Eq.~\eqref{eq:app_C_exact} gives the exact component-level covariance decomposition
\begin{equation}
\label{eq:app_Cij_final_decomp}
    C_{ij}
    =
    D_{ij}
    +
    R^{\rm sample}_{ij}
    +
    R^{\rm iso}_{ij}
    +
    R^{\rm rem}_{ij},
\end{equation}
where $R^{\rm rem}_{ij}\triangleq \bm v_i^\top \mathbf R^{\rm rem}\bm v_j$. This proves the decomposition used in Lemma~\ref{lem:cov_decomp}.
{ The next three assumptions provide appendix-level sufficient conditions with two distinct roles. The batch-level AWD remainder control propagates the local-perturbation small quantities $\epsilon_{\rm act}$ and $\epsilon_{\rm min}$ to $R^{\rm rem}$. The sample-independence and local-isotropy control gives the diagonal statistical contribution summarized by $\epsilon_{\rm stat}$ in Assumption~\ref{assump:spectral_cov_conditions}. The off-diagonal projection cancellation is an auxiliary estimate supporting the approximate-diagonalization discussion in Section~\ref{sec:spectral_analysis}, rather than an additional term in the displayed diagonal theorem.}
\begin{assumption}[Sample-independence and local-isotropy]
\label{assump:sample_iso}
{ On the retained eigendirections in the late-training regime, define the statistical covariance ratio}
\[
{
    \epsilon_{\rm stat}
    \triangleq
    \max\left\{
    \max_{i\in I_K}
    \frac{|R^{\rm sample}_{ii}|+|R^{\rm iso}_{ii}|}{|D_{ii}|},
    \max_{\substack{i,j\in I_K\\ i\neq j}}
    \frac{|R^{\rm sample}_{ij}|+|R^{\rm iso}_{ij}|}{|D_{ii}|+|D_{jj}|}
    \right\}
    \ll 1.}
\]
{ This ratio summarizes the cross-sample covariance contribution and the deviation from local isotropy.}
\end{assumption}

To state the batch-level AWD remainder condition, let $\mathcal R$ denote the matched batch pairs, and let $I_K$ denote the retained set of global Hessian eigendirections. For component-level entries, define
\begin{equation}
\label{eq:app_XY_def}
    X_i^{\mu\nu}
    \triangleq
    \bm v_i^\top \bm s_{\mu\nu},
    \qquad
    Y_i^{\mu\nu}
    \triangleq
    \bm v_i^\top \bm r_{\mu\nu}.
\end{equation}
\begin{assumption}[Batch-level AWD remainder control]
\label{assump:batch_rem_control}
{ For the matched batch pairs $(\mu,\nu)\in\mathcal R$ and the retained global Hessian eigendirections $i\in I_K$, the accumulated AWD remainder is controlled by the local-perturbation small quantities relative to the accumulated leading response:}
\begin{equation}
\label{eq:app_batch_norm_control}
{
    \|
    \bm r_{\mu\nu}
    \|
    \le
    \xi_{\mu\nu}
    \|
    \bm s_{\mu\nu}
    \|,
    \qquad
    \sup_{(\mu,\nu)\in\mathcal R}
    \xi_{\mu\nu}
    =
    \mathcal O(\epsilon_{\rm act}+\epsilon_{\rm min}),}
\end{equation}
{ and, componentwise in the global Hessian eigenbasis,}
\begin{equation}
\label{eq:app_batch_component_control}
{
    |Y_i^{\mu\nu}|
    \le
    \zeta_i^{\mu\nu}|X_i^{\mu\nu}|,
    \qquad
    \sup_{\substack{i\in I_K\\(\mu,\nu)\in\mathcal R}}
    \zeta_i^{\mu\nu}
    =
    \mathcal O(\epsilon_{\rm act}+\epsilon_{\rm min}).}
\end{equation}
\end{assumption}

To state the off-diagonal projection condition, define the local energy projections
\[
    A_m^{(p,i)}
    \triangleq
    \kappa_m^{(p)}
    (\bm u_m^{(p)}\!\cdot\!\bm v_i),
    \qquad
    B_m^{(p,j)}
    \triangleq
    \kappa_m^{(p)}
    (\bm u_m^{(p)}\!\cdot\!\bm v_j),
    \qquad i\neq j,
\]
and the per-sample off-diagonal sum
\[
    S_p^{ij}
    \triangleq
    \sum_m
    A_m^{(p,i)}B_m^{(p,j)}.
\]
Then Eq.~\eqref{eq:app_Dij_def} gives
\begin{equation}
\label{eq:app_Dij_sample_sum}
    D_{ij}
    =
    \frac{\sigma_w^2}{2B}
    \mathbb E_p[S_p^{ij}],
    \qquad
    D_{ii}
    =
    \frac{\sigma_w^2}{2B}
    \mathbb E_p
    \left[
        \sum_m (A_m^{(p,i)})^2
    \right],
    \qquad
    D_{jj}
    =
    \frac{\sigma_w^2}{2B}
    \mathbb E_p
    \left[
        \sum_m (B_m^{(p,j)})^2
    \right].
\end{equation}
\begin{assumption}[Off-diagonal projection decoupling]
\label{assump:offdiag_decouple}
{ For the retained off-diagonal pairs $(i,j)$ with $i\neq j$, let $N_{\rm samp}$ denote the number of samples in the empirical average over $p$, and set $\epsilon_{\rm off}\triangleq N_{\rm samp}^{-1/2}$. Since $S_p^{ij}$ is the product of two projections onto orthogonal global Hessian eigendirections, we assume that its signed sample average is cancellation-dominated in the high-dimensional regime: after averaging over retained samples, its mean is close to zero up to a CLT-type fluctuation of order $\epsilon_{\rm off}$, and is therefore small relative to the corresponding diagonal scale. Concretely, there exists a bounded non-negative quantity $\varepsilon_{ij}^{\rm dec}=\mathcal O(1)$ such that}
\begin{equation}
\label{eq:app_offdiag_mean_bound}
    \left|
        \mathbb E_p[S_p^{ij}]
    \right|
    \le
    \frac{\sqrt{1+\varepsilon_{ij}^{\rm dec}}}{\sqrt{N_{\rm samp}}}
    \sqrt{
        \sum_m
        \mathbb E_p[(A_m^{(p,i)})^2]\,
        \mathbb E_p[(B_m^{(p,j)})^2]
    }.
\end{equation}
Equivalently, the leading off-diagonal term satisfies
\[
    \frac{|D_{ij}|}{\sqrt{D_{ii}D_{jj}}}
    =
    O\!\left(\frac{1}{\sqrt{N_{\rm samp}}}\right).
\]
\end{assumption}

{ For the diagonal theorem in the current main text, the diagonal part of Assumption~\ref{assump:sample_iso} supplies the statistical contribution $\epsilon_{\rm stat}$. Assumption~\ref{assump:offdiag_decouple} instead provides the auxiliary off-diagonal cancellation estimate used to justify approximate diagonalization in the surrounding spectral-structure discussion.}

\begin{proof}[Proof of Theorem~\ref{thm:spectral_noise}]
We first control the covariance error generated by the accumulated AWD remainder $\bm r_{\mu\nu}$. Assumption~\ref{assump:batch_rem_control} is the batch-level version of the preceding per-sample remainder estimates; it explicitly rules out cancellation of the leading batch response that would otherwise hide the leading term in the denominator. From Eq.~\eqref{eq:app_g_exact},
\begin{equation}
\label{eq:app_cov_batch_error}
    \bm g_{\mu\nu}\bm g_{\mu\nu}^\top
    -
    \frac{1}{B^2}
    \bm s_{\mu\nu}\bm s_{\mu\nu}^\top
    =
    \frac{1}{B^2}
    \left[
        \bm s_{\mu\nu}\bm r_{\mu\nu}^\top
        +
        \bm r_{\mu\nu}\bm s_{\mu\nu}^\top
        +
        \bm r_{\mu\nu}\bm r_{\mu\nu}^\top
    \right].
\end{equation}
Then, using $\|xy^\top\|_F=\|x\|\,\|y\|$,
\begin{equation}
\label{eq:app_Rrem_norm_bound}
    \left\|
        \bm g_{\mu\nu}\bm g_{\mu\nu}^\top
        -
        \frac{1}{B^2}
        \bm s_{\mu\nu}\bm s_{\mu\nu}^\top
    \right\|_F
    \le
    \frac{1}{B^2}
    \left(
        2\xi_{\mu\nu}
        +
        \xi_{\mu\nu}^2
    \right)
    \|\bm s_{\mu\nu}\|^2.
\end{equation}
Taking expectation in Eq.~\eqref{eq:app_Rrem_norm_bound} and using Assumption~\ref{assump:batch_rem_control} gives
\begin{equation}
{
    \|
    \mathbf R^{\rm rem}
    \|_F
    \le
    \mathcal O(\epsilon_{\rm act}+\epsilon_{\rm min})
    \frac{1}{2B^2}
    \mathbb E_{\mu,\nu}
    \|
    \bm s_{\mu\nu}
    \|^2
}
\end{equation}
Here
\[
    \frac{1}{2B^2}
    \mathbb E_{\mu,\nu}
    \|
    \bm s_{\mu\nu}
    \|^2
    =
    \operatorname{tr}(\mathbf C^{hh}),
\]
since $\mathbf C^{hh}=\frac{1}{2B^2}\mathbb E_{\mu,\nu}[\bm s_{\mu\nu}\bm s_{\mu\nu}^\top]$ and $\operatorname{tr}(xx^\top)=\|x\|^2$ for any vector $x$. Thus Eq.~\eqref{eq:app_Rrem_norm_bound} is the batchwise norm-level covariance error estimate for the contribution of $\bm r_{\mu\nu}$. For the component-level refinement, Eq.~\eqref{eq:app_XY_def} gives
\begin{equation}
\label{eq:app_Cij_exact_split}
    C_{ij}
    =
    \frac{1}{2B^2}
    \mathbb E_{\mu,\nu}
    \left[
        (X_i^{\mu\nu}+Y_i^{\mu\nu})
        (X_j^{\mu\nu}+Y_j^{\mu\nu})
    \right]
    =
    C^{hh}_{ij}
    +
    R^{\rm rem}_{ij},
\end{equation}
with
\begin{equation}
\label{eq:app_Rrem_ij_def}
    R^{\rm rem}_{ij}
    =
    \frac{1}{2B^2}
    \mathbb E_{\mu,\nu}
    \left[
        X_i^{\mu\nu}Y_j^{\mu\nu}
        +
        Y_i^{\mu\nu}X_j^{\mu\nu}
        +
        Y_i^{\mu\nu}Y_j^{\mu\nu}
    \right].
\end{equation}
By the component part of Assumption~\ref{assump:batch_rem_control},
\begin{equation}
\label{eq:app_Rrem_ij_bound}
    |R^{\rm rem}_{ij}|
    \le
    \frac{1}{2B^2}
    \mathbb E_{\mu,\nu}
    \left[
        \left(
            \zeta_i^{\mu\nu}
            +
            \zeta_j^{\mu\nu}
            +
            \zeta_i^{\mu\nu}\zeta_j^{\mu\nu}
        \right)
        \left|
            X_i^{\mu\nu}X_j^{\mu\nu}
        \right|
    \right].
\end{equation}
In particular, setting $j=i$ gives
\begin{equation}
\label{eq:app_Cii_exact_split}
    C_{ii}
    =
    C^{hh}_{ii}
    +
    R^{\rm rem}_{ii},
\end{equation}
with
\begin{equation}
\label{eq:app_Rrem_ii_bound}
    |R^{\rm rem}_{ii}|
    \le
    \frac{1}{2B^2}
    \mathbb E_{\mu,\nu}
    \left[
        \left(
            2\zeta_i^{\mu\nu}
            +
            (\zeta_i^{\mu\nu})^2
        \right)
        (X_i^{\mu\nu})^2
    \right].
\end{equation}
These estimates control the explicit batch-remainder contribution in Eq.~\eqref{eq:app_Cij_final_decomp}. For general off-diagonal entries, the relevant question is whether they are negligible relative to the diagonal scale on the retained eigendirections. Under Assumption~\ref{assump:sample_iso}, $R^{\rm sample}_{ij}$ and $R^{\rm iso}_{ij}$ are already lower-order, so it remains to control $D_{ij}$ and $R^{\rm rem}_{ij}$. Starting from Eq.~\eqref{eq:app_Rrem_ij_bound}, use the elementary bound
\[
    |X_i^{\mu\nu}X_j^{\mu\nu}|
    \le
    \frac{1}{2}
    \left[
        (X_i^{\mu\nu})^2
        +
        (X_j^{\mu\nu})^2
    \right].
\]
Let
\[
    \zeta_K
    \triangleq
    \sup_{\substack{k\in I_K\\(\mu,\nu)\in\mathcal R}}
    \zeta_k^{\mu\nu}.
\]
{ By Assumption~\ref{assump:batch_rem_control}, $\zeta_K=\mathcal O(\epsilon_{\rm act}+\epsilon_{\rm min})$. Hence Eq.~\eqref{eq:app_Rrem_ij_bound} gives}
\[
    |R^{\rm rem}_{ij}|
    \le
    \left(
        2\zeta_K+\zeta_K^2
    \right)
    \frac{1}{2B^2}
    \mathbb E_{\mu,\nu}
    |X_i^{\mu\nu}X_j^{\mu\nu}|
    \le
    \frac{
        2\zeta_K+\zeta_K^2
    }{2}
    \left(
        C^{hh}_{ii}
        +
        C^{hh}_{jj}
    \right),
\]
and therefore
\[
{
    |R^{\rm rem}_{ij}|
    =
    \mathcal O\!\left(
        (\epsilon_{\rm act}+\epsilon_{\rm min})
        \left(
        C^{hh}_{ii}
        +
        C^{hh}_{jj}
        \right)
    \right).
}
\]
For the leading off-diagonal term, Assumption~\ref{assump:offdiag_decouple} gives the CLT-type estimate
\begin{equation}
\label{eq:app_Dij_mean_bound}
    |D_{ij}|
    \le
    \frac{\sigma_w^2}{2B}
    \cdot
    \frac{\sqrt{1+\varepsilon_{ij}^{\rm dec}}}{\sqrt{N_{\rm samp}}}
    \sqrt{
        \sum_m
        \mathbb E_p[(A_m^{(p,i)})^2]\,
        \mathbb E_p[(B_m^{(p,j)})^2]
    }.
\end{equation}
Dividing by the diagonal scale and using
\[
    \sum_m a_m b_m
    \le
    \left(\sum_m a_m\right)
    \left(\sum_m b_m\right),
    \qquad a_m,b_m\ge 0,
\]
gives
\begin{equation}
\label{eq:app_Dij_ratio_bound}
    \frac{
        |D_{ij}|
    }{
        \sqrt{D_{ii}D_{jj}}
    }
    \lesssim
    \frac{\sqrt{1+\varepsilon_{ij}^{\rm dec}}}{\sqrt{N_{\rm samp}}}.
\end{equation}
Finally, since $\sqrt{D_{ii}D_{jj}}\le \tfrac12(D_{ii}+D_{jj})$, Eq.~\eqref{eq:app_Dij_ratio_bound} implies the off-diagonal mean estimate
\begin{equation}
\label{eq:app_Dij_diag_scale}
    |D_{ij}|
    =
    O\!\left(
        \frac{D_{ii}+D_{jj}}{\sqrt{N_{\rm samp}}}
    \right),
\end{equation}
{ so the leading off-diagonal term is $\mathcal O(\epsilon_{\rm off}(D_{ii}+D_{jj}))$ and is small relative to the diagonal scale when $N_{\rm samp}$ is large. Since the diagonal part of Assumption~\ref{assump:sample_iso} gives $C^{hh}_{kk}=D_{kk}[1+\mathcal O(\epsilon_{\rm stat})]$ for retained $k$, this yields}
\[
{
    |D_{ij}|
    =
    \mathcal O\!\left(\epsilon_{\rm off}(C^{hh}_{ii}+C^{hh}_{jj})\right).}
\]
By Assumption~\ref{assump:sample_iso}, the other two off-diagonal error terms satisfy
\[
{
    |R^{\rm sample}_{ij}|+|R^{\rm iso}_{ij}|
    \le
    \epsilon_{\rm stat}(|D_{ii}|+|D_{jj}|)
    =
    \mathcal O\!\left(\epsilon_{\rm stat}(C^{hh}_{ii}+C^{hh}_{jj})\right).}
\]
Combining these bounds with Eq.~\eqref{eq:app_Cij_final_decomp} yields
\[
{
    |C_{ij}|
    =
    \mathcal O\!\left(
        (\epsilon_{\rm act}+\epsilon_{\rm min}+\epsilon_{\rm stat}+\epsilon_{\rm off})
        (C^{hh}_{ii}+C^{hh}_{jj})
    \right).}
\]
This is the component-level form of approximate commutativity used here: the off-diagonal covariance is negligible relative to the corresponding diagonal scale.
For the diagonal entries used in the power-law fit, Eq.~\eqref{eq:app_Cij_final_decomp} specializes to
\begin{equation}
\label{eq:app_Cii_final_decomp}
    C_{ii}
    =
    D_{ii}
    +
    R^{\rm sample}_{ii}
    +
    R^{\rm iso}_{ii}
    +
    R^{\rm rem}_{ii}.
\end{equation}
For diagonal entries, no additional smallness assumption on the leading term itself is needed. Assumption~\ref{assump:sample_iso} already gives
\[
{
    |R^{\rm sample}_{ii}|+|R^{\rm iso}_{ii}|
    \le
    \epsilon_{\rm stat}|D_{ii}|,}
\]
{ so $C^{hh}_{ii}=D_{ii}[1+\mathcal O(\epsilon_{\rm stat})]$. Meanwhile, Eq.~\eqref{eq:app_Rrem_ii_bound} and Assumption~\ref{assump:batch_rem_control} give}
\[
{ 
    |R^{\rm rem}_{ii}|
    \le
    \left(
        2\zeta_K+\zeta_K^2
    \right)
    C^{hh}_{ii}
    =
    \mathcal O\!\left((\epsilon_{\rm act}+\epsilon_{\rm min})C^{hh}_{ii}\right).}
\]
Consequently,
\[
{ 
    |R^{\rm rem}_{ii}|
    =
    \mathcal O\!\left((\epsilon_{\rm act}+\epsilon_{\rm min})D_{ii}\right).}
\]
Therefore
\[
{ 
    C_{ii}
    =
    D_{ii}\left[1+\mathcal O(\epsilon_{\rm act}+\epsilon_{\rm min}+\epsilon_{\rm stat})\right].}
\]
{  Thus the displayed formula in Theorem~\ref{thm:spectral_noise} is the leading diagonal term $D_{ii}$ obtained after controlling the statistical contributions $R^{\rm sample}_{ii}$ and $R^{\rm iso}_{ii}$ by $\epsilon_{\rm stat}$ and the AWD-remainder contribution $R^{\rm rem}_{ii}$ by the local-perturbation small quantities. The off-diagonal estimates derived above should be read as auxiliary approximate-diagonalization controls supporting the spectral-structure discussion and empirical commutativity diagnostics.}
\end{proof}
\paragraph{Empirical validation of the sufficient conditions.}
Panel~(h) of \Cref{fig:metrics_cifar_cnn_cse,fig:metrics_mnist_fc_cse,fig:metrics_mnist_fc_mse} supports Assumption~\ref{assump:sample_iso}: in the late-training regime, $\mathbf C^{hh}$ closely tracks $\mathbf C^{hh,SD}$, where $\mathbf C^{hh,SD}$ is obtained by dropping the cross-sample terms, and $\mathbf C^{hh,SD,WD}$ also closely tracks $\mathbf C^{hh,SD}$, where $\mathbf C^{hh,SD,WD}$ is obtained by further replacing $\mathcal M_{p,mn}$ with $\sigma_{w,pm}^2\delta_{mn}$. This indicates that both the cross-sample correction and the off-diagonal local-covariance correction are small in this regime. The off-diagonal conclusion is also consistent with the near-diagonality of $C$ in the global Hessian eigenbasis shown in \cref{fig:cmt_cifar_cnn_cse,fig:cmt_mnist_fc_cse,fig:cmt_mnist_fc_mse} (last row).
}

\subsection{Proof of Theorem \ref{thm:bounds}}
\label{app:bounds}

In this appendix, we provide the detailed proof for Theorem \ref{thm:bounds}. We first define the necessary auxiliary variables, derive the moment inequalities, and finally show how these inequalities constrain the power-law exponent $\gamma$ 

\subsubsection{Definitions and Auxiliary Variables}

Consider the projection of the gradient noise and Hessian onto a specific global eigen-direction $\bm{v}_i$ (the $i$-th eigenvector of $H$). We define two auxiliary random variables, $A_i^{(p)}$ and $B_i^{(p)}$, representing the first and second moments of the projected curvature for a specific sample $p$:

\begin{equation}
    A_i^{(p)} := \sum_{m} \kappa_m^{(p)} (\bm{u}_m^{(p)} \cdot \bm{v}_i)^2, \quad 
    B_i^{(p)} := \sum_{m} (\kappa_m^{(p)})^2 (\bm{u}_m^{(p)} \cdot \bm{v}_i)^2.
\end{equation}

Here, $\kappa_m^{(p)}$ and $\bm{u}_m^{(p)}$ are the eigenvalues and eigenvectors of the per-sample Hessian $\mathbf{h}_p$. By definition, the global Hessian eigenvalue $H_{ii}$ and the noise covariance diagonal $C_{ii}$ are expectations over the dataset distribution $\mathcal{D}$ (indexed by $p$):
\begin{equation}
    H_{ii} = \mathbb{E}_{p \sim \mathcal{D}} [A_i^{(p)}], \quad 
    C_{ii} = \frac{\sigma_w^2}{2B}\mathbb{E}_{p \sim \mathcal{D}} [B_i^{(p)}].
\end{equation}
Note that for $C_{ii}$, we can omit the constant factor $\sigma_w^2/2B$ for clarity as it affects the magnitude constant $c_c$ but not the scaling exponent $\gamma$.

\subsubsection{Derivation of Inequalities}
\begin{proof}

\textbf{1. The Lower Bound ($H_{ii}^2 \le \frac{2B}{\sigma_w^2}C_{ii}$):}

We first apply the Cauchy-Schwarz inequality to the sum defining $A_i^{(p)}$. Let $x_m = \kappa_m^{(p)} (\bm{u}_m^{(p)} \cdot \bm{v}_i)$ and $y_m = (\bm{u}_m^{(p)} \cdot \bm{v}_i)$.  Since $\sum_m (\bm{u}_m^{(p)} \cdot \bm{v}_i)^2 = \|\bm{v}_i\|^2 = 1$, the term $A_i^{(p)}$ can be viewed as a convex combination of $\kappa_m^{(p)}$.
Directly comparing $A_i^{(p)}$ and $B_i^{(p)}$:
\begin{equation}\label{CS}
    (A_i^{(p)})^2 = \left( \sum_m \kappa_m^{(p)} (\bm{u}_m^{(p)} \cdot \bm{v}_i)^2 \right)^2 \le \left(\sum_m (\kappa_m^{(p)})^2 (\bm{u}_m^{(p)} \cdot \bm{v}_i)^2 \right) \left( \sum_m (\bm{u}_m^{(p)} \cdot \bm{v}_i)^2 \right).
\end{equation}
Using $\sum_m (\bm{u}_m^{(p)} \cdot \bm{v}_i)^2 = 1$, we obtain the sample-wise inequality:
\begin{equation}
    (A_i^{(p)})^2 \le B_i^{(p)}.
\end{equation}
Next, we take the expectation over samples $p$. Using Jensen's inequality for the convex function $g(x)=x^2$, we have $(\mathbb{E}[X])^2 \le \mathbb{E}[X^2]$. Thus:
\begin{equation}
    H_{ii}^2 = (\mathbb{E}_p [A_i^{(p)}])^2 \le \mathbb{E}_p [(A_i^{(p)})^2] \le \mathbb{E}_p [B_i^{(p)}] = \frac{2B}{\sigma_w^2}C_{ii}.
\end{equation}
This proves the lower bound inequality.

\textbf{2. The Upper Bound ($ \frac{2B}{\sigma_w^2} C_{ii} \le \kappa_{\max} H_{ii}$):}

Assume the per-sample Hessian eigenvalues are bounded by $\kappa_{\max}$, i.e., $0 \le \kappa_m^{(p)} \le \kappa_{\max}$ for all $m, p$.
We can bound the second moment term $B_i^{(p)}$:
\begin{equation}
    B_i^{(p)} = \sum_{m} \kappa_m^{(p)} \cdot \kappa_m^{(p)} (\bm{u}_m^{(p)} \cdot \bm{v}_i)^2 \le \kappa_{\max} \sum_{m} \kappa_m^{(p)} (\bm{u}_m^{(p)} \cdot \bm{v}_i)^2 = \kappa_{\max} A_i^{(p)}.
\end{equation}
Taking the expectation over $p$ preserves the inequality:
\begin{equation}
    \frac{2B}{\sigma_w^2} C_{ii} = \mathbb{E}_p [B_i^{(p)}] \le \kappa_{\max} \mathbb{E}_p [A_i^{(p)}] = \kappa_{\max} H_{ii}.
\end{equation}
This proves the upper bound inequality. Together, we get the chain of inequality:
\begin{equation}\label{ineqCH}
    \frac{\sigma_w^2}{2B} H_{ii}^2 \leq C_{ii} \leq \frac{\sigma_w^2 \kappa_{\max}}{2B} H_{ii}
\end{equation}

\subsubsection{Bounds on the Scaling Exponent $\gamma$}

We now investigate the implications of these inequalities.

Let the eigenvalues be indexed by $i \in [1, D]$ in descending order { within the fixed layer or parameter subspace under analysis}. {Conditional on the empirical per-layer or subspace-level power-law fit, $C_{ii} = c \cdot H_{ii}^{\gamma}$, where $c > 0$ is a constant,} we have 
\begin{equation}\label{logch}
    \log{C_{ii}} = \gamma \log{H_{ii}} + \log{c}.
\end{equation}

Since the logarithm is a monotonically increasing function, taking the logarithm of both sides for the two inequalities in Eq.(\ref{ineqCH}) preserves the inequality, leading to:
\begin{equation}\label{coi}
    \log{\alpha} + 2\log{H_{ii}} \leq \log{C_{ii}} \leq \log{\beta} + \log{H_{ii}},
\end{equation}
where $\alpha = \frac{\sigma_w^2}{2B} > 0$ and $\beta = \frac{\sigma_w^2 \kappa_{\max}}{2B} > 0$.

\textbf{Case 1: The Upper Limit ($\gamma \le 2$)}
Substituting Eq.(\ref{logch}) into the first inequality of Eq.(\ref{coi}) gives:
\begin{equation}
    (2 -\gamma)\log{H_{ii}} \leq \log{\alpha/c},
\end{equation}
which holds for arbitrary small $H_{ii} \ll 1$, requiring $\gamma \leq 2$.

\textbf{Case 2: The Lower Limit ($\gamma \ge 1$)} Similarly, the second inequality of Eq.(\ref{coi}) leads to:
\begin{equation}
    (\gamma -1)\log{H_{ii}} \leq \log{\beta/c},
\end{equation}
which holds for arbitrary small $H_{ii} \ll 1$, requiring $\gamma \geq 1$.

{Combining both cases, the exponent of any fixed per-layer or subspace-level fit satisfying the stated assumptions is bounded by $1 \le \gamma \le 2$.}
\end{proof}

\subsection{Proofs of Scaling Regimes}
\label{app:scaling_proofs}

We provide the derivations for the limiting cases of the scaling exponent $\gamma$, utilizing the notation established in Section \ref{sec:mechanisms}.

\subsubsection{Proof of Proposition \ref{prop:gamma2}}
\label{app:smr_proof}
We first define the phenomenological conditions required to observe $\gamma=2$. We examine the case where the geometric alignment is perfect but the eigenvalue magnitudes are stochastic.

Assume perfect alignment between local and global basis, i.e., $(\bm{u}_m^{(p)} \cdot \bm{v}_i)^2 = \delta_{mi}$, corresponding to the equality condition of the Cauchy-Schwarz inequality Eq. (\ref{CS}). Then the moments simplify to:
\begin{equation}
    H_{ii} = \mathbb{E}_p [\kappa_i^{(p)}], \quad C_{ii} \propto \mathbb{E}_p [(\kappa_i^{(p)})^2].
\end{equation}
Using the bias-variance decomposition $\mathbb{E}[X^2] = (\mathbb{E}[X])^2 + \text{Var}(X)$, we write:
\begin{equation}
    C_{ii} \propto H_{ii}^2 + \text{Var}_p(\kappa_i^{(p)}).
\end{equation}
If $\text{Var}_p(\kappa_i^{(p)}) \propto (\mathbb{E}_p [\kappa_i^{(p)}])^2 = H_{ii}^2$, then:
\begin{equation}
    C_{ii} \propto H_{ii}^2 \implies \gamma = 2.
\end{equation}

Assume the per-sample Hessian spectrum is locally dominated by the top $n$ eigenvalues with magnitude $\kappa_{m}^{(p)} \approx \bar{\kappa}^{(p)} \gg \kappa_\mathrm{rest}, (m = 1,2,...,n)$ and their associated eigenvectors are aligned (with tiny fluctuations) across different samples $p$. Specifically, if the curvature moment ratio is state-independent:

\subsubsection{Proof of Proposition \ref{prop:gamma1}}

\textbf{Assumptions:}
\begin{enumerate}
    \item \textbf{Spectral Dominance:} The per-sample Hessian spectrum is locally dominated by the top $n$ eigenvalues with magnitude $\kappa_{m}^{(p)} \approx \bar{\kappa}^{(p)} \gg \kappa_\mathrm{rest}$ for $m \leq n$.

    \item \textbf{Local Alignment:} The per-sample dominant eigen-directions $(\bm{u}_m^{(p)} \cdot \bm{v}_i)^2$ are aligned across different samples $p$, but with tiny fluctuation so that $(\bm{u}_m^{(p)} \cdot \bm{v}_i)^2 \to \delta_{mi}$ for $i \leq n$ while $(\bm{u}_m^{(p)} \cdot \bm{v}_i)^2$ remains finite for $i > n$. This tiny fluctuations contribute to the bulk eigenvalues of Global Hessian which is the source of heavy-tailed spectrum.

   \item \textbf{Statistical Homogeneous Magnitude:} $\kappa_m^{(p)}$ for $m \leq n$ are identically distributed (i.i.d.) variables drawn from a distribution characteristic of the loss basin. We define the first and second moments as characteristic constants:
    \begin{equation}
        \mathbb{E}_p[\kappa_m^{(p)}] = \mu_{\kappa}, \quad \mathbb{E}_p[(\kappa_m^{(p)})^2] = \mu_{\kappa^2} = \mu_{\kappa}^2 + \sigma_{\kappa}^2.
    \end{equation}
    Crucially, we assume the variance $\sigma_{\kappa}^2$ is an intrinsic property of the data complexity and is \textit{independent} of the projection direction $m$.
    
\end{enumerate}

\textbf{Derivation:}
We evaluate the global Hessian diagonal $H_{ii}$ and the noise covariance $C_{ii}$ by applying the independence property $\mathbb{E}[XY] = \mathbb{E}[X]\mathbb{E}[Y]$ under the Local Alignment assumption:

For the Hessian:
\begin{align}
    H_{ii} &= \mathbb{E}_p \left[ \sum_{m=1}^n \kappa_m^{(p)} (\bm{u}_m^{(p)} \cdot \bm{v}_i)^2 \right] \\
    &= \sum_{m=1}^n \mathbb{E}_p [\kappa_m^{(p)}] \cdot \mathbb{E}_p [(\bm{u}_m^{(p)} \cdot \bm{v}_i)^2] \\
    &= \mu_{\kappa} \underbrace{\sum_{m=1}^n \mathbb{E}_p [(\bm{u}_m^{(p)} \cdot \bm{v}_i)^2]}_{\mathcal{G}_i \text{ (Geometric Factor)}},
\end{align}
where the last identity results from the Statistical Homogeneous Magnitude assumption. For the Noise Covariance:
\begin{align}
    C_{ii} &\propto \mathbb{E}_p \left[ \sum_{m=1}^n (\kappa_m^{(p)})^2 (\bm{u}_m^{(p)} \cdot \bm{v}_i)^2 \right] \\
    &= \sum_{m=1}^n \mathbb{E}_p [(\kappa_m^{(p)})^2] \cdot \mathbb{E}_p [(\bm{u}_m^{(p)} \cdot \bm{v}_i)^2] \\
    &= \mu_{\kappa^2} \sum_{m=1}^n \mathbb{E}_p [(\bm{u}_m^{(p)} \cdot \bm{v}_i)^2] \\
    &= \mu_{\kappa^2} \mathcal{G}_i.
\end{align}

\textbf{Conclusion:}
Combining the two expressions, we find the relationship depends on the ratio of the moments:
\begin{equation}
    C_{ii} \propto \frac{\mu_{\kappa^2}}{\mu_{\kappa}} H_{ii} = \left( \mu_{\kappa} + \frac{\sigma_{\kappa}^2}{\mu_{\kappa}} \right) H_{ii}.
\end{equation}
Since $\mu_{\kappa}$ and $\sigma_{\kappa}^2$ are scalar constants characteristic of the dataset and do not depend on the eigen-index $i$, the pre-factor is a constant. Thus, the scaling relationship is approximately linear with $\gamma \to 1$.

\section{Multi-Layer Validation}
\label{app:multilayer_validation}


We further test the noise--curvature relation in a deeper MNIST MLP trained with CE loss to $100\%$ training accuracy. These experiments separate two questions that are conflated in a single-layer analysis. First, \Cref{fig:multilayer_per_layer} shows that, within each layer, the diagonal elements of $\mathbf C$ and $\mathbf H$ exhibit clean empirical power laws with well-defined, layer-dependent fitted exponents. Second, after all analyzed layers are concatenated into a joint parameter vector, the AWD second-moment prediction still tracks the covariance structure: \Cref{fig:multilayer_commutativity} shows approximate covariance--Hessian diagonalization in the joint Hessian eigenbasis, and \Cref{fig:multilayer_loglog} shows that the AWD-derived covariance closely follows the empirical covariance even though the joint $\mathbf C$--$\mathbf H$ log-log relation does not collapse to one straight line. This distinction is expected because different layers have different fitted exponents and the joint Hessian contains non-negligible inter-layer coupling, as quantified in \Cref{fig:multilayer_coupling}.

\newpage


\begin{figure}[h!]
    \centering
    \includegraphics[width=1.0\linewidth]{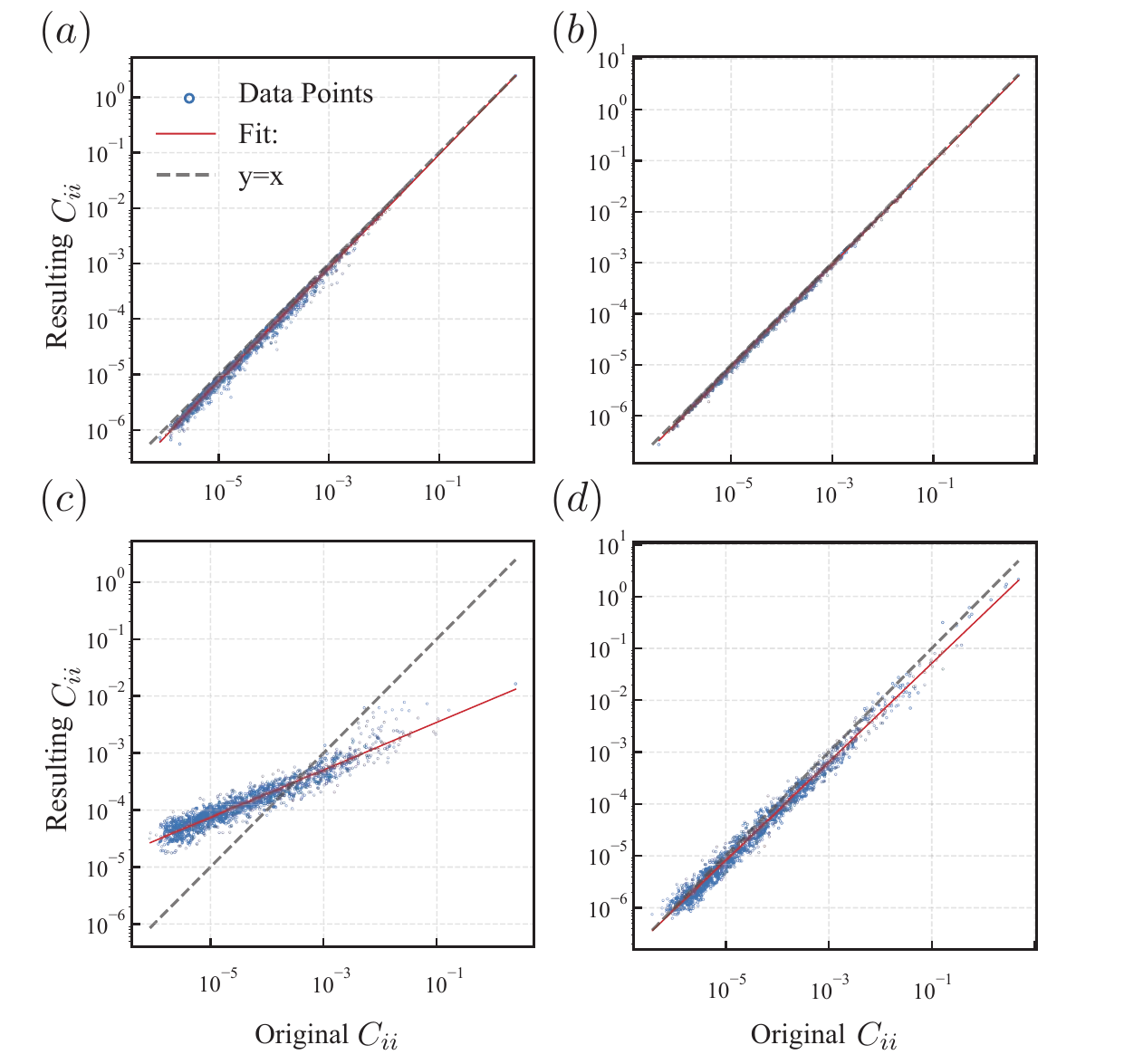}
    \caption{Log-log plots of the diagonal elements of the resulting Covariance by ``suppression experiment" versus the original Covariance.\textbf{(a, b)} Covariance derived from per-sample Hessians retaining only the dominant eigenvalues.
    \textbf{(c, d)} Covariance derived after replacing the dominant eigenvalues with their mean value.
    Columns correspond to distinct models: \textbf{(a, c)} MLP on CIFAR-10 (CE loss, $\gamma \approx 1.4$) and \textbf{(b, d)} MLP on MNIST (MSE loss, $\gamma \approx 1$).}
    \label{fig:suppres}
\end{figure}

\begin{figure}[t]
    \centering
    \includegraphics[width=1\linewidth]{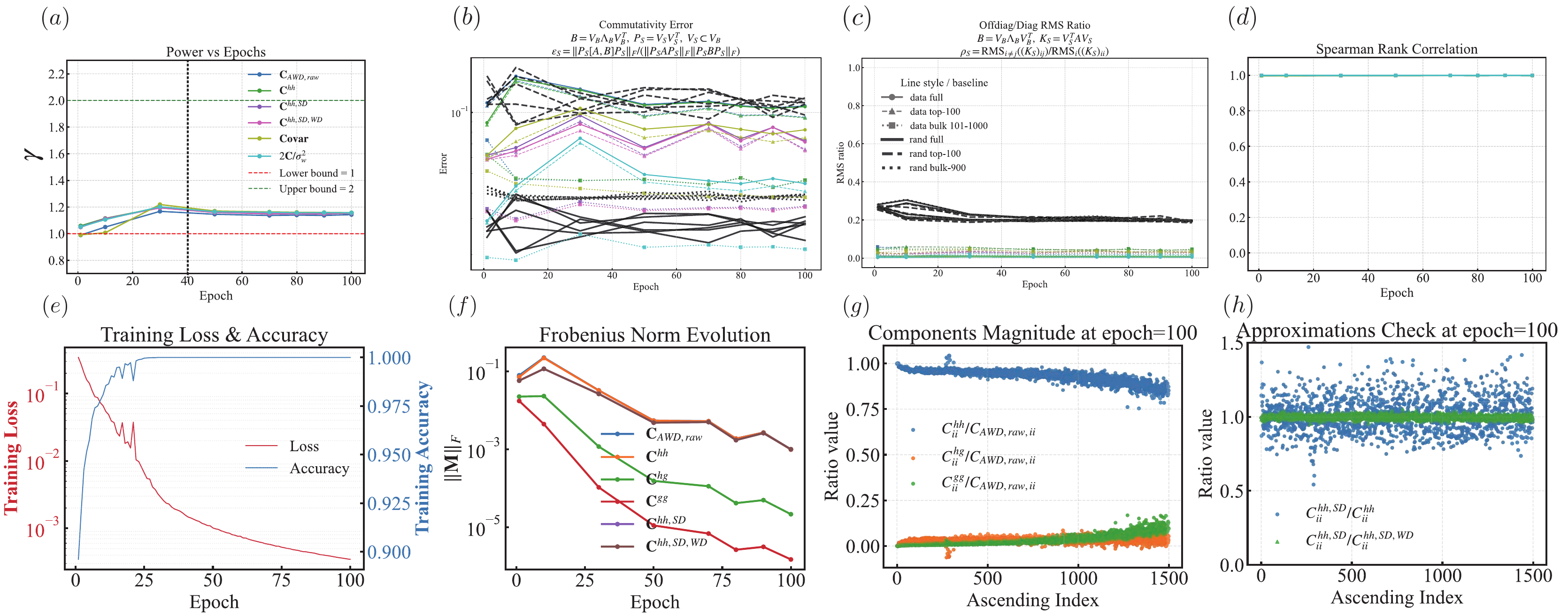}
    \caption{Comprehensive Analysis of SGD Noise Structure and Approximations (MLP on MNIST, CE Loss). 
\textbf{(a)} Evolution of the scaling exponent $\gamma$. The exponent $\gamma$ remains robustly within the interval $[1, 2]$ throughout training, it gradually increases as training progress and tends to deviate from the lower bound 1 when near the global minimum. Notably, in the terminal phase (near the global minimum, indicated by the vertical dashed line), the scaling exponent derived from the raw empirical covariance ($\mathbf{Covar}$) converges to match both the AWD-derived covariance ($\mathbf{C}_{AWD,raw}$) and its fully or partially approximations ($ \mathbf{C}^{hh} , \mathbf{C}^{hh,SD} , \mathbf{C}^{hh,SD,WD} , \mathbf{C}^{hh,SD,WD,LI}$, see \ref{app:awd_approx} for details), showing that the approximations used  in Theorem \ref{thm:spectral_noise} are valid. 
\textbf{(b)} {Normalized commutator error $\epsilon_F$ between each covariance estimate and $\mathbf H$, compared with the spectrum-preserving random baseline obtained by randomly rotating $\mathbf C$ while keeping its eigenvalue spectrum. This panel reports the textbook commutator diagnostic, which can be amplified by Hessian eigengaps in head-heavy spectra. The three reported windows are the full retained off-diagonal block, the top-100 head block $1\le i\ne j\le100$, and the bulk block $101\le i\ne j\le1000$.}
\textbf{(c)} {Entrywise off-diagonal-to-diagonal ratio $\rho_{\rm off/diag}$ for $\widetilde{\mathbf C}=V_H^\top\mathbf C V_H$. Smaller values indicate stronger approximate diagonalization in the Hessian eigenbasis; the random curve uses the same spectrum-preserving baseline as in panel~(b).}
\textbf{(d)} Spearman Rank Correlation between the diagonals of $\mathbf{C}$ and $\mathbf{H}$ in $\mathbf{H}$'s eigenbasis. A value of 1.0 indicates a strict monotonic correspondence between the noise and curvature spectra. 
\textbf{(e)} Training dynamics showing the loss and accuracy; the model converges to 100\% training accuracy around epoch 30. 
\textbf{(f)} Evolution of Frobenius norms validating the gradient noise approximation, see \ref{app:awd_approx} for the definition of these variables. The dominance of the Hessian-weight term ($\mathbf{C}^{hh}$) over the gradient-activity terms ($\mathbf{C}^{hg}, \mathbf{C}^{gg}$) confirms the ``Vanishing Gradients" assumption near the global minimum. The convergence of terms ($\mathbf{C}_{AWD,raw}, \mathbf{C}^{hh} , \mathbf{C}^{hh,SD} , \mathbf{C}^{hh,SD,WD}$) validates the \textit{Independence of Distinct Samples} and \textit{Local Isotropy} assumptions in Theorem \ref{thm:spectral_noise}. 
\textbf{(g)} Diagonals magnitude of ($\mathbf{C}^{hh}, \mathbf{C}^{hg}, \mathbf{C}^{gg}$) compared to $\mathbf{C}_{AWD,raw}$ at epoch 100 (near the global minimum) vs. descending basis index, providing a detailed view of the dominance of the Hessian term ($\mathbf{C}^{hh}$). 
\textbf{(h)}  Diagonal magnitudes of ($\mathbf{C}^{hh}, \mathbf{C}^{hh,SD,WD}$) compared with $\mathbf{C}^{hh,SD}$ at epoch 100 (late-training regime) vs. descending basis index. Here $\mathbf{C}^{hh,SD}$ is obtained from $\mathbf{C}^{hh}$ by dropping cross-sample terms, while $\mathbf{C}^{hh,SD,WD}$ is obtained from $\mathbf{C}^{hh,SD}$ by further replacing $\mathcal{M}_{p,mn}$ with $\sigma_{w,pm}^2\delta_{mn}$. Their close agreement indicates that both corrections are small in this regime.
}
\label{fig:metrics_mnist_fc_cse}
\end{figure}

\begin{figure}[t]
    \centering
    \includegraphics[width=1\linewidth]{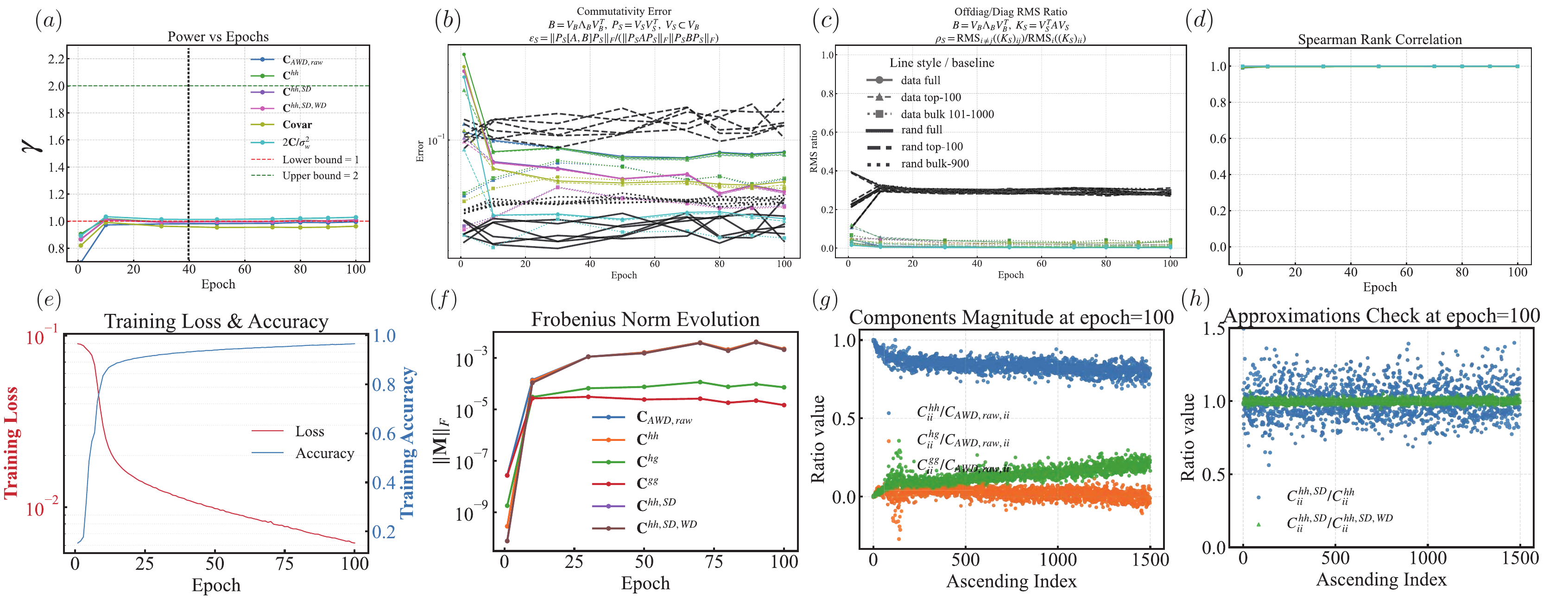}
    \caption{Comprehensive Analysis of SGD Noise Structure and Approximations (MLP on MNIST, MSE Loss)
    \textbf{(a)} Different from the case with CE loss, the exponent $\gamma$ remains 1 when the model is near the global minimum. }
    \label{fig:metrics_mnist_fc_mse}
\end{figure}

\begin{figure}[t]
    \centering
    \includegraphics[width=1\linewidth]{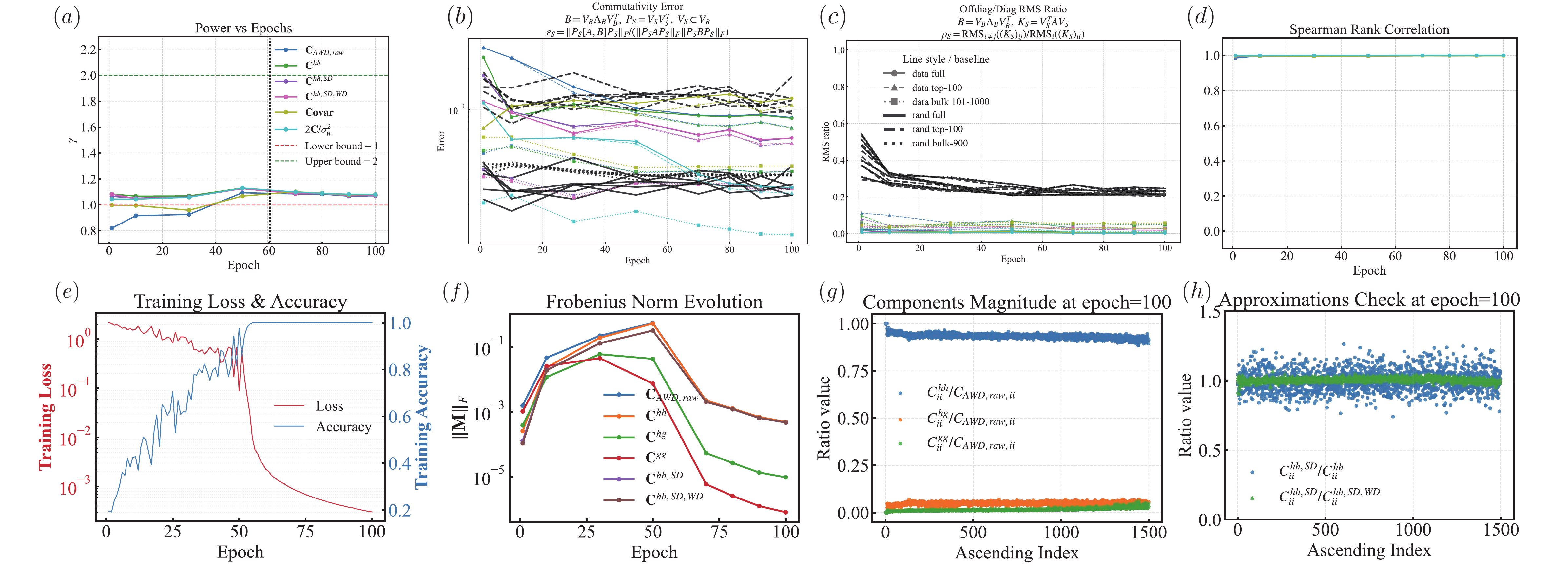}
    \caption{Comprehensive Analysis of SGD Noise Structure and Approximations (CNN on CIFAR-10, CE Loss)}
    \textbf{(a)} The exponent $\gamma$ gradually increases as the training progress and shows significant deviation from the lower bound 1 when the model is near the global minimum.
    \label{fig:metrics_cifar_cnn_cse}
\end{figure}

\begin{figure}[t]
    \centering
    \includegraphics[width=0.7\linewidth]{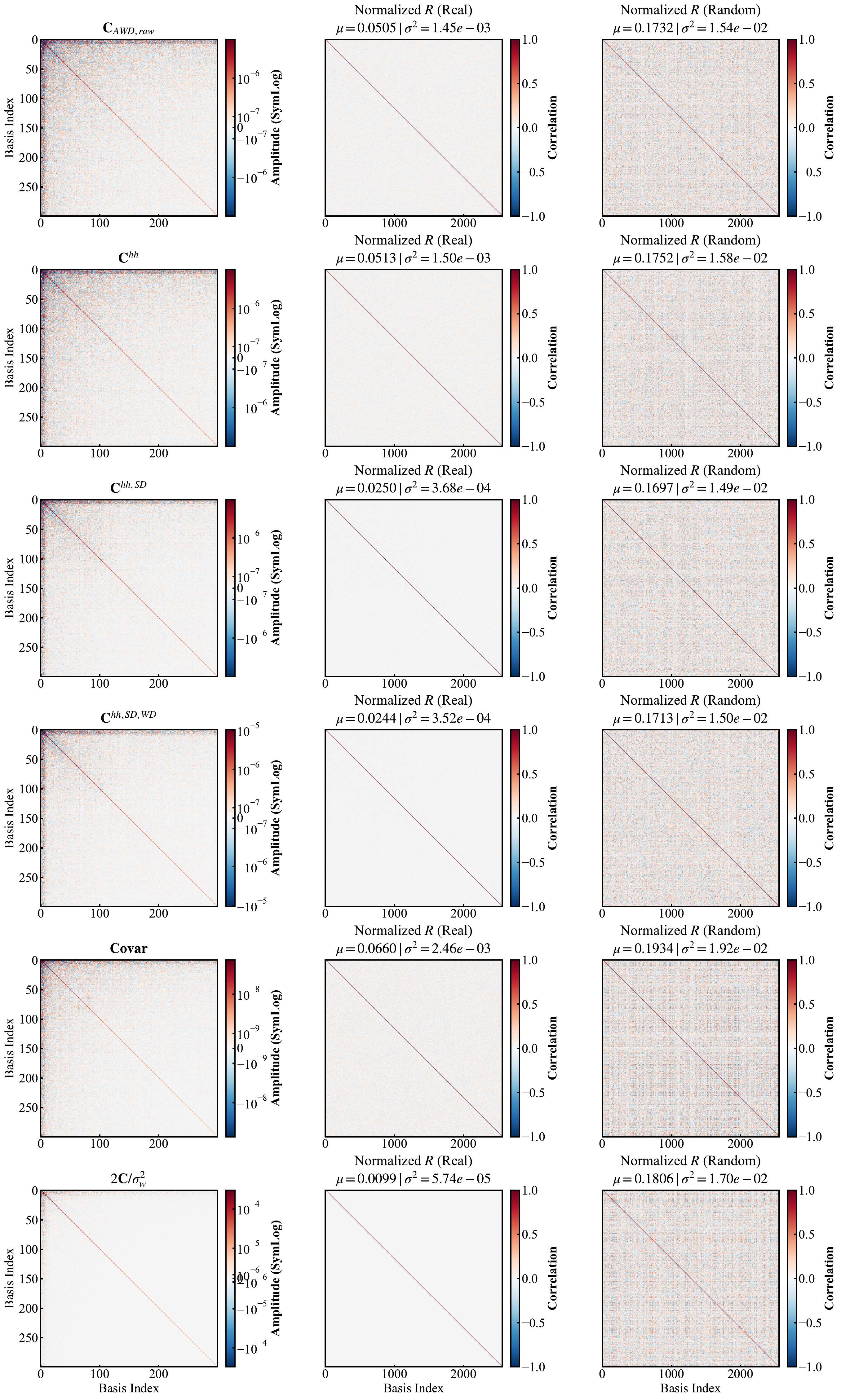}
    \caption{Visualization of Noise-Curvature Alignment (CNN on CIFAR-10 with CE loss, trained to convergence with $100\%$ training accuracy). The first column shows the Empirical Covariance Matrix $\mathbf{Covar}$, AWD-derived $\mathbf{C}_{AWD,raw}$, $\mathbf{C}^{hh,SD}$, $\mathbf{C}^{hh,SD,WD}$, $\mathbf{C}^{hh}$ and $2\mathbf{C}/\sigma_w^2$ in the Hessian eigenbasis which consistently show arrowhead structures. 
    The second column shows the corresponding normalized Correlation Matrix $\mathbf{R}$ for real data, which is strongly diagonal (red diagonal on bright background), indicating precise geometric alignment. The third column shows the randomized baseline $\mathbf{R}_{\text{rand}}$ (where $\mathbf{C}$ is randomly rotated) resembles ``static noise," showing that without alignment, energy leaks across all dimensions. $\mu$ and $\sigma^2$ represent the mean value and variance of the absolute values of off diagonal elements.}
    \label{fig:cmt_cifar_cnn_cse}
\end{figure}

\begin{figure}[t]
    \centering
    \includegraphics[width=0.7\linewidth]{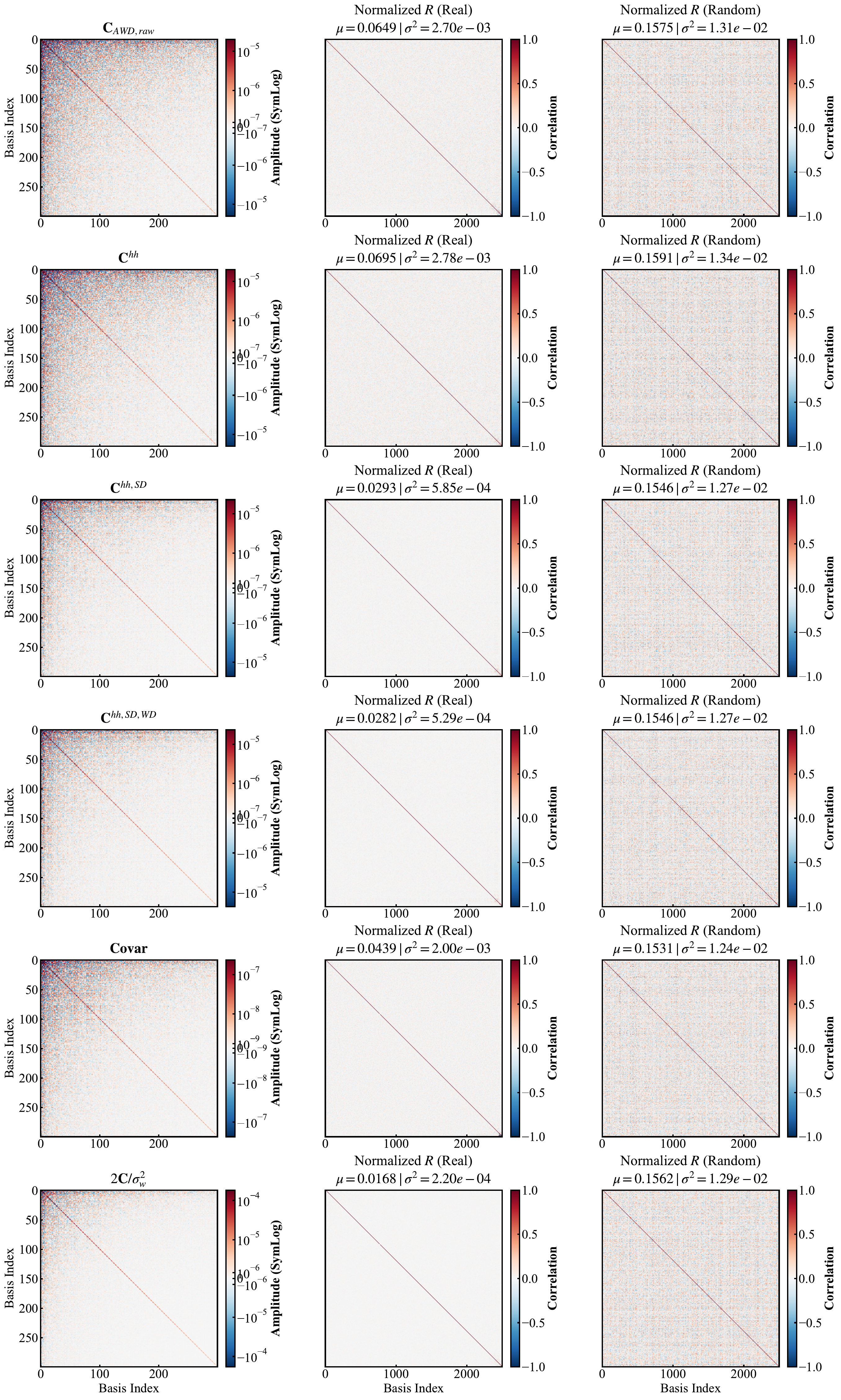}
    \caption{Visualization of Noise-Curvature Alignment (MLP on MNIST with CE loss, trained to convergence with $100\%$ training accuracy). }
    \label{fig:cmt_mnist_fc_cse}
\end{figure}

\begin{figure}[t]
    \centering
    \includegraphics[width=0.7\linewidth]{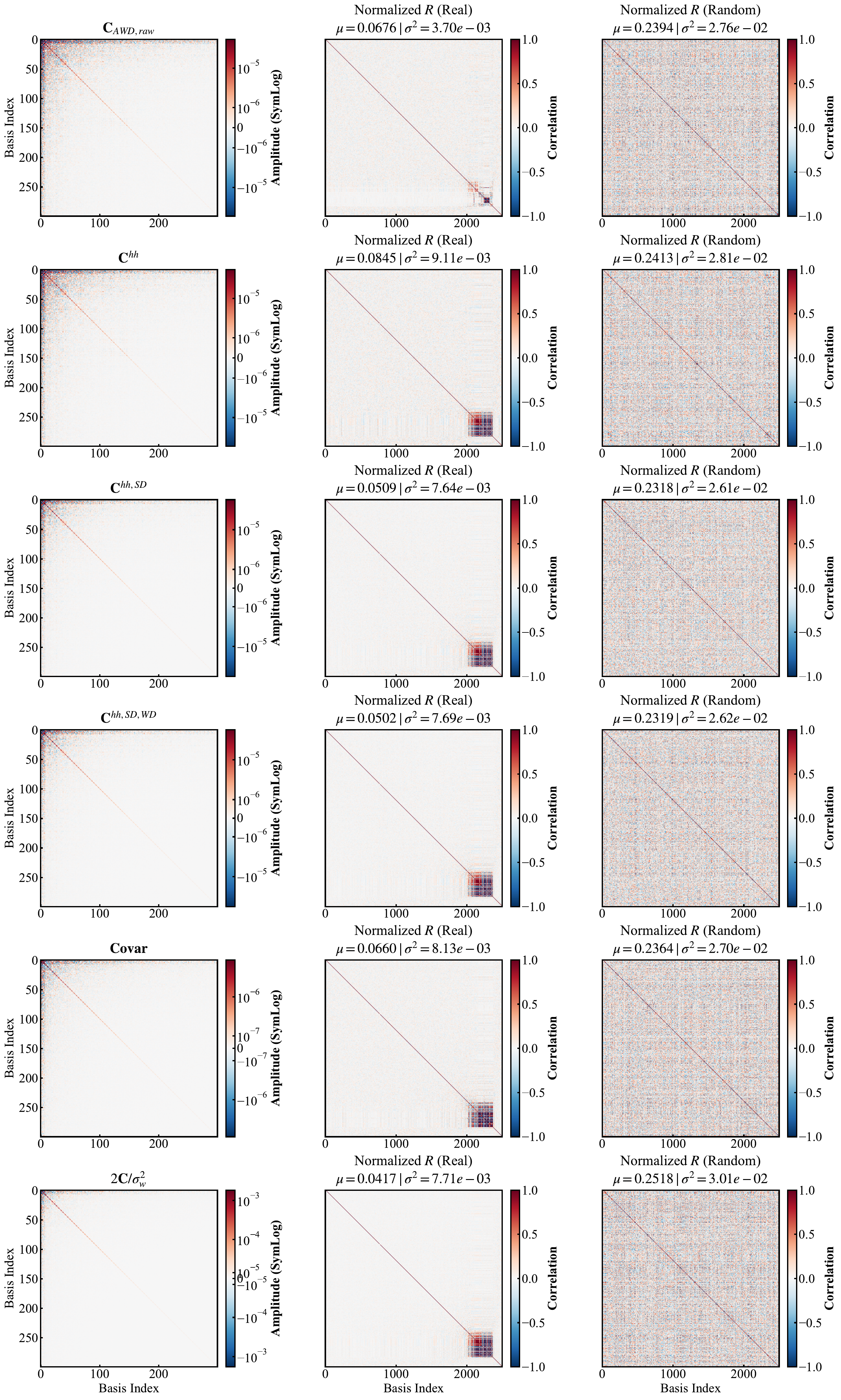}
    \caption{Visualization of Noise-Curvature Alignment (MLP on MNIST with MSE loss, trained to approach $\sim 95\%$ training accuracy). The dark block in the second column originates from the degenerated spectrum where the eigenvalues are too small for these flatter direction.}
    \label{fig:cmt_mnist_fc_mse}
\end{figure}

\begin{figure}[t]
    \centering
    \includegraphics[width=1\linewidth]{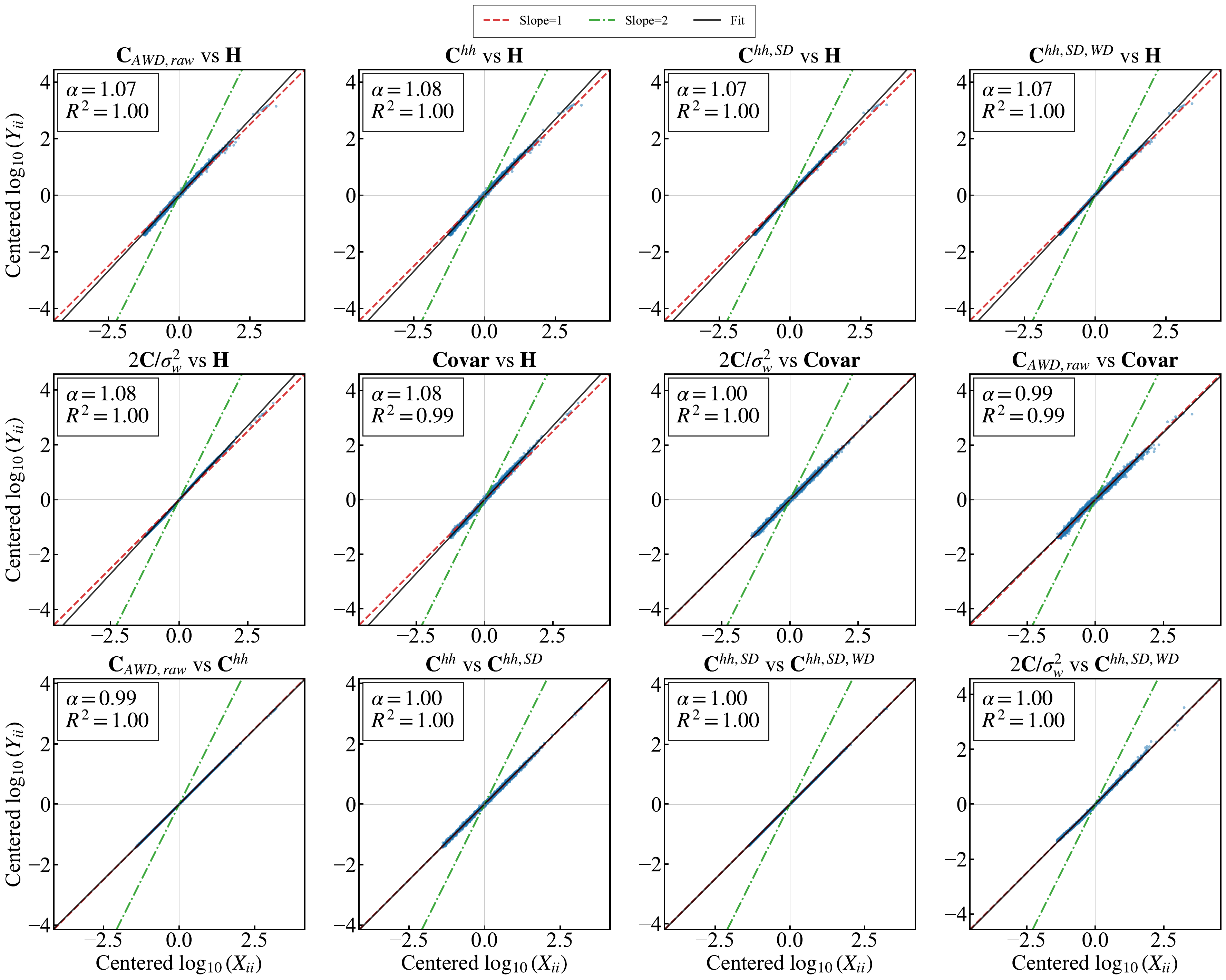}
    \caption{Log-log scaling analysis of diagonal elements (Top $\mathcal{N}=1500$ eigenvalues) for models trained to convergence $100\%$ training accuracy (CNN on CIFAR-10 with CE loss). To isolate the scaling exponent $\gamma$ (slope) from magnitude differences, data (dots) are mean-centered; solid lines indicate linear fits. Despite differences in magnitude, the centered log-log plots comparing the Empirical Covariance $\mathbf{Covar}$, Hessian $\mathbf{H}$, and AWD-derived Covariance $2\mathbf{C}/\sigma^2_w$ (including approximations $\mathbf{C}_{AWD,raw}, \mathbf{C}^{hh}, \mathbf{C}^{hh,SD}, \mathbf{C}^{hh,SD,WD}$) demonstrate the validity of our theory and the accuracy of the employed approximations.}
    \label{fig:lglg_cifar_cnn_cse}
\end{figure}

\begin{figure}[t]
    \centering
    \includegraphics[width=1\linewidth]{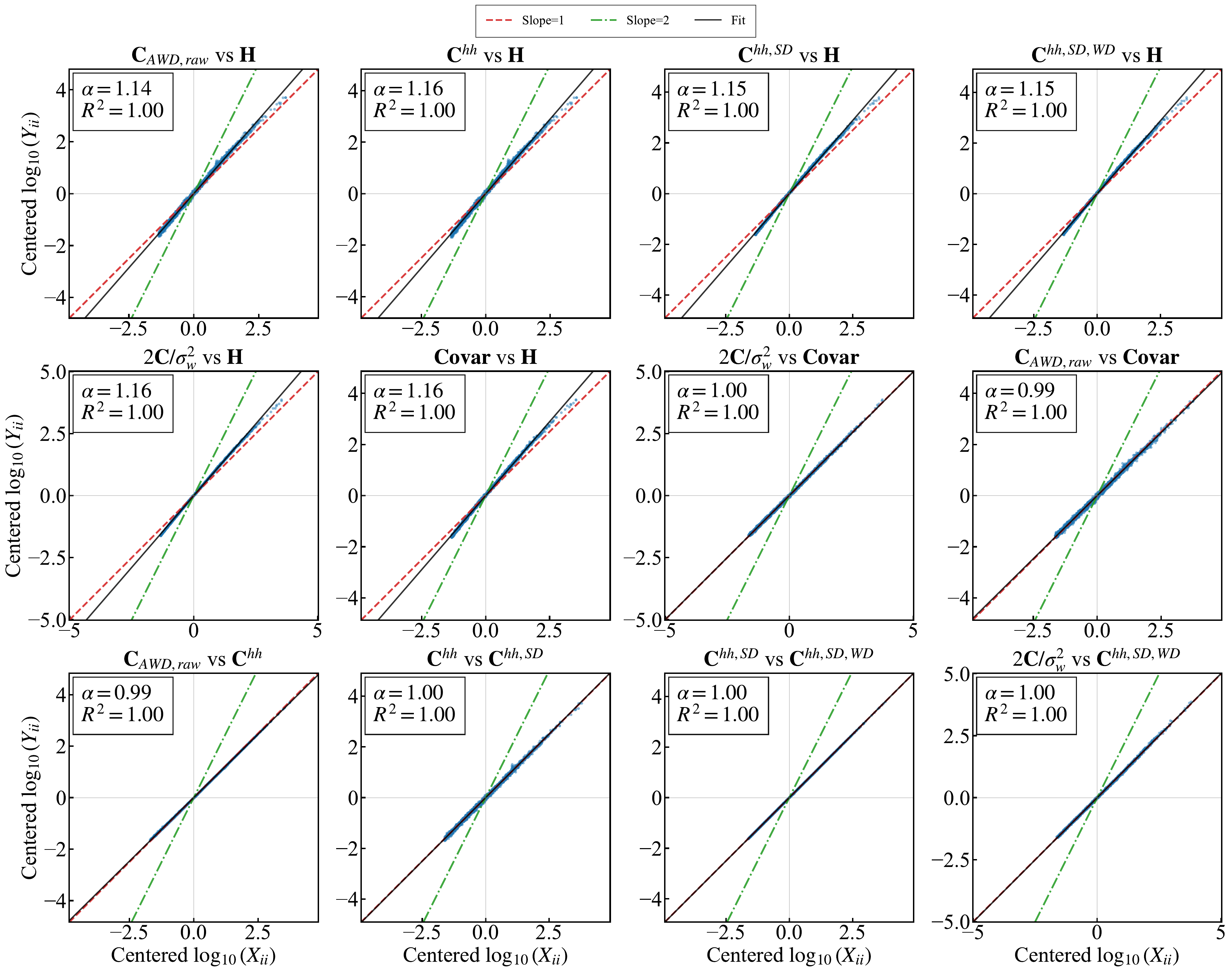}
    \caption{Log-log scaling analysis of diagonal elements (Top $\mathcal{N}=1500$ eigenvalues) for models trained to convergence $100\%$ training accuracy (MLP on MNIST with CE loss). }
    \label{fig:lglg_mnist_fc_cse}
\end{figure}

\begin{figure}[t]
    \centering
    \includegraphics[width=1\linewidth]{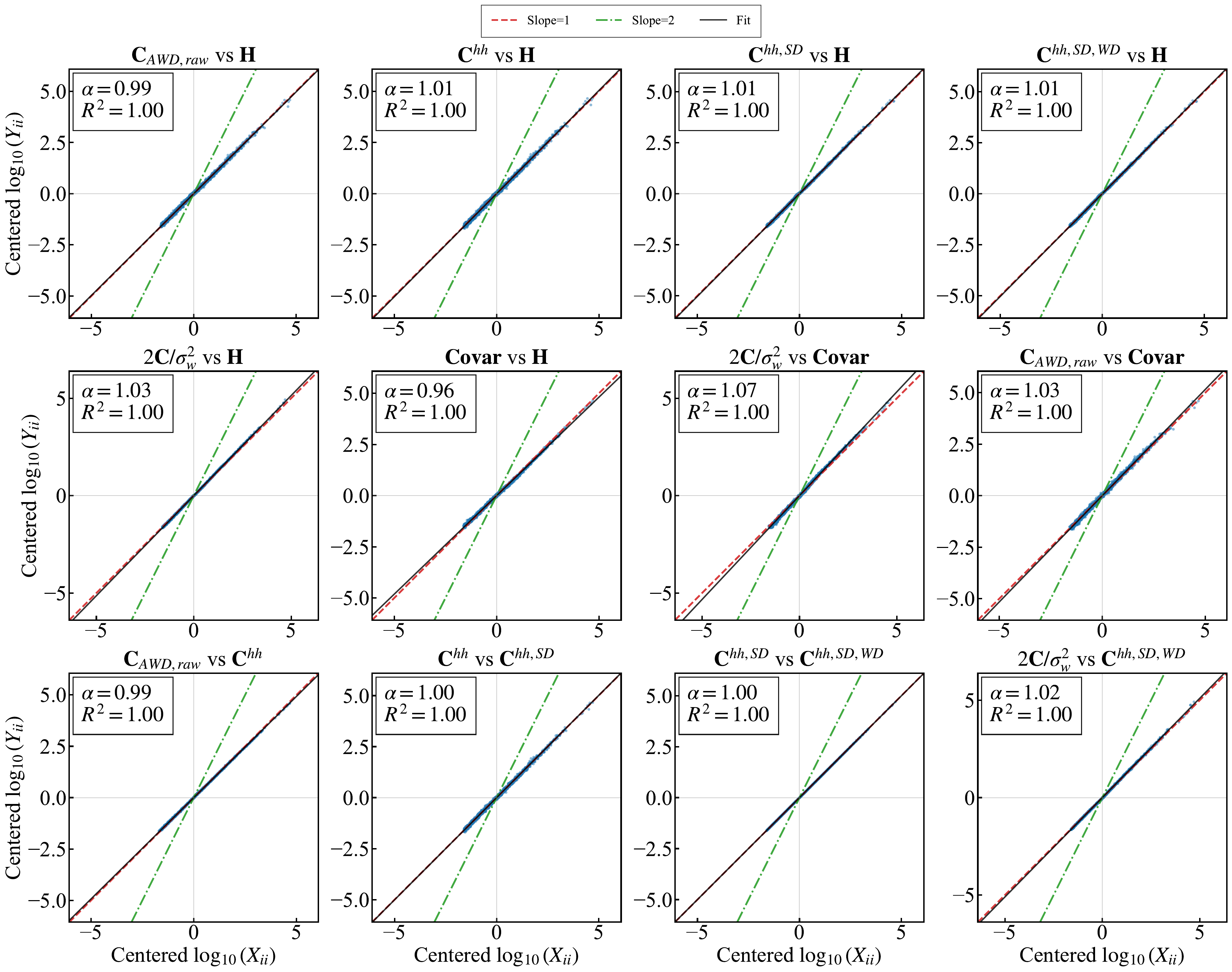}
    \caption{Log-log scaling analysis of diagonal elements (Top $\mathcal{N}=1500$ eigenvalues) for models trained to convergence $100\%$ training accuracy (MLP on MNIST with MSE loss). }
    \label{fig:lglg_mnist_fc_mse}
\end{figure}

\begin{figure}[t]
    \centering
    \includegraphics[width=1\linewidth]{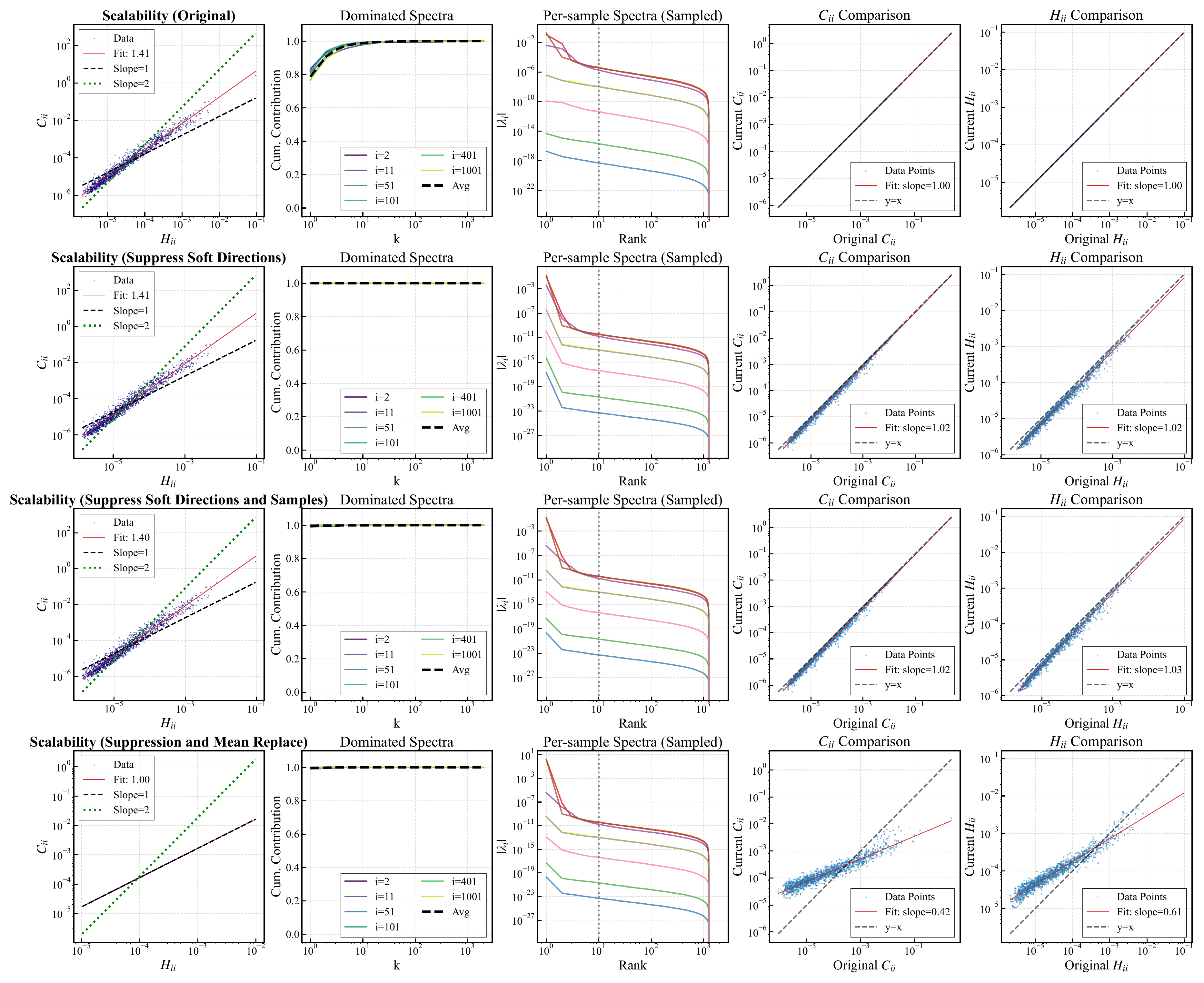}
    \caption{``Suppression experiment" for MLP on CIFAR-10 with CE loss ($\gamma \sim 1.4$ significantly larger than 1).
    The 1st column shows the log-log plot of diagonals between $2C_{ii}/\sigma_w^2 =  \mathbb{E}_{p} \left[\sum_{m} (\kappa_m^{(p)})^2 \left( \bm{u}_m^{(p)} \cdot \bm{v}_i \right)^2\right] $ and $H_{ii} = \mathbb{E}_{p} \left[\sum_{m} (\kappa_m^{(p)}) \left( \bm{u}_m^{(p)} \cdot \bm{v}_i \right)^2\right]$, while each row corresponds to different revised per-sample spectra. The 2nd column shows the cumulated contribution of per-sample eigenmode $\kappa^{(p)}_k$ defined by $\mathbb{E}_{p} \left[\sum^k_{m=1} (\kappa_m^{(p)})^2 \left( \bm{u}_m^{(p)} \cdot \bm{v}_i \right)^2\right]/\mathbb{E}_{p} \left[\sum_{m} (\kappa_m^{(p)})^2 \left( \bm{u}_m^{(p)} \cdot \bm{v}_i \right)^2\right]$ for different global hessian eigen-directions $i \in [1,N]$, where the bold black dashed line is the average among $i$. It is obvious that the first stiff mode contributes $> 80\%$. The 3rd column shows the per-sample hessian spectra before (the first row) and after (the rest rows) revision.
    The 4th and 5th columns show the log log plot of the diagonals between the raw and cooked $2C_{ii}/\sigma_w^2$ and $H_{ii}$ generated by original and revised per-sample hessian spectra separately. The 1st row shows the case where the per-sample hessian remains unchanged. The 2nd row shows the case keeping the first stiff eigenvalue $\kappa_1^{(p)}$ unchanged while suppressing the rest (scaling by $1e-5$). In this case, $2C_{ii}/\sigma_w^2$ and $H_{ii}$ hardly change. For the 3rd row, based on the manipulation in the 2nd row, we further suppress the contribution (scaling by $1e-3$) of non-stiff samples for those with $\kappa_1^{(p)}< \theta \ \text{max}_p \kappa_1^{(p)}$, here we set the threshold $\theta = 0.001$ that  keeps 36 out of 200 total samples. In this case, $2C_{ii}/\sigma_w^2$ and $H_{ii}$ hardly change. For the last row, we replace the remaining $\kappa_1^{(p)}$ to their mean value. $2C_{ii}/\sigma_w^2$ and $H_{ii}$ change significantly, yielding $\gamma=1$.}
    \label{fig:abl_cifar_mlp_cse}
\end{figure}

\begin{figure}[t]
    \centering
    \includegraphics[width=1\linewidth]{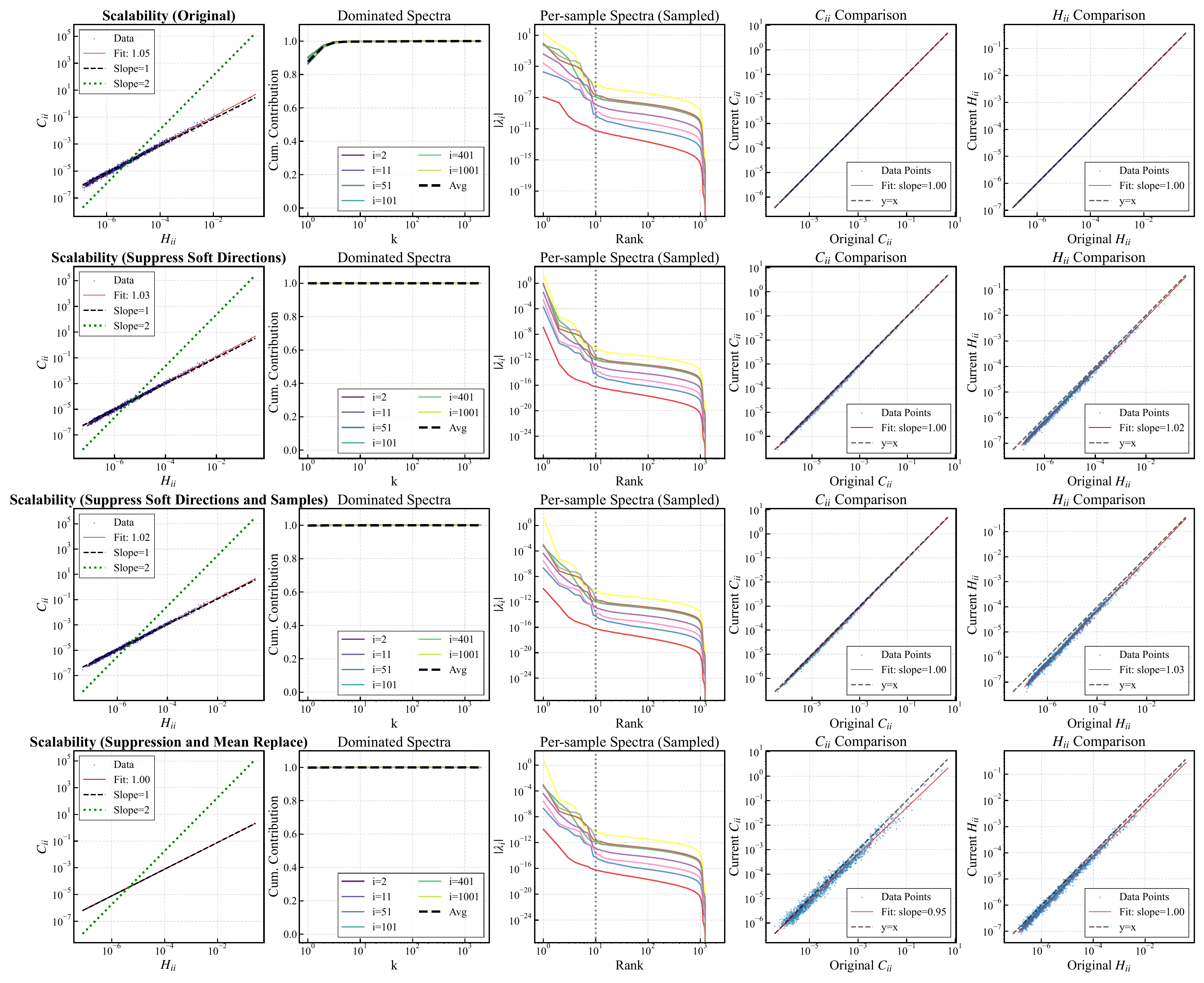}
    \caption{``Suppression experiment" for MLP on MNIST with MSE loss ($\gamma \sim 1.05$ near to 1).  Similar to Figure \ref{fig:abl_mnist_fc_cse}, here the threshold $\theta = 0.05$ and keeps 33 stiff samples out of 200. Compared to the raw data, cooked $2C_{ii}/\sigma_w^2$ and $H_{ii}$ slightly change but keep the linear scaling.}
    \label{fig:abl_mnist_fc_mse}
\end{figure}

\begin{figure}[t]
    \centering
    \includegraphics[width=1\linewidth]{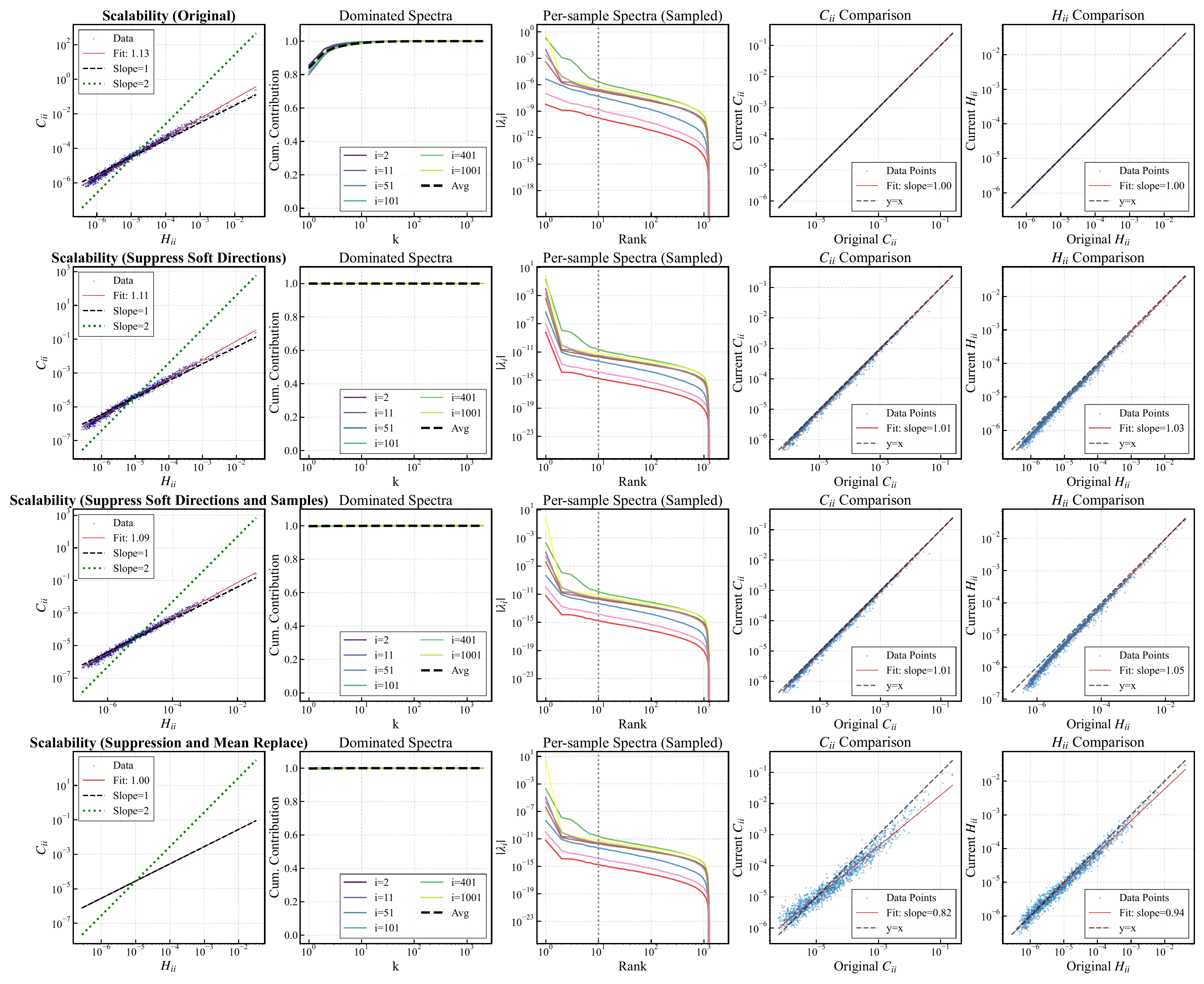}
    \caption{``Suppression experiment" for MLP on MNIST with CE loss ($\gamma \sim 1.1$ slightly larger than 1). The threshold $\theta = 0.05$ and keeps 20 stiff samples out of 200. Compared to the raw data, cooked $2C_{ii}/\sigma_w^2$ and $H_{ii}$ changes to some extend.}
    \label{fig:abl_mnist_fc_cse}
\end{figure}

\begin{figure}[t]
    \centering
    \includegraphics[width=1\linewidth]{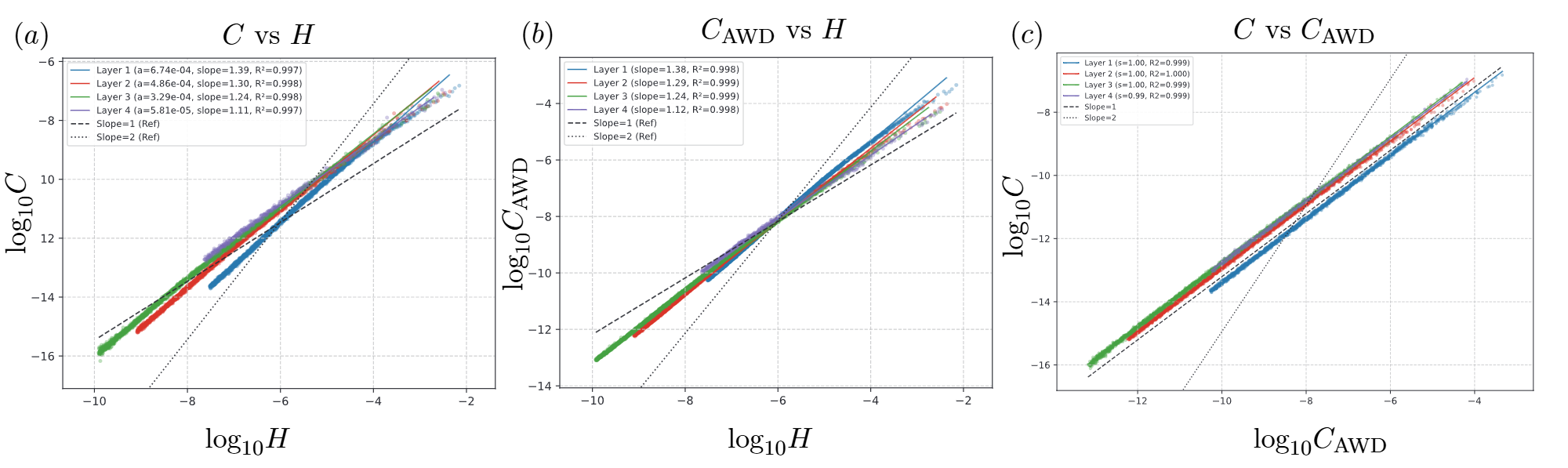}
    \caption{Log-log plots (in Hessian eigenbasis) of the diagonal elements for the multi-layer MNIST MLP. (a) directly numerically computed covariance $C$ against the Hessian $H$. Each layer exhibits a clean power-law relationship $C \propto H^\gamma$ with $R^2 > 0.99$. The magnitude ranges of the diagonal elements of $C$ and $H$ overlap strongly across layers. The fitted exponent $\gamma$ decreases from $\approx 1.39$ (Layer 1, blue) to $\approx 1.11$ (Layer 4, purple), but remains within $(1,2)$ across all layers, consistent with our theoretical prediction. (b) the AWD-based $C_{\text{AWD},ij}=E_p \left[\sum_m (\kappa_m^{(p)})^2 (u_m^{(p)} \cdot v_i)(u_m^{(p)} \cdot v_j)\right]$ (the 2nd moment of persample hessian) against the Hessian H which reproduces the empirical slopes faithfully. (c) directly numerically computed covariance $C$ against the AWD-based $C_{\text{AWD}}$, showing almost straight lines with slope = 1.}
    \label{fig:multilayer_per_layer}
\end{figure}

\begin{figure}[t]
    \centering
    \includegraphics[width=1\linewidth]{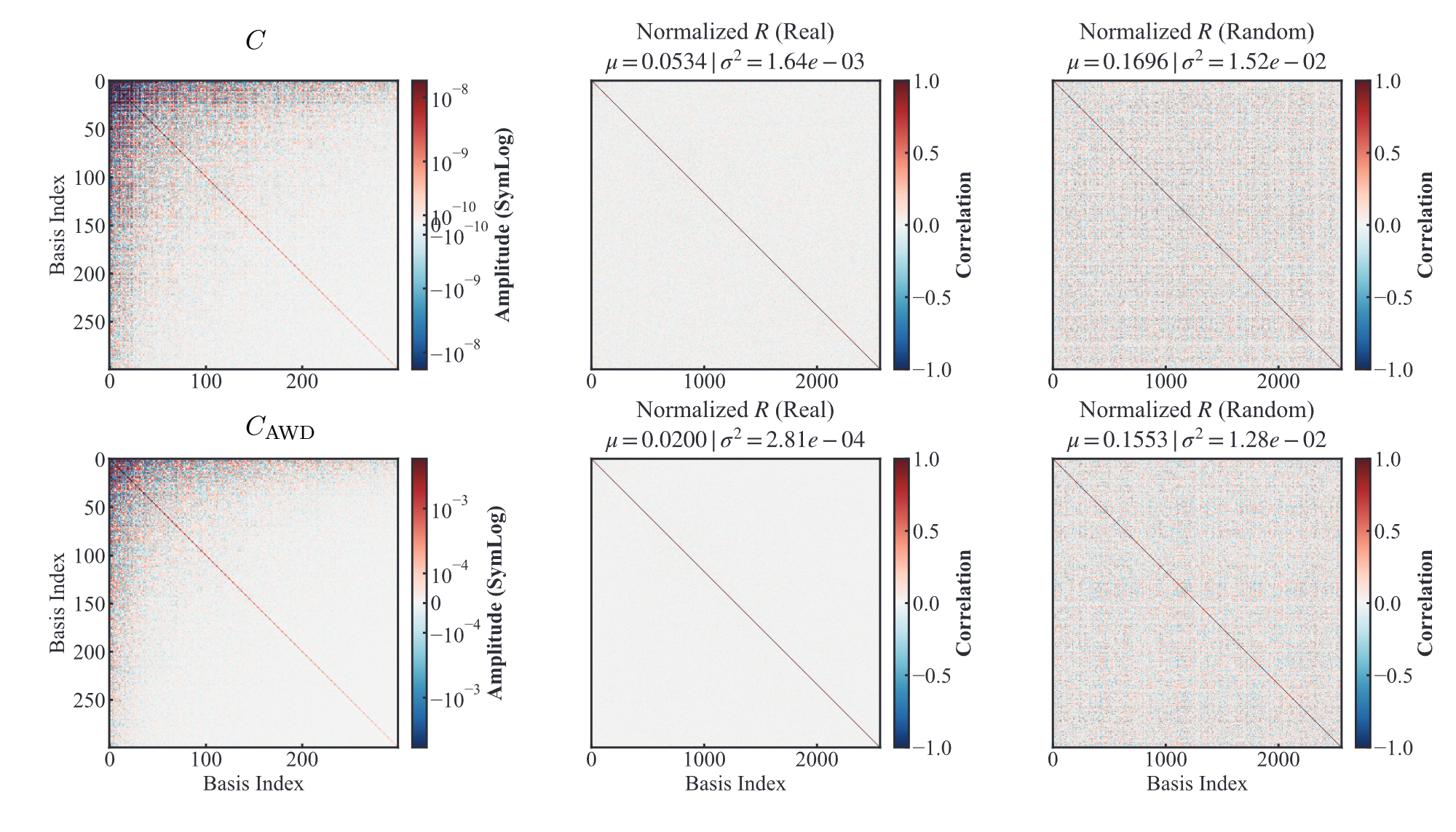}
    \caption{Approximate commutativity in the joint multi-layer parameter space. The empirical covariance and the AWD-derived covariance are shown in the global Hessian eigenbasis. Their scale-invariant correlation matrices have substantially smaller off-diagonal mean and variance than randomized baselines with the same spectra, indicating that the approximate $\mathbf C$--$\mathbf H$ commutativity persists beyond a single layer.}
    \label{fig:multilayer_commutativity}
\end{figure}

\begin{figure}[t]
    \centering
    \includegraphics[width=1\linewidth]{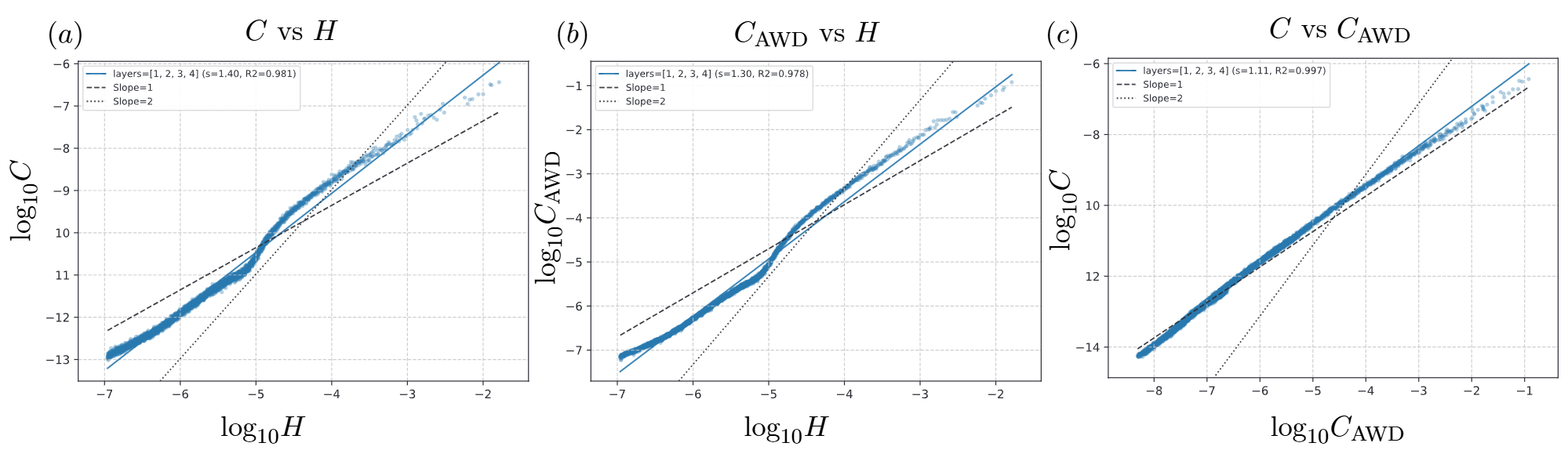}
    \caption{Joint multi-layer log-log analysis. Unlike the per-layer plots in \Cref{fig:multilayer_per_layer}, the joint $C_{ii}$--$H_{ii}$ relation does not form a single clean line in log-log coordinates. Nevertheless, the AWD-derived covariance closely follows the empirical covariance, and empirical $C_{ii}$ is approximately linear in $C_{\mathrm{AWD},ii}$, confirming that the second-moment description remains accurate even when a single joint exponent is not meaningful.}
    \label{fig:multilayer_loglog}
\end{figure}

\begin{figure}[t]
    \centering
    \includegraphics[width=1\linewidth]{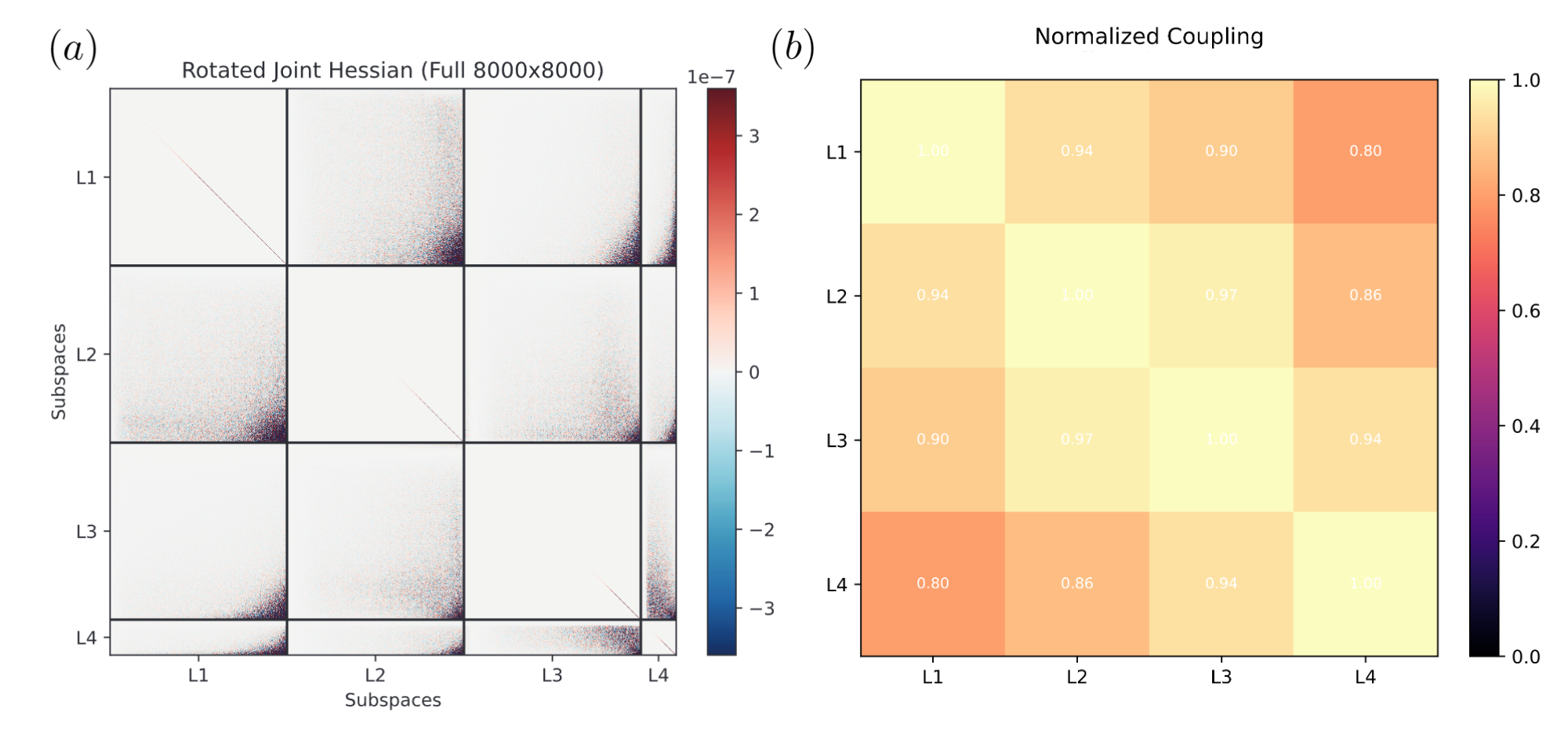}
    \caption{Layerwise decomposition of the joint Hessian $\mathbf{H}\in\mathbb{R}^{D\times D}$ ($D=\sum_{l}d_{l}$) for a 3-hidden-layer MLP with the output layer. Let $\hat{H}^{(l)}$ be the single-layer Hessian of layer $l$ with eigenbasis $V^{(l)}$, and define the block-diagonal rotation matrix $Q=\mathrm{blkdiag} \bigl(V^{(1)},V^{(2)},V^{(3)},V^{(4)}\bigr)$. (a) The rotated Hessian $H_{\text{rot}} = Q^{\top} H Q$. While each diagonal block is approximately diagonalised, the off-diagonal blocks remain substantial, indicating non-negligible inter-layer coupling. (b) the effect normalized coupling $|| H_{\text{rot},L_i,L_j}||\_F / (||H_{\text{rot},L_i,L_i}||\_F ||H_{\text{rot},L_j,L_j}||\_F)^{1/2}$ defined by normalized Frobenius Norm of each blocks $H_{\text{rot},L_i,L_j}$.}
    \label{fig:multilayer_coupling}
\end{figure}

\clearpage
\section*{NeurIPS Paper Checklist}

\begin{enumerate}

\item {\bf Claims}
    \item[] Question: Do the main claims made in the abstract and introduction accurately reflect the paper's contributions and scope?
    \item[] Answer: \answerYes{} 
    \item[] Justification: The abstract and introduction state the paper's scope: characterizing the SGD noise covariance--Hessian relation through AWD.
    \item[] Guidelines:
    \begin{itemize}
        \item The answer \answerNA{} means that the abstract and introduction do not include the claims made in the paper.
        \item The abstract and/or introduction should clearly state the claims made, including the contributions made in the paper and important assumptions and limitations. A \answerNo{} or \answerNA{} answer to this question will not be perceived well by the reviewers. 
        \item The claims made should match theoretical and experimental results, and reflect how much the results can be expected to generalize to other settings. 
        \item It is fine to include aspirational goals as motivation as long as it is clear that these goals are not attained by the paper. 
    \end{itemize}

\item {\bf Limitations}
    \item[] Question: Does the paper discuss the limitations of the work performed by the authors?
    \item[] Answer: \answerYes{} 
    \item[] Justification: The paper states the local-perturbation and statistical covariance assumptions, distinguishes single-layer fits from full-space behavior, and identifies open mechanisms behind the CE--MSE distinction as future work. A key practical limitation is that computing Hessians, especially the second moment of per-sample Hessians, is computationally expensive; consequently, our experiments are restricted to relatively simple models such as MLPs and CNNs on MNIST and CIFAR-10, and the second-moment estimates are computed using sampled subsets of the data.
    \item[] Guidelines:
    \begin{itemize}
        \item The answer \answerNA{} means that the paper has no limitation while the answer \answerNo{} means that the paper has limitations, but those are not discussed in the paper. 
        \item The authors are encouraged to create a separate ``Limitations'' section in their paper.
        \item The paper should point out any strong assumptions and how robust the results are to violations of these assumptions (e.g., independence assumptions, noiseless settings, model well-specification, asymptotic approximations only holding locally). The authors should reflect on how these assumptions might be violated in practice and what the implications would be.
        \item The authors should reflect on the scope of the claims made, e.g., if the approach was only tested on a few datasets or with a few runs. In general, empirical results often depend on implicit assumptions, which should be articulated.
        \item The authors should reflect on the factors that influence the performance of the approach. For example, a facial recognition algorithm may perform poorly when image resolution is low or images are taken in low lighting. Or a speech-to-text system might not be used reliably to provide closed captions for online lectures because it fails to handle technical jargon.
        \item The authors should discuss the computational efficiency of the proposed algorithms and how they scale with dataset size.
        \item If applicable, the authors should discuss possible limitations of their approach to address problems of privacy and fairness.
        \item While the authors might fear that complete honesty about limitations might be used by reviewers as grounds for rejection, a worse outcome might be that reviewers discover limitations that aren't acknowledged in the paper. The authors should use their best judgment and recognize that individual actions in favor of transparency play an important role in developing norms that preserve the integrity of the community. Reviewers will be specifically instructed to not penalize honesty concerning limitations.
    \end{itemize}

\item {\bf Theory assumptions and proofs}
    \item[] Question: For each theoretical result, does the paper provide the full set of assumptions and a complete (and correct) proof?
    \item[] Answer: \answerYes{} 
    \item[] Justification: The main text states the assumptions used in the AWD gradient approximation, covariance decomposition, and exponent bounds. Detailed derivations and proofs are provided in Appendix~\ref{app:derivations} and the subsequent proof sections.
    \item[] Guidelines:
    \begin{itemize}
        \item The answer \answerNA{} means that the paper does not include theoretical results. 
        \item All the theorems, formulas, and proofs in the paper should be numbered and cross-referenced.
        \item All assumptions should be clearly stated or referenced in the statement of any theorems.
        \item The proofs can either appear in the main paper or the supplemental material, but if they appear in the supplemental material, the authors are encouraged to provide a short proof sketch to provide intuition. 
        \item Inversely, any informal proof provided in the core of the paper should be complemented by formal proofs provided in appendix or supplemental material.
        \item Theorems and Lemmas that the proof relies upon should be properly referenced. 
    \end{itemize}

    \item {\bf Experimental result reproducibility}
    \item[] Question: Does the paper fully disclose all the information needed to reproduce the main experimental results of the paper to the extent that it affects the main claims and/or conclusions of the paper (regardless of whether the code and data are provided or not)?
    \item[] Answer: \answerYes{} 
    \item[] Justification: Appendix~\ref{sec:expset} reports the datasets, architectures, analyzed parameter subspaces, optimizer, batch sizes, learning rates, epochs, random seeds, and eigenvalue-selection protocol needed to reproduce the main empirical results.
    \item[] Guidelines:
    \begin{itemize}
        \item The answer \answerNA{} means that the paper does not include experiments.
        \item If the paper includes experiments, a \answerNo{} answer to this question will not be perceived well by the reviewers: Making the paper reproducible is important, regardless of whether the code and data are provided or not.
        \item If the contribution is a dataset and\slash or model, the authors should describe the steps taken to make their results reproducible or verifiable. 
        \item Depending on the contribution, reproducibility can be accomplished in various ways. For example, if the contribution is a novel architecture, describing the architecture fully might suffice, or if the contribution is a specific model and empirical evaluation, it may be necessary to either make it possible for others to replicate the model with the same dataset, or provide access to the model. In general. releasing code and data is often one good way to accomplish this, but reproducibility can also be provided via detailed instructions for how to replicate the results, access to a hosted model (e.g., in the case of a large language model), releasing of a model checkpoint, or other means that are appropriate to the research performed.
        \item While NeurIPS does not require releasing code, the conference does require all submissions to provide some reasonable avenue for reproducibility, which may depend on the nature of the contribution. For example
        \begin{enumerate}
            \item If the contribution is primarily a new algorithm, the paper should make it clear how to reproduce that algorithm.
            \item If the contribution is primarily a new model architecture, the paper should describe the architecture clearly and fully.
            \item If the contribution is a new model (e.g., a large language model), then there should either be a way to access this model for reproducing the results or a way to reproduce the model (e.g., with an open-source dataset or instructions for how to construct the dataset).
            \item We recognize that reproducibility may be tricky in some cases, in which case authors are welcome to describe the particular way they provide for reproducibility. In the case of closed-source models, it may be that access to the model is limited in some way (e.g., to registered users), but it should be possible for other researchers to have some path to reproducing or verifying the results.
        \end{enumerate}
    \end{itemize}

\item {\bf Open access to data and code}
    \item[] Question: Does the paper provide open access to the data and code, with sufficient instructions to faithfully reproduce the main experimental results, as described in supplemental material?
    \item[] Answer: \answerYes{} 
    \item[] Justification: We provide an anonymized code/supplemental package with instructions for reproducing the main experiments. The experiments use public benchmark datasets, MNIST and CIFAR-10, cited in the paper.
    \item[] Guidelines:
    \begin{itemize}
        \item The answer \answerNA{} means that paper does not include experiments requiring code.
        \item Please see the NeurIPS code and data submission guidelines (\url{https://neurips.cc/public/guides/CodeSubmissionPolicy}) for more details.
        \item While we encourage the release of code and data, we understand that this might not be possible, so \answerNo{} is an acceptable answer. Papers cannot be rejected simply for not including code, unless this is central to the contribution (e.g., for a new open-source benchmark).
        \item The instructions should contain the exact command and environment needed to run to reproduce the results. See the NeurIPS code and data submission guidelines (\url{https://neurips.cc/public/guides/CodeSubmissionPolicy}) for more details.
        \item The authors should provide instructions on data access and preparation, including how to access the raw data, preprocessed data, intermediate data, and generated data, etc.
        \item The authors should provide scripts to reproduce all experimental results for the new proposed method and baselines. If only a subset of experiments are reproducible, they should state which ones are omitted from the script and why.
        \item At submission time, to preserve anonymity, the authors should release anonymized versions (if applicable).
        \item Providing as much information as possible in supplemental material (appended to the paper) is recommended, but including URLs to data and code is permitted.
    \end{itemize}

\item {\bf Experimental setting/details}
    \item[] Question: Does the paper specify all the training and test details (e.g., data splits, hyperparameters, how they were chosen, type of optimizer) necessary to understand the results?
    \item[] Answer: \answerYes{} 
    \item[] Justification: Appendix~\ref{sec:expset} gives the architecture, dataset, optimizer, learning-rate, batch-size, epoch, and analyzed-layer details. Table~\ref{tab:hyperparams} further lists hyperparameters for the experiments summarized in Table~\ref{tab:gamma_comparison}.
    \item[] Guidelines:
    \begin{itemize}
        \item The answer \answerNA{} means that the paper does not include experiments.
        \item The experimental setting should be presented in the core of the paper to a level of detail that is necessary to appreciate the results and make sense of them.
        \item The full details can be provided either with the code, in appendix, or as supplemental material.
    \end{itemize}

\item {\bf Experiment statistical significance}
    \item[] Question: Does the paper report error bars suitably and correctly defined or other appropriate information about the statistical significance of the experiments?
    \item[] Answer: \answerYes{} 
    \item[] Justification: Table~\ref{tab:gamma_comparison} reports mean $\pm$ standard deviation over four independent runs. Additional diagnostic figures support the conclusions across datasets, architectures, losses, and training settings.
    \item[] Guidelines:
    \begin{itemize}
        \item The answer \answerNA{} means that the paper does not include experiments.
        \item The authors should answer \answerYes{} if the results are accompanied by error bars, confidence intervals, or statistical significance tests, at least for the experiments that support the main claims of the paper.
        \item The factors of variability that the error bars are capturing should be clearly stated (for example, train/test split, initialization, random drawing of some parameter, or overall run with given experimental conditions).
        \item The method for calculating the error bars should be explained (closed form formula, call to a library function, bootstrap, etc.)
        \item The assumptions made should be given (e.g., Normally distributed errors).
        \item It should be clear whether the error bar is the standard deviation or the standard error of the mean.
        \item It is OK to report 1-sigma error bars, but one should state it. The authors should preferably report a 2-sigma error bar than state that they have a 96\% CI, if the hypothesis of Normality of errors is not verified.
        \item For asymmetric distributions, the authors should be careful not to show in tables or figures symmetric error bars that would yield results that are out of range (e.g., negative error rates).
        \item If error bars are reported in tables or plots, the authors should explain in the text how they were calculated and reference the corresponding figures or tables in the text.
    \end{itemize}

\item {\bf Experiments compute resources}
    \item[] Question: For each experiment, does the paper provide sufficient information on the computer resources (type of compute workers, memory, time of execution) needed to reproduce the experiments?
    \item[] Answer: \answerNo{} 
    \item[] Justification: The experiments are small-scale diagnostic experiments on standard benchmark datasets and do not require specialized large-scale compute. We do not provide a separate compute-resource accounting.
    \item[] Guidelines:
    \begin{itemize}
        \item The answer \answerNA{} means that the paper does not include experiments.
        \item The paper should indicate the type of compute workers CPU or GPU, internal cluster, or cloud provider, including relevant memory and storage.
        \item The paper should provide the amount of compute required for each of the individual experimental runs as well as estimate the total compute. 
        \item The paper should disclose whether the full research project required more compute than the experiments reported in the paper (e.g., preliminary or failed experiments that didn't make it into the paper). 
    \end{itemize}
    
\item {\bf Code of ethics}
    \item[] Question: Does the research conducted in the paper conform, in every respect, with the NeurIPS Code of Ethics \url{https://neurips.cc/public/EthicsGuidelines}?
    \item[] Answer: \answerYes{} 
    \item[] Justification: The research uses public benchmark datasets and studies theoretical and empirical properties of SGD noise. It does not involve human subjects, private data, surveillance, or high-risk deployment.
    \item[] Guidelines:
    \begin{itemize}
        \item The answer \answerNA{} means that the authors have not reviewed the NeurIPS Code of Ethics.
        \item If the authors answer \answerNo, they should explain the special circumstances that require a deviation from the Code of Ethics.
        \item The authors should make sure to preserve anonymity (e.g., if there is a special consideration due to laws or regulations in their jurisdiction).
    \end{itemize}

\item {\bf Broader impacts}
    \item[] Question: Does the paper discuss both potential positive societal impacts and negative societal impacts of the work performed?
    \item[] Answer: \answerNA{} 
    \item[] Justification: This is foundational research on optimization and generalization in neural networks, with no direct deployed system, application-specific model, or targeted societal use case. We do not identify a direct path to specific societal harm beyond the general downstream uses of machine-learning research.
    \item[] Guidelines:
    \begin{itemize}
        \item The answer \answerNA{} means that there is no societal impact of the work performed.
        \item If the authors answer \answerNA{} or \answerNo, they should explain why their work has no societal impact or why the paper does not address societal impact.
        \item Examples of negative societal impacts include potential malicious or unintended uses (e.g., disinformation, generating fake profiles, surveillance), fairness considerations (e.g., deployment of technologies that could make decisions that unfairly impact specific groups), privacy considerations, and security considerations.
        \item The conference expects that many papers will be foundational research and not tied to particular applications, let alone deployments. However, if there is a direct path to any negative applications, the authors should point it out. For example, it is legitimate to point out that an improvement in the quality of generative models could be used to generate Deepfakes for disinformation. On the other hand, it is not needed to point out that a generic algorithm for optimizing neural networks could enable people to train models that generate Deepfakes faster.
        \item The authors should consider possible harms that could arise when the technology is being used as intended and functioning correctly, harms that could arise when the technology is being used as intended but gives incorrect results, and harms following from (intentional or unintentional) misuse of the technology.
        \item If there are negative societal impacts, the authors could also discuss possible mitigation strategies (e.g., gated release of models, providing defenses in addition to attacks, mechanisms for monitoring misuse, mechanisms to monitor how a system learns from feedback over time, improving the efficiency and accessibility of ML).
    \end{itemize}
    
\item {\bf Safeguards}
    \item[] Question: Does the paper describe safeguards that have been put in place for responsible release of data or models that have a high risk for misuse (e.g., pre-trained language models, image generators, or scraped datasets)?
    \item[] Answer: \answerNA{} 
    \item[] Justification: The paper does not release a high-risk pretrained model, scraped dataset, generative model, or other asset requiring misuse safeguards.
    \item[] Guidelines:
    \begin{itemize}
        \item The answer \answerNA{} means that the paper poses no such risks.
        \item Released models that have a high risk for misuse or dual-use should be released with necessary safeguards to allow for controlled use of the model, for example by requiring that users adhere to usage guidelines or restrictions to access the model or implementing safety filters. 
        \item Datasets that have been scraped from the Internet could pose safety risks. The authors should describe how they avoided releasing unsafe images.
        \item We recognize that providing effective safeguards is challenging, and many papers do not require this, but we encourage authors to take this into account and make a best faith effort.
    \end{itemize}

\item {\bf Licenses for existing assets}
    \item[] Question: Are the creators or original owners of assets (e.g., code, data, models), used in the paper, properly credited and are the license and terms of use explicitly mentioned and properly respected?
    \item[] Answer: \answerYes{} 
    \item[] Justification: The paper cites the original MNIST and CIFAR-10 dataset sources and uses them as public benchmark datasets. We do not redistribute restricted third-party assets.
    \item[] Guidelines:
    \begin{itemize}
        \item The answer \answerNA{} means that the paper does not use existing assets.
        \item The authors should cite the original paper that produced the code package or dataset.
        \item The authors should state which version of the asset is used and, if possible, include a URL.
        \item The name of the license (e.g., CC-BY 4.0) should be included for each asset.
        \item For scraped data from a particular source (e.g., website), the copyright and terms of service of that source should be provided.
        \item If assets are released, the license, copyright information, and terms of use in the package should be provided. For popular datasets, \url{paperswithcode.com/datasets} has curated licenses for some datasets. Their licensing guide can help determine the license of a dataset.
        \item For existing datasets that are re-packaged, both the original license and the license of the derived asset (if it has changed) should be provided.
        \item If this information is not available online, the authors are encouraged to reach out to the asset's creators.
    \end{itemize}

\item {\bf New assets}
    \item[] Question: Are new assets introduced in the paper well documented and is the documentation provided alongside the assets?
    \item[] Answer: \answerYes{} 
    \item[] Justification: The anonymized code/supplemental materials are provided for reproducibility and documented with the experimental settings described in Appendix~\ref{sec:expset}. The paper does not introduce a new dataset or benchmark model.
    \item[] Guidelines:
    \begin{itemize}
        \item The answer \answerNA{} means that the paper does not release new assets.
        \item Researchers should communicate the details of the dataset\slash code\slash model as part of their submissions via structured templates. This includes details about training, license, limitations, etc. 
        \item The paper should discuss whether and how consent was obtained from people whose asset is used.
        \item At submission time, remember to anonymize your assets (if applicable). You can either create an anonymized URL or include an anonymized zip file.
    \end{itemize}

\item {\bf Crowdsourcing and research with human subjects}
    \item[] Question: For crowdsourcing experiments and research with human subjects, does the paper include the full text of instructions given to participants and screenshots, if applicable, as well as details about compensation (if any)? 
    \item[] Answer: \answerNA{} 
    \item[] Justification: The paper does not involve crowdsourcing experiments or research with human subjects.
    \item[] Guidelines:
    \begin{itemize}
        \item The answer \answerNA{} means that the paper does not involve crowdsourcing nor research with human subjects.
        \item Including this information in the supplemental material is fine, but if the main contribution of the paper involves human subjects, then as much detail as possible should be included in the main paper. 
        \item According to the NeurIPS Code of Ethics, workers involved in data collection, curation, or other labor should be paid at least the minimum wage in the country of the data collector. 
    \end{itemize}

\item {\bf Institutional review board (IRB) approvals or equivalent for research with human subjects}
    \item[] Question: Does the paper describe potential risks incurred by study participants, whether such risks were disclosed to the subjects, and whether Institutional Review Board (IRB) approvals (or an equivalent approval/review based on the requirements of your country or institution) were obtained?
    \item[] Answer: \answerNA{} 
    \item[] Justification: The paper does not involve crowdsourcing or human-subject research, so IRB approval or equivalent review is not applicable.
    \item[] Guidelines:
    \begin{itemize}
        \item The answer \answerNA{} means that the paper does not involve crowdsourcing nor research with human subjects.
        \item Depending on the country in which research is conducted, IRB approval (or equivalent) may be required for any human subjects research. If you obtained IRB approval, you should clearly state this in the paper. 
        \item We recognize that the procedures for this may vary significantly between institutions and locations, and we expect authors to adhere to the NeurIPS Code of Ethics and the guidelines for their institution. 
        \item For initial submissions, do not include any information that would break anonymity (if applicable), such as the institution conducting the review.
    \end{itemize}

\item {\bf Declaration of LLM usage}
    \item[] Question: Does the paper describe the usage of LLMs if it is an important, original, or non-standard component of the core methods in this research? Note that if the LLM is used only for writing, editing, or formatting purposes and does \emph{not} impact the core methodology, scientific rigor, or originality of the research, declaration is not required.
    \item[] Answer: \answerNA{} 
    \item[] Justification: LLMs were used only for writing, editing, or formatting assistance. They were not used as a core methodological, theoretical, or experimental component of the research.
    \item[] Guidelines:
    \begin{itemize}
        \item The answer \answerNA{} means that the core method development in this research does not involve LLMs as any important, original, or non-standard components.
        \item Please refer to our LLM policy in the NeurIPS handbook for what should or should not be described.
    \end{itemize}

\end{enumerate}

\end{document}